\newtheorem{theorem}{Theorem}[section]
\newtheorem{lemma}[theorem]{Lemma}
\newtheorem*{remark}{Remarks}
\newtheorem{assumption}{Assumption}
\newtheorem{remarks}{Remark}
\newcommand\R{\mathbb{R}}
\newcommand{\vertiii}[1]{{\left\vert\kern-0.25ex\left\vert\kern-0.25ex\left\vert #1 
		\right\vert\kern-0.25ex\right\vert\kern-0.25ex\right\vert}}
\newcommand{\abs}[2][a]{
	\IfEqCase{#1}{%
		{a}{\left\vert#2\right\rvert}%
		{0}{\vert#2\rvert}%
		{1}{\big\vert#2\big\rvert}%
		{2}{\Big\vert#2\Big\rvert}%
		{3}{\bigg\vert#2\bigg\rvert}%
		{4}{\Bigg\vert#2\Bigg\rvert}%
	}[\PackageError{abs}{Undefined option to abs: #1}{}]%
}
\newenvironment{protocol}[1][htb]{%
	\renewcommand{\ALG@name}{Protocol}
	\begin{algorithm}[#1]%
	}{\end{algorithm}}
\newcommand{\intr}[1]{\llbracket #1 \rrbracket}
\def\argmin{\mathop{\rm Arg\,Min}}
\def\argmax{\mathop{\rm Arg\,Max}}
\newcommand{\norm}[1]{\left\lVert#1\right\rVert}
\newcommand{\Var}{\mathrm{Var}}
\title{Covariance Adaptive Best Arm Identification}
\date{}
\author[1]{El Mehdi Saad}
\author[2]{Gilles Blanchard}
\author[3]{Nicolas Verzelen}
\affil[1]{Laboratoire des Signaux et Systèmes, CentraleSupéléc, Université Paris-Saclay, France.}
\affil[3]{INRAE, Mistea, Institut Agro, Univ Montpellier, Montpellier, France.}
\affil[2]{Laboratoire de mathématiques d'Orsay, Université Paris-Saclay, France.}
\begin{document}
	\maketitle

\begin{abstract} 
		
		We consider the problem of best arm identification in the multi-armed bandit model, under fixed confidence. Given a confidence input $\delta$, the goal is to identify the arm with the highest mean reward with a probability of at least $1 - \delta$, while minimizing the number of arm pulls. While the literature provides solutions to this problem under the assumption of independent arms distributions, we propose a more flexible scenario where arms can be dependent and rewards can be sampled simultaneously. This framework allows the learner to estimate the covariance among the arms distributions, enabling a more efficient identification of the best arm. The relaxed setting we propose is relevant in various applications, such as clinical trials, where similarities between patients or drugs suggest underlying correlations in the outcomes. We introduce new algorithms that adapt to the unknown covariance of the arms and demonstrate through theoretical guarantees that substantial improvement can be achieved over the standard setting. Additionally, we provide new lower bounds for the relaxed setting and present numerical simulations that support their theoretical findings.
	\end{abstract}
	
	\section{Introduction and setting}
	Best arm identification (BAI) is a sequential learning and decision problem that refers to finding the arm with the largest mean (average reward) among a finite number of arms in a stochastic multi-armed bandit (MAB) setting. An MAB model $\nu$ is a set of $K$ distributions in $\mathbb{R}$: $\nu_1, \dots, \nu_K$, with means $\mu_1, \dots, \mu_K$. An "arm" is identified with the corresponding distribution index. The observation consists in sequential draws ("queries" or "arm pulls") from these distributions, and each such outcome is a "reward".
	The learner's goal is to identify the optimal arm $i^* := \text{ArgMax}_{i \in \intr{K}} \mu_i$, efficiently. There are two main variants of BAI problems: The \textit{fixed budget setting} \cite{audibert2010best, carpentier2016tight}, where given a fixed number of queries $T$, the learner allocates queries to candidates arms and provides a guess for the optimal arm. The theoretical guarantee in this case takes the form of an upper bound on the probability $p_T$ of selecting a sub-optimal arm.  The second variant is the \textit{fixed confidence} setting \cite{garivier2016optimal, kaufmann2016complexity}, where a confidence parameter $\delta \in (0,1)$ is given as an input to the learner, and the objective is to output an arm $\psi \in \intr{K}$, such that $\mathbb{P}(\psi = i^*) \ge 1-\delta$, using the least number of arm pulls .
	The complexity in this case corresponds to the total number of queries made before the algorithm stops and gives a guess for the best arm that is valid with probability at least $1-\delta$ according to a specified stopping rule. In this paper we specifically focus on the fixed confidence setting.

	The problem of best-arm identification with fixed confidence was extensively studied and is well understood in the literature \cite{jamieson2014lil, kalyanakrishnan2012pac, kaufmann2016complexity, garivier2016optimal}. However, in these previous works, the problem was considered under the assumption that all observed rewards are independent. More precisely, in each round $t$, a fresh sample (reward vector), independent of the past, $(X_{1,t}, \dots, X_{K,t})$ is secretly drawn from $\nu$ by the environment, and the learner is only allowed to choose one arm $A_t$ out of $K$ (and observe its reward $X_{A_t,t}$). We relax this setting by allowing simultaneous queries. Specifically, we consider the MAB model $\nu$ as a \textit{joint probability distribution} of the $K$ arms, and in each round $t$ the learner chooses a subset $C_t \subset \intr{K}$ and observes the rewards $(X_{i,t})_{i \in C_t}$ (see the Game Protocol~\ref{algo:gp}). The high-level idea of our work is that allowing multiple queries per round opens up opportunities to estimate and leverage the underlying structure of the arms distribution, which would otherwise remain inaccessible with one-point feedback. This includes estimating the covariance between rewards at the same time point. It is important to note that our proposed algorithms do not require any prior knowledge of the covariance between arms.

	Throughout this paper, we consider two cases: bounded rewards (in Section~\ref{sec:Bounded}) and Gaussian rewards (in Section~\ref{sec:Gauss}), with the following assumptions:
	\begin{assumption}{1}{ }\label{a:0}
		Suppose that:
		\begin{itemize}[noitemsep]
			\item IID assumption with respect to $t$: $(X_{t})_{t \ge 1} = (X_{1,t},\dots, X_{K,t})_{t\ge1}$ are independent and identically distributed variables following $\nu$.
			\item There is only one optimal arm: $\lvert \argmax_{i \in \intr{K}} \mu_i \rvert = 1$. 
		\end{itemize}
	\end{assumption}
	
	\begin{assumption}{2}{-B}\label{a:bounded}
		Bounded rewards: The support of $\nu$ is in $\left[0,1\right]^K$.
	\end{assumption}
	\begin{assumption}{2}{-G}\label{a:gaussian}
		Gaussian rewards: $\nu$ is a multivariate normal distribution.
	\end{assumption}

	\begin{protocol}
		\caption{ The Game Protocol}
		\begin{algorithmic}\label{algo:gp}
			\STATE \textbf{Parameters}:  $\delta$.
			\WHILE{\textbf{[condition]}}
			\STATE Choose a subset $C \subseteq \intr{K}$.
			\STATE The environment reveals the rewards $\left(X_i\right)_{i\in  C}$.
			\ENDWHILE
			\STATE Output the selected arm: $\psi$.
		\end{algorithmic}
	\end{protocol}
	
	A round corresponds to an iteration in Protocol~\ref{algo:gp}. Denote by $i^* \in \intr{K}$ the optimal arm. The learner algorithm consists of: a sequence of queried subsets $(C_t)_t$ of $\intr{K}$, such that subset $C_t$ at round $t$ is chosen based on past observations, a halting condition to stop sampling (i.e. a stopping time written $\tau$) and an arm $\psi$ to output after stopping the sampling procedure. 
	The theoretical guarantees take the form of a high probability upper-bound on the total number $N$ of queries made through the game:
	\begin{equation}\label{eq:budget}
	N := \sum_{t=1}^{\tau} \lvert C_t \rvert.
	\end{equation}
	
	\vspace{-0.4cm}

	\section{Motivation and main contributions}\label{sec:intro}
	\vspace{-0.2cm} The query complexity of best arm identification with independent rewards, when arms distributions are $\sigma$-sub-Gaussian is characterized by the following quantity:
	\begin{equation}\label{eq:sample_complexity}
	H(\nu) := \sum_{i \in \intr{K}\setminus \{i^*\}} \frac{\sigma^2}{(\mu_{i^*} - \mu_{i})^2}.
	\end{equation}
	
	\vspace{-0.3cm}
	Some $\delta$-PAC algorithms \cite{jamieson2014lil} guarantee a total number of queries satisfying $\tau = \tilde{\mathcal{O}}(H(\nu)\log(1/\delta))$ (where $\tilde{\mathcal{O}}(.)$ hides a logarithmic factor in the problem parameters).

	Recall that the number of queries required for comparing the means of two variables, namely $X_1\sim \mathcal{N}(\mu_1, 1)$ and $X_2 \sim \mathcal{N}(\mu_2, 1)$, within a tolerance error $\delta$, is approximately $\mathcal{O}\left(\log(1/\delta)/(\mu_1 - \mu_2)^2\right)$. An alternative way to understand the sample complexity stated in equation \eqref{eq:sample_complexity} is to view the task of identifying the optimal arm $i^*$ as synonymous with determining that the remaining arms $\intr{K}\setminus \{i^*\}$ are suboptimal. Consequently, the cost of identifying the best arm corresponds to the sum of the comparison costs between each suboptimal arm and the optimal one.
	
	In practical settings, the arms distributions are not independent.  This lack of independence can arise in various scenarios, such as clinical trials, where the effects of drugs on patients with similar traits or comparing drugs with similar components may exhibit underlying correlations.
	These correlations provide additional information that can potentially expedite the decision-making process \cite{kahan2012improper}. 
	
	In such cases, Protocol~\ref{algo:gp} allows the player to estimate the means and the covariances of arms. This additional information naturally raises the following question: 
	\begin{center}
		\textit{Can we accelerate best arm identification by leveraging \\the (unknown) covariance between the arms?}
	\end{center}
	We give two arguments for a positive answer to the last question: When allowed simultaneous queries (more than one arm per round as in Protocol~\ref{algo:gp}), the learner can adapt to the covariance between variables. To illustrate, consider the following toy example for $2$ arms comparison: $X_1 \sim \mathcal{N}(\mu_1, 1)$ and $X_2 = X_1+Y$ where $Y \sim \epsilon\mathcal{N}(1, 1)$ for some $\epsilon >0$. BAI algorithms in one query per round framework require $\mathcal{O}((1+\epsilon)^2\log(1/\delta)/\epsilon^2)$ queries, which gives $\mathcal{O}(\log(1/\delta)/\epsilon^2)$ for small $\epsilon$. In contrast, when two queries per round are possible, the learner can perform the following test $\mathcal{H}_0: ``\mathbb{E}[X_1-X_2] >0"$ against $\mathcal{H}_1: ``\mathbb{E}[X_1-X_2] \le 0"$. Therefore using standard test algorithms adaptive to unknown variances, such as Student's $t$-test, leads to a number of queries of the order $\mathcal{O}(\Var(X_1-X_2)\log(1/\delta)/\epsilon^2) = \mathcal{O}(\log(1/\delta))$. Hence a substantial improvement in the sample complexity can be achieved when leveraging the covariance. We can also go one step further to even reduce the sample complexity in some settings. Indeed, we can establish the sub-optimality of an arm $i$ by comparing it with another sub-optimal arm $j$ faster than when comparing it to the optimal arm $i^*$. To illustrate consider the following toy example with $K=3$: let $X_1 \sim \mathcal{N}(\mu_1, 1)$, $X_2 = X_1 + Y$ where $Y\sim \epsilon \mathcal{N}(1,1)$ and $X_3 \sim \mathcal{N}(\mu_1+2\epsilon, 1)$, with $X_1,Y$ and $X_3$ independent. $X_3$ is clearly the optimal arm. Eliminating $X_1$ with a comparison with $X_3$ requires $\mathcal{O}(\log(1/\delta)/\epsilon^2)$ queries, while comparing $X_1$ with the sub-optimal arm $X_2$ requires $\mathcal{O}(\Var(X_1-X_2)\log(1/\delta)/\epsilon^2) = \mathcal{O}(\log(1/\delta))$.

	The aforementioned arguments can be adapted to accommodate bounded variables, such as when comparing correlated Bernoulli variables. These arguments suggest that by utilizing multiple queries per round (as shown in Protocol~\ref{algo:gp}), we can expedite the identification of the best arm compared to the standard one-query-per-setting approach. Specifically, for variables bounded by $1$, our algorithm ensures that the cost of eliminating a sub-optimal arm $i \in \intr{K}$ is given by:
	\begin{equation}\label{eq:cost}
	\min_{\substack{j \in \intr{K}:\ \mu_j > \mu_i }} \left\lbrace \frac{\Var(X_j-X_i)}{(\mu_j - \mu_i)^2}+\frac{1}{\mu_j - \mu_i} \right\rbrace.
	\end{equation}
	
	The additional term $1/(\mu_j-\mu_i)$ arises due to the sub-exponential tail behavior of the sum of bounded variables. To compare the quantity in \eqref{eq:cost} with its counterpart in the independent case, it is important to note that the minimum is taken over a set that includes the best arm, and that $\Var(X_{i^*} - X_i) \le 2(\Var(X_{i^*}) + \Var(X_i))$. Consequently, when the variables are bounded, the quantity \eqref{eq:cost} is no larger than a numerical constant times $H(\nu)$ (its independent case counterpart in \ref{eq:sample_complexity}), with potentially a significant improvements if there is positive correlation between an arm $j \in \intr{K}$ with a higher mean and arm $i$ (so that $\Var(X_j-X_i)$ is small). As a result, our algorithm for bounded variables has a sample complexity, up to a logarithmic factor, that corresponds to the sum of quantities \eqref{eq:cost} over all sub-optimal arms.

	We expand our analysis to encompass the scenario where arms follow a Gaussian distribution. In this context, our procedure ensures that the cost of eliminating a sub-optimal arm $i\in \intr{K}$ is given by:
	\[
	\min_{\substack{j \in \intr{K}:\\ \mu_j > \mu_i }}  \left\lbrace \frac{\Var(X_j-X_i)}{(\mu_j - \mu_i)^2}\vee 1 \right\rbrace.
	\] 
	Similar to the setting with bounded variables, the aforementioned quantity is always smaller than its counterpart for independent arms when all variables $X_j$ have a unit variance.
	
	In Section~\ref{sec:lower}, we present two lower bound results for the sample complexity of best arm identification in our multiple query setting, specifically when the arms are correlated. Notably, these lower bounds pinpoint how the optimal sample complexity may decrease with the covariance between arms, making them the first of their kind in the context of best arm identification, to the best of our knowledge. The presented lower bounds are sharp, up to a logarithmic factor, in the case where all arms are positively correlated with the optimal arm. However, it remains an open question to determine a sharper lower bound applicable to a more general class of distributions with arbitrary covariance matrices.
	
	In Section~\ref{sec:convex} of the appendix, we introduce a new algorithm that differs from the previous ones by performing comparisons between the candidate arm and a convex combination of the remaining arms, rather than pairwise comparisons. We provide theoretical guarantees for the resulting algorithm. 
	
	Lastly, we conduct numerical experiments using synthetic data to assess the practical relevance of our approach.

	\vspace{-0.3cm}
	\section{Related work}
	\vspace{-0.2cm}
	
	\paragraph{Best arm identification:}
	
	
	BAI in the fixed confidence setting was studied by \cite{even2002pac}, \cite{mannor2004sample}, and \cite{even2006action}, where the objective is to find $\epsilon$-optimal arms under the PAC (``probably approximately correct") model.    
	A summary of various optimal bounds for this problem is presented in \cite{carpentier2016tight, kaufmann2016complexity}.
	Prior works on fixed confidence BAI \citep{mnih2008empirical} and \citep{jourdan2023dealing} developed strategies adaptive to the unknown variances of the arms. In contrast, our proposed algorithms demonstrate adaptability to all entries of the covariance matrix of the arms. In particular, we establish that in the worst-case scenario, where the arms are independent, our guarantees align with the guarantees provided by these previous approaches.
	\vspace{-0.2cm}
	\paragraph{Covariance in the Multi-Armed Bandits model:}

	Recently, the concept of leveraging arm dependencies in the multi-armed bandit (MAB) model for best-arm identification was explored in \cite{gupta2021best}. However, their framework heavily relies on prior knowledge, specifically upper bounds on the conditional expectation of rewards from unobserved arms given the chosen arm's reward.
	In a similar vein, a game protocol that allows simultaneous queries was examined in \cite{liu2014most}. However, their objective differs from ours as their focus is on identifying the most correlated arms, whereas our primary goal is to identify the arm with the highest mean reward.
	
	The extension of the standard multi-armed bandit setting to multiple-point bandit feedback was also considered in the stochastic combinatorial semi-bandit problem (\citealp{audibert2014regret}, \citealp{cesa2012combinatorial}, \citealp{chen2013combinatorial} and \citealp{gai2012combinatorial}). At each round $t \ge 1$, the learner pulls $m$ out of $K$ arms and receives the sum of the pulled arms rewards. The objective is to minimize the cumulative regret with respect to the best choice of arms subset. \citep{perrault2020covariance} proposed an algorithm that adapts to the covariance of the covariates within the same arm.
	While this line of research shares the intuition of exploiting the covariance structure with our paper, there are essential differences between the two settings. In the combinatorial semi-bandit problem, the learner receives the sum of rewards from all selected arms in each round and aims to minimize cumulative regret, necessitating careful exploration during the game. In contrast, our approach does not impose any constraint on the number of queried arms per round, and our focus is purely on exploration.   

    Simultaneous queries of multiple arms was also considered in the context of graph-based bandit problems \cite{caron2012leveraging}. However, in these studies, it is assumed that the distributions of arms are independent.
	\vspace{-0.2cm}
	\paragraph{Model selection racing:}
	
	Racing algorithms for model selection refers to the problem of selecting the best model out of a finite set efficiently. The main idea consists of early elimination of poorly performing models and concentrating the selection effort on good models. This idea was seemingly first exploited in \cite{maron1993hoeffding} through Hoeffding Racing. It consists of sequentially constructing a confidence interval for the generalization error of each (non-eliminated) model. Once two intervals become disjoint, the corresponding sub-optimal model is discarded. Later \cite{mnih2008empirical} presented an adaptive stopping algorithm using confidence regions derived with empirical Bernstein concentration inequality (\citealp{audibert2007tuning, maurer2009empirical}). The resulting algorithm is adaptive to the unknown marginal variances of the models. Similarly, \cite{balsubramani2016sequential} presented a procedure centered around sequential hypothesis testing to make decisions between two possibilities. In their setup, they assume independent samples and use Bernstein concentration inequality tailored for bounded variables to adapt to the variances of the two variables under consideration.    
	
	
	While the idea of exploiting the possible dependence between models was shown \cite{birattari2010f,moore1994efficient} to empirically outperform methods based on individual performance monitoring, there is an apparent lack of theoretical guarantees. This work aims to develop a control on the number of sufficient queries for reliable best arm identification, while being adaptive to the unknown correlation of the candidate arms.

	\vspace{-0.3cm}
	\section{Algorithm and main theorem for bounded variables}\label{sec:Bounded}
	\vspace{-0.2cm}
	
	In this section, we focus on the scenario where the arms are bounded by $1$. Let us establish the notation we will be using throughout. 
	For each arm $i \in \intr{K}$ associated with the reward variable $X_i$, we denote by $\mu_i$ the mean of $X_i$. Additionally, for any pair of arms $i, j \in \intr{K}$, we denote $V_{ij}$ as the variance of the difference between $X_i$ and $X_j$, i.e., $V_{ij} := \Var(X_i-X_j)$.  We remind the reader that the quantities $V_{ij}$ are unknown to the learner.
	
	Algorithm~\ref{algo:1} is developed based on the ideas introduced in Section~\ref{sec:intro}, which involves conducting sequential tests between every pair $(i,j)\in \intr{K}\times\intr{K}$ of arms. The key element to adapt to the covariance between arms is the utilization of the empirical Bernstein's inequality \cite{maurer2009empirical} for the sequence of differences $(X_{i,t}-X_{j,t})_{t}$ for $i,j \in \intr{K}$. To that end, we introduce the following quantity:
	\[
	\hat{\Delta}_{ij}(t,\delta) := \hat{\mu}_{j,t} - \hat{\mu}_{i,t}  - \frac{3}{2}\alpha(t,\delta)\sqrt{2\widehat{V}_{ij,t}}  - 9~\alpha^2(t,\delta),
	\]
	where, $\hat{\mu}_{i,t}$ represents the empirical mean of the samples obtained from arm $i$ up to round $t$, and $\widehat{V}_{ij,t}$ denotes the empirical variance associated with the difference variable $(X_i-X_j)$ up to round $t$. The term $\alpha(t,\delta)$ is defined as $\alpha(t,\delta) := \sqrt{\log(1/\delta_t)/(t-1)}$, and $\delta_t$ is given by $\delta/(2K^2t(t+1))$.

	By leveraging the empirical Bernstein's inequality (restated in Theorem~\ref{conc:bern}), we can establish that if $\hat{\Delta}_{ij}(t,\delta)>0$ at any time $t$, then with a probability of at least $1-\delta$, we have $\mu_i < \mu_j$. This observation indicates that arm $i$ is sub-optimal. Furthermore, when $\mu_j >\mu_i$ for $i,j \in \intr{K}$, a sufficient sample sizes ensuring that the quantity $\hat{\Delta}_{ij}(t,\delta)$ is positive is proportional to:
	\begin{equation}\label{eq:lambda_ij}
	\Lambda_{ij} := \left( \frac{V_{ij}}{(\mu_i - \mu_j)^2}+\frac{1}{\mu_j - \mu_i} \right) \log\left(\frac{K}{(\mu_j - \mu_i)\delta}\right).
	\end{equation}
	Moreover, we show in Lemma~\ref{lem:ord_1} in the appendix that the last quantity is necessary, up to a smaller numerical constant factor, in order to have $\hat{\Delta}_{ij}(t,\delta)>0$.

	%
	
	Algorithm~\ref{algo:1} follows a successive elimination approach based on the tests $\hat{\Delta}_{ij}(t,\delta)>0$. Our objective is to ensure that any sub-optimal arm $i$ is queried at most $\min_j \Lambda_{ij}$ times, where the minimum is taken over arms with means larger than $\mu_i$. However, this approach poses a challenge: the arm $j^*$ that achieves the minimum of $\Lambda_{ij}$ may be eliminated early in the process, and the algorithm is then constrained to compare arm $i$ with other arms costing larger complexity $\Lambda_{ij}$. To address this limitation, it is useful to continue querying arms even after deciding their sub-optimality through the pairwise tests.
	
	In Algorithm~\ref{algo:1}, we introduce two sets at round $t$: $S_t$, which contains arms that are candidates to be optimal, initialized as $S_1=\intr{K}$, and $C_t$, which represents the set of arms queried at round $t$. Naturally, $C_t$ contains $S_t$ and also includes arms that were freshly eliminated from $S_t$, as we hope that these arms will help in further eliminating candidate arms from $S_t$ more quickly.

	An important consideration is how long the algorithm should continue sampling an arm that has been eliminated from $S_t$. In Theorem~\ref{lem:elim_1}, we prove that when arm $j$ is eliminated at round $t$, it is sufficient to keep sampling it up to round $82t$ (the constant $82$ is discussed in Remark~\ref{rem:1}). The rationale behind this number of additional queries is explained in the sketch of the proof of Theorem~\ref{thm:2}. It suggests that if arm $j$ fails to eliminate another arm after round $82t$, then the arm that eliminated $j$ from $S_t$ can ensure faster eliminations for the remaining arms in $S_t$.

	\begin{algorithm} 
		\caption{Pairwise-BAI \label{algo:1}}
		\begin{algorithmic}[1]
			\STATE \textbf{Input}  $\delta$.
			\STATE \textbf{Initialization:}
			\STATE \hspace{1cm} Query all arms for $2$ rounds and compute the empirical means vector $\hat{\bm{\mu}}_t$, $t \gets 3$. 
			\STATE \hspace{1cm} $S \gets \intr{K}$, \qquad /*\texttt{Set of candidate arms}*/
			\STATE \hspace{1cm} $C \gets \intr{K}$, \qquad /*\texttt{Set of queried arms}*/
			\WHILE{ $\lvert S\rvert>1$ } 
			\STATE Jointly query all arms in $ C$.
			\STATE Update $\hat{\bm{\mu}}_t$ and compute $\max_{j \in C} \hat{\Delta}_{ij}( t, \delta) $ for each $i \in S$.
			\FOR{$i \in S$}
			\IF{$\max_{j \in C} \hat{\Delta}_{ij}( t, \delta) >0$}
			\STATE 	Eliminate $i$ from $S$: $S \gets S \setminus \{i\}$. \qquad/* \texttt{$i$ is  sub-optimal} */
			\STATE Mark $i$ for elimination from $C$ at round: $82~t$. \label{line:algo}
			\ENDIF
			\ENDFOR
			\STATE {\bf Increment} t.
			\ENDWHILE
			\STATE  {\bf Return} $S$. 
		\end{algorithmic}
	\end{algorithm}

	\begin{theorem}\label{thm:2}
		Suppose Assumption~\ref{a:0} and~\ref{a:bounded} holds. Consider Algorithm~\ref{algo:1}, with input $\delta \in (0,1)$. We have with probability at least $1-\delta$: the algorithm identifies the best arm and the total number of queries $N$ satisfies:
		\begin{equation}\label{eq:b1}
		N \le c \log(K\Lambda\delta^{-1}) \sum_{i \in \intr{K}\setminus \{i^*\}} ~ \min_{\substack{j \in \intr{K}:\\ \mu_j > \mu_i }}~  \left\lbrace\frac{V_{ij}}{(\mu_i - \mu_j)^2}+ \frac{1}{\mu_j - \mu_i} \right\rbrace,
		\end{equation}
		where $\Lambda = \max_{i \in \intr{K}} \min_{\substack{j \in \intr{K}:\\ \mu_j > \mu_i }}~  \left\lbrace\frac{V_{ij}}{(\mu_i - \mu_j)^2}+ \frac{1}{\mu_j - \mu_i} \right\rbrace$ and $c$ is a numerical constant.
		
		Moreover, if we omit line~\ref{line:algo} from Algorithm~\ref{algo:1}, that is we do not query non-candidate arms, we have with probability at least $1-\delta$: the algorithm identifies the best arm and the total number of queries $N$ satisfies:
		\begin{equation}\label{eq:b2}
		N \le c \log(K\Lambda\delta^{-1}) \sum_{i \in \intr{K}\setminus \{i^*\}} ~   \left\lbrace\frac{V_{ii^*}}{(\mu_{i^*} - \mu_i)^2}+ \frac{1}{\mu_{i^*} - \mu_i} \right\rbrace,
		\end{equation}
		where $\Lambda$ is defined above and $c$ is a numerical constant.
	\end{theorem}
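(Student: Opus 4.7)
The plan is to analyze Algorithm~\ref{algo:1} through three conceptual stages: a concentration event on which the pairwise test is both sound and sharp; a bound on how long each sub-optimal arm remains in $S$; and a final assembly giving the query budget.

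I would begin by applying the empirical Bernstein inequality (Theorem~\ref{conc:bern}) to the difference sequence $(X_{j,s}-X_{i,s})_{s \le t}$ for every ordered pair $(i,j) \in \intr{K}\times\intr{K}$, with confidence $\delta_t = \delta/(2K^2 t(t+1))$. Since $\sum_{t\ge 1} 1/(t(t+1)) = 1$, a union bound over pairs and times produces a favorable event $\mathcal{E}$ with $\mathbb{P}(\mathcal{E}) \ge 1-\delta$ on which the following two statements hold for every $(i,j,t)$: (i) if $\hat{\Delta}_{ij}(t,\delta) > 0$ then $\mu_j > \mu_i$; and (ii) if $\mu_j > \mu_i$ then $\hat{\Delta}_{ij}(t,\delta) > 0$ for every $t \ge c_1 \Lambda_{ij}$, with $\Lambda_{ij}$ as in \eqref{eq:lambda_ij} and $c_1$ an absolute constant. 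Statement (i) shows that $i^*$ is never removed from $S$ on $\mathcal{E}$, so Algorithm~\ref{algo:1} halts with $S = \{i^*\}$ and identifies the best arm. Since the non-logarithmic part of $\Lambda_{ij}$ is at least $1/(\mu_j - \mu_i)$, the logarithm $\log(K/((\mu_j - \mu_i)\delta))$ is bounded by $\log(K\Lambda/\delta)$ for the pairs $(i,j)$ actually appearing in the budget sum, matching the pre-factor in \eqref{eq:b1}--\eqref{eq:b2}.

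The simpler bound \eqref{eq:b2} follows immediately: without line~\ref{line:algo} the set $C_t$ coincides with $S_t$ at every round, so $i^*$ is always in $C_t$. Applying (ii) with $j = i^*$ shows that each sub-optimal arm $i$ is eliminated from $S$ no later than round $c_1 \Lambda_{ii^*}$, so $i$ contributes at most $c_1 \Lambda_{ii^*}$ queries; summation over $i \ne i^*$ yields \eqref{eq:b2}.

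The stronger bound \eqref{eq:b1} requires a chain argument. Fix $i \ne i^*$, let $j_0 \in \argmin_{j:\mu_j>\mu_i} \Lambda_{ij}$ and set $T := c_1 \Lambda_{ij_0}$; my goal is to show that $i$ exits $S$ by round $T$. If $j_0 \in C_T$, statement (ii) concludes. Otherwise, $j_0$ was removed from $S$ at some time $t_0$ with $82\, t_0 < T$, by some arm $j_1$ with $\mu_{j_1} > \mu_{j_0}$ (by (i)); Lemma~\ref{lem:ord_1}, which is the converse empirical Bernstein direction, then yields $\Lambda_{j_0 j_1} \le c_2 t_0 < c_2 T/82$. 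Iterating this construction produces a chain $j_0, j_1, \dots, j_L$ of strictly increasing means stopping at the first $j_L \in C_T$ (the chain has length at most $K-1$). Using the square-root triangle inequality $\sqrt{V_{i j_{k+1}}} \le \sqrt{V_{i j_k}} + \sqrt{V_{j_k j_{k+1}}}$, the telescoping identity $\sum_k (\mu_{j_{k+1}} - \mu_{j_k}) = \mu_{j_L} - \mu_{j_0} \le 1$, and the monotonicity $\mu_{j_L} - \mu_i \ge \mu_{j_0} - \mu_i$, I would derive an inequality of the form $\Lambda_{i j_L} \le c_3 \Lambda_{i j_0} + c_4 T/82$. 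Choosing $c_1, c_3, c_4$ so that $c_1(c_3 + c_4/82) \le 1$ ensures $c_1 \Lambda_{i j_L} \le T$, and (ii) applied to $(i, j_L)$ removes $i$ from $S$ by round $T$. Because any eliminated arm then stays in $C$ for at most $82$ times its elimination round, each sub-optimal $i$ contributes at most $82\, T$ queries to $N$, and summing over $i \ne i^*$ yields \eqref{eq:b1}.

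The main obstacle is the quantitative closure of the chain induction. Iterating the variance triangle inequality at the level of $V_{ij}$ with its factor $2$ would multiply $\Lambda_{i \cdot}$ at each hop and blow up geometrically in the chain length. The key is to work at the level of $\sqrt{V}$ and to exploit the telescoping identity $\sum_k(\mu_{j_{k+1}} - \mu_{j_k}) \le 1$ together with the common lower bound $\mu_{j_L} - \mu_i$ on the denominator, which collapses the chain sum into a single controllable term. The explicit constant $82$ in the algorithm is calibrated precisely so that the absolute constants in the resulting estimate close the induction uniformly in $K$ and in the chain length.
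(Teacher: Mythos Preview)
Your setup of the concentration event $\mathcal{E}$, the soundness direction (i), and the derivation of bound~\eqref{eq:b2} are essentially what the paper does. The issue is the chain argument for~\eqref{eq:b1}.

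The additive estimate you aim for, $\Lambda_{ij_L}\le c_3\Lambda_{ij_0}+c_4 T/82$, will not close as written. The constants $c_1$ (the ``sufficient'' Bernstein constant, here $25/2$) and $c_2$ (the ``necessary'' one, here $4$) are fixed by the concentration inequality; they are not free parameters you may tune. Working at the level of $\sqrt{V}$ with the triangle inequality and then squaring via $(a+b)^2\le 2a^2+2b^2$ produces $c_3=2$, and with $T=c_1\Lambda_{ij_0}$ the requirement $c_1\Lambda_{ij_L}\le T$ becomes $\Lambda_{ij_L}\le\Lambda_{ij_0}$, which $c_3=2$ cannot deliver. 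Inflating $T$ does not help either, because the bound on each $\Lambda_{j_kj_{k+1}}$ scales with $T/82$ and the constraint $2c_1c_2/82<1$ is violated ($2\cdot(25/2)\cdot 4 =100>82$). The calibration of $82$ is tight for a \emph{max}-type comparison, not an additive one.

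The paper sidesteps the chain entirely via two observations. First, the quantities $\Lambda_{ij}$ satisfy an ultrametric inequality (Lemma~\ref{lem:ultram}):
\[
\Lambda_{ik}\le\max\{\Lambda_{ij},\Lambda_{jk}\},
\]
proved exactly by your $\sqrt{V}$ triangle inequality combined with the mediant inequality $\tfrac{a+b}{c+d}\le\max\{a/c,b/d\}$ rather than squaring. Second, with this max-form in hand, no iteration is needed: define $\Upsilon_i=\argmin_j\Lambda_{ij}$, take $j$ to be the element of $\Upsilon_i$ with the \emph{largest} mean, and argue by contradiction. If $j\notin C_t$, let $k$ be the arm that eliminated $j$ at round $s$; the necessary/sufficient directions of Lemma~\ref{lem:ord_2} together with the factor $82$ give $\Lambda_{jk}\le\Lambda_{ij}$, whence the ultrametric yields $\Lambda_{ik}\le\Lambda_{ij}$, so $k\in\Upsilon_i$ with $\mu_k>\mu_j$, contradicting the choice of $j$. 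One step, no accumulation of constants, no dependence on chain length. Your telescoping idea is morally the same content as the ultrametric lemma, but packaged additively it loses the crucial factor.
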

	
	\paragraph{Summary of the proof for bound \eqref{eq:b1}:} 
	Let $i$ denote a sub-optimal arm and $\Upsilon_i := \argmin_{j} \Lambda_{ij}$. First, we show that at round $t$, if $i \in S_t$ then necessarily we have $\Upsilon_i \cap C_t \neq \emptyset$. We proceed by a contradiction argument: assume $\Upsilon_i \cap C_t = \emptyset$, let $j$ denote the element of $\Upsilon_i$ with the largest mean. Let $k$ denote the arm that eliminated $j$ from $S$ at a round $s <t$. Lemma~\ref{lem:ord_2} shows that, since $\hat{\Delta}_{jk}(s,\delta) >0$, we necessarily have $\log(1/\delta)\Lambda_{jk}/4 \lesssim s$. Moreover, $j$ was kept up to round $82 s$ and in this last round we had $\hat{\Delta}_{ij}(82s,\delta) \le 0$, which gives by Lemma~\ref{lem:ord_2}: $82s \lesssim (25/2)\log(1/\delta_{82s})\Lambda_{ij}$. Combining the two bounds on $s$, gives: $\Lambda_{jk}<\Lambda_{ij}$. Finally, we use the ultra-metric property satisfied by the quantities $\Lambda_{uv}$ (Lemma~\ref{lem:ultram}), stating that: $\Lambda_{ik} \le \max\{\Lambda_{ij}, \Lambda_{jk}\}$. Combining the last inequality with the latter, we get $\Lambda_{ik}\le\Lambda_{ij}$, which means that $k \in \Upsilon_i$. The contradiction arises from the fact that $j$ is the element of the largest mean in $\Upsilon_i$ and arm $k$ eliminated $j$ (hence $\mu_k> \mu_j$). We conclude that necessarily $\Upsilon_i \cap C_t \neq \emptyset$, therefore by Lemma~\ref{lem:ord_2}, at a round of the order of $\min_{j} \Lambda_{ij}$, we will necessarily have $\hat{\Delta}_{ij}(t,\delta)>0$ for some $j\in C_t$.     
	\begin{remarks}\label{rem:1}
		From a practical standpoint, to keep sampling an arm eliminated at $t$ for additional $81~t$ rounds is a conservative approach. The stated value of $81$ is determined by specifics of the proof, but we believe that it can be optimized to a smaller constant based on the insights gained from numerical simulations. Furthermore, even if we omit the oversampling step,
		as presented in Theorem~\ref{thm:2}, the algorithm
		is still guaranteed to identify the best arm with probability
		$1-\delta$. Only the query complexity guarantee is weaker,  
		but the algorithm may still 
		lead to effective arm elimination, 
		although with potentially slightly slower convergence.
	\end{remarks}
	\begin{remarks}\label{rem:2}
		The idea of successive elimination based on evaluating the differences between variables was previously introduced in \cite{saad2021fast}, in the context of model selection aggregation. Their analysis allows having a bound slightly looser than bound \eqref{eq:b2} (with the distances between variables instead of variances). On the other hand, Theorem~\ref{thm:2} in our paper provides a sharper bound in \eqref{eq:b1}.
	\end{remarks}
	First, it is important to note that bound \eqref{eq:b1} is always sharper (up to a numerical constant factor) than bound \eqref{eq:b2} because the optimal arm is included in the set over which the minimum in bound \eqref{eq:b1} is taken. On the other hand, bound \eqref{eq:b1} can be smaller than bound \eqref{eq:b2} by a factor of $1/K$. This situation can arise in scenarios where the $K-1$ sub-optimal arms, which have close means, are highly correlated, while the optimal arm is independent of the rest. In such a situation, the terms in the sum in \eqref{eq:b1} corresponding to the correlated sub-optimal arms are relatively small, except for the arm (denoted as $i$) with the largest mean among the $K-1$ correlated suboptimal rewards. That last remaining arm will be eliminated by the optimal arm, incurring a potentially large cost, but for that arm only. 
	On the other hand, each term in the second bound \eqref{eq:b2} would be of the order of the last cost.

	To provide perspective on our guarantees compared to those developed in the independent one query per round setting, it is worth noting that the standard guarantees in that setting provide a sample complexity corresponding to $\sum_{i \neq i^*}\log(1/\delta)/ (\mu_{i^*}-\mu_i)^2$. Since variables are bounded by $1$, their variances are bounded by $1$. Therefore, in the worst case, we recover the previous guarantees with a numerical constant factor of $2$.
	
	A refined adaptive algorithm presented in \cite{mnih2008empirical} also utilizes a successive elimination approach using confidence intervals for the arm means based on the empirical Bernstein's inequality. However, unlike our algorithm, they use the concentration inequality to evaluate each arm's mean independently of the other arms. Their approach allows for adaptability to individual arm variances, and is particularly beneficial are small, i.e., $\Var(X_i) \ll 1$. The sample complexity of their algorithm is of the order of:
	\[
	\log(K \Gamma \delta^{-1}) \sum_{i\neq i^*} \frac{\Var(X_i)+\Var(X_{i^*})}{(\mu_{i^*}-\mu_{i^*})^2}\ ,
	\]
	where $\Gamma := \max_{i\neq i^*} (\Var(X_i)+\Var(X_{i^*})/(\mu_{i^*}-\mu_{i^*})^2$. Neglecting numerical constant factors, the last bound is larger than both our bounds \eqref{eq:b1} and \eqref{eq:b2}. This is because the variance of the differences can be bounded as follows: $V_{ii^*} \le 2(\Var(X_i)+\Var(X_{i^*}))$.

	\vspace{-0.3cm}
	\section{Algorithms and main theorem for Gaussian distributions}\label{sec:Gauss}   
	\vspace{-0.2cm}
	
	In this section, we address the scenario where arms are assumed to follow a Gaussian distribution. We consider a setting where the learner has no prior knowledge about the arms' distribution parameters, and we continue using the notation introduced in the previous section. The main difference between this case and the bounded variables setting, other than the form of the sample complexity obtained, is the behavior of the algorithm when variances of differences between variables tend to $0$, displayed by the second bound in Theorem~\ref{thm:0}.

     Our algorithm relies on the empirical Bernstein inequality, which was originally designed in the literature for bounded variables \cite{audibert2007tuning, maurer2009empirical}. We have extended this inequality to accommodate Gaussian variables by leveraging existing Gaussian concentration results. Note that extending such inequalities more generally for sub-Gaussian variables is a non-trivial task. One possible direction 
    is to suppose that arms follow a sub-Gaussian distribution and satisfy a Bernstein moment assumption (such extensions were pointed by works on bounded variables e.g., \citealp{balsubramani2016sequential}). Given the last class of distributions, we can build on the standard Bernstein inequality with known variance, then plug in an estimate of the empirical variance leveraging the concentration of quadratic forms (see \citealp{bellec2019concentration}). However, it remains uncertain whether an extension for sub-Gaussian variables (without additional assumptions) is practically feasible.
 
	We extend the previous algorithm to the Gaussian case by performing sequential tests between pairs of arms $(i,j)\in \intr{K}$. We establish a confidence bound for the difference variables $(X_i - X_j)$ (Lemma~\ref{lem:conc_g} in the appendix) and introduce the following quantity:
	\begin{equation}\label{def:delta_p}
	   \hat{\Delta}'_{ij}(t,\delta) := \hat{\mu}_{j,t}-\hat{\mu}_{i,t} -\frac{3}{2}\alpha(t,\delta)\sqrt{2f(\alpha(t,\delta))~\hat{V}_{ij,t}},   
	\end{equation}
	 
	where the $f$ is defined by: $f(x) = \exp(2x+1)$ if $x\ge 1/3$, and $f(x) = 1/(1-2x)$ otherwise.
	
	Using the empirical confidence bounds on the differences between arms, we apply the same procedure as in the case of bounded variables. We make one modification to Algorithm~\ref{algo:1}: we use the quantities $\hat{\Delta}'_{ij}(t, \delta)$ instead of $\hat{\Delta}_{ij}(t,\delta)$.
	
	By following the same analysis as in the bounded setting, we establish that the sample complexity required for the comparison tests between arms $i$ and $j$ is characterized by the quantity $\frac{V_{ij}}{(\mu_i-\mu_j)^2} \vee 1$. 
	The following theorem provides guarantees on the algorithm presented above:
	\begin{theorem}\label{thm:0}
		Suppose Assumptions~\ref{a:0} and~\ref{a:gaussian} holds. Consider the algorithm described above, with input $\delta \in (0,1)$. We have with probability at least $1-\delta$: the algorithm identifies the best arm and the total number of queries $N$ satisfies:
		\begin{equation}\label{eq:g1}
		N \le c \log(K\Lambda\delta^{-1}) \sum_{i \in \intr{K}\setminus \{i^*\}} ~ \min_{\substack{j \in \intr{K}:\\ \mu_j > \mu_i }}~  \left\lbrace\frac{V_{ij}}{(\mu_i - \mu_j)^2} \vee 1 \right\rbrace,	
		\end{equation}
		where $\Lambda = \max_{i \in \intr{K}} \min_{\substack{j \in \intr{K}:\\ \mu_j > \mu_i }}~  \left\lbrace\frac{V_{ij}}{(\mu_i - \mu_j)^2} \vee 1 \right\rbrace$ and $c$ is a numerical constant.
		
		Moreover, if we omit line~\ref{line:algo} from Algorithm~\ref{algo:1}, that is we do not query non-candidate arms, we have with probability at least $1-\delta$: the algorithm identifies the best arm and the total number of queries $N$ satisfies:
		\begin{equation}\label{eq:g2}
		N \le 3K+c \log\left( \frac{K\delta^{-1}}{\log\left(1+1/\Lambda\right)} \right) \sum_{i \in \intr{K}\setminus \{i^*\}} ~   \frac{1}{\log\left(1+\frac{(\mu_{i^*}-\mu_i)^2}{V_{ii^*}}\right)},	
		\end{equation}
		where $\Lambda = \max_{i\neq i^*} V_{ii^*}/(\mu_{i^*}-\mu_i)^2$ is defined above and $c$ is a numerical constant.
	\end{theorem}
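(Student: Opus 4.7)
The proof mirrors the template of Theorem~\ref{thm:2}, adapted to the Gaussian confidence quantity $\hat{\Delta}'_{ij}$. Three ingredients are required: (i) a two-sided sample-size characterisation of the event $\{\hat{\Delta}'_{ij}(t,\delta)>0\}$, (ii) an ultrametric property for the Gaussian cost $\Lambda^G_{ij} := V_{ij}/(\mu_i-\mu_j)^2 \vee 1$, and (iii) the contradiction/elimination argument sketched after Theorem~\ref{thm:2}.

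For bound \eqref{eq:g1}, I would first apply Lemma~\ref{lem:conc_g} to the Gaussian difference sequence $(X_{i,s}-X_{j,s})_s$ to obtain, with probability $1-\delta$ uniformly in $t$ and in pairs $(i,j)$, both a correctness statement ($\hat{\Delta}'_{ij}(t,\delta)>0 \Rightarrow \mu_j>\mu_i$) and a sufficiency statement (if $\mu_j>\mu_i$, then the event holds as soon as $t \gtrsim \log(1/\delta_t)\,\Lambda^G_{ij}$). The two-regime definition of $f$ is what keeps the multiplier $\sqrt{2f(\alpha(t,\delta))\hat{V}_{ij,t}}$ close to $\sqrt{2\hat{V}_{ij,t}}$ for large $t$ while staying controlled for small $t$. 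Next I would check the ultrametric inequality $\Lambda^G_{ik} \le c\max\{\Lambda^G_{ij},\Lambda^G_{jk}\}$ for triples with $\mu_i<\mu_j<\mu_k$, which follows from $\Var(X_k-X_i) \le 2\Var(X_k-X_j)+2\Var(X_j-X_i)$ together with $(\mu_k-\mu_i)^2 \ge \max\{(\mu_k-\mu_j)^2,(\mu_j-\mu_i)^2\}$ (the ``$\vee 1$'' component being handled analogously). I would then replay the contradiction argument from the sketch of Theorem~\ref{thm:2}: supposing $i \in S_t$ and $\Upsilon_i \cap C_t = \emptyset$ with $\Upsilon_i := \argmin_j \Lambda^G_{ij}$, pick $j \in \Upsilon_i$ of largest mean and let $k$ be the arm that eliminated $j$ at round $s<t$; the sufficient-sample side applied to $(j,k)$ and the necessary side applied to $(i,j)$ at round $82s$ yield $\Lambda^G_{jk}<\Lambda^G_{ij}$, and the ultrametric property then forces $k \in \Upsilon_i$, contradicting $\mu_k>\mu_j$. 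Summing the resulting elimination times over all sub-optimal arms yields \eqref{eq:g1}.

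For bound \eqref{eq:g2}, dropping line~\ref{line:algo} means that only $i^*$ can eliminate a sub-optimal arm $i$. Here the Gaussian tail is exponential rather than polynomial in $t$, so the sufficiency side of Step 1 can be sharpened: $\hat{\Delta}'_{ii^*}(t,\delta)>0$ is achieved as soon as $t\log\bigl(1+(\mu_{i^*}-\mu_i)^2/V_{ii^*}\bigr)$ exceeds a constant multiple of $\log(1/\delta_t)$. Inverting this bound and summing over $i$ yields \eqref{eq:g2}, the additive $3K$ term absorbing the initial queries in Algorithm~\ref{algo:1}. The main obstacle I expect lies precisely in this refinement: extracting the logarithmic improvement requires a careful treatment of $f(\alpha(t,\delta))$ in the regime $\hat{V}_{ii^*,t}\ll\alpha(t,\delta)^2$, i.e.\ on the large-$x$ branch $f(x)=\exp(2x+1)$, so that the resulting Chernoff-type bound for Gaussian differences does not degenerate into the polynomial rate $V_{ii^*}/(\mu_{i^*}-\mu_i)^2$ used in the bounded case, and faithfully exploits the information carried by a vanishing variance.
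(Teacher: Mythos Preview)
Your overall plan---two-sided sample bounds for $\hat{\Delta}'_{ij}$, an ultrametric on the cost quantities, and the contradiction argument of Theorem~\ref{thm:2}---is indeed the paper's route. But there is a real gap in your ultrametric step that breaks the contradiction as you have written it.

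You propose to obtain $\Lambda^G_{ik} \le c\max\{\Lambda^G_{ij},\Lambda^G_{jk}\}$ from the crude variance bound $\Var(X_k-X_i)\le 2\Var(X_k-X_j)+2\Var(X_j-X_i)$; this necessarily gives $c>1$. You then claim that, after deducing $\Lambda^G_{jk}<\Lambda^G_{ij}$, ``the ultrametric property forces $k\in\Upsilon_i$''. It does not: with $c>1$ you only get $\Lambda^G_{ik}\le c\,\Lambda^G_{ij}$, and since $\Upsilon_i$ is defined as the \emph{argmin} of $\Lambda^G_{i\cdot}$, a constant-factor bound is not enough to place $k$ in $\Upsilon_i$. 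The contradiction therefore does not close, and no adjustment of the oversampling factor $82$ repairs this---the slack is in the wrong inequality.

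The paper avoids this by proving an \emph{exact} ultrametric (Lemma~\ref{lem:ultram}): one works with $\sqrt{V_{ij}}/(\mu_j-\mu_i)$ rather than $V_{ij}/(\mu_j-\mu_i)^2$, uses the triangle inequality for the standard deviation $\sqrt{V_{ij}}\le\sqrt{V_{ik}}+\sqrt{V_{kj}}$, and then the mediant inequality $(a+b)/(c+d)\le\max\{a/c,b/d\}$ to obtain $\Lambda'_{ik}\le\max\{\Lambda'_{ij},\Lambda'_{jk}\}$ with constant~$1$. This is what makes the ``$k\in\Upsilon'_i$'' step go through. A secondary point you gloss over: in the Gaussian case the necessary side of Lemma~\ref{lem:ord_0} involves $\Lambda'_{ij}$ \emph{without} the $\vee 1$ truncation, while the sufficient side splits into two regimes; the paper accordingly defines $\Upsilon'_i$ via $\Lambda'_{ij}\vee\tfrac14$ and runs the contradiction in two separate cases (Lemma~\ref{lem:inter_1g}). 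Your sketch of bound~\eqref{eq:g2} is essentially correct.
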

 \begin{remark}
    On the sample complexity cost of being adaptive to the variance: If the sample size is larger than  $\log(1/\delta)$, the cost of plugging in the empirical variance estimate into the Chernoff's concentration inequality is only a multiplicative constant slightly larger than one (nearly $1+2\sqrt{\log(1/\delta)/n}$). However, in the case of a small sample regime ($n<\log(1/\delta)$), the cost is a multiplicative factor of $\exp(\sqrt{\log(1/\delta)/n}+1/2)$ due to the nature of the left tail of the chi-squared distribution (see Sections D and K of the appendix for detailed calculations). For most natural regimes, the number of queries made for each arm is larger than $\log(1/\delta)$, hence the last described effect does not arise. However, in some specific regimes (such as the case of very small variances of the arms) an optimal algorithm should query less than $\log(1/\delta)$ samples, which necessitates introducing the exponential multiplicative term above into the concentration upper bound. This translates into a different form of guarantee presented in Theorem~\ref{thm:0}, inequality \eqref{eq:g2}. It is important to note the cost in this regime cannot be avoided as highlighted by our lower bound presented in Theorem~\ref{thm:lower_bound_g}.  
 \end{remark}
	Theorem~\ref{thm:0} above shows that Algorithm~\ref{algo:1} is applied to Gaussian distribution guarantees that each sub-optimal arm $i$ is eliminated after roughly $\min_j V^2_{ij}/(\mu_i-\mu_j)^2$ queries. Bounds \eqref{eq:g1} and \eqref{eq:g2} derived from our algorithm are smaller than the standard complexity bound in the independent case, which is given by $\sum_{i\neq i^*} \sigma^2/(\mu_i - \mu_{i^*})^2$ when all arms have variances smaller than $\sigma^2$. It is important to note that unlike most existing procedures in the literature that achieve the standard complexity bound, our algorithm does not require knowledge of the parameter $\sigma^2$.

	\paragraph{About the upper bound \eqref{eq:g2}:} This bound can be further bounded by $\log(\delta^{-1})\sum_{i \neq i^*}\frac{V_{ii^*}}{(\mu_i-\mu_{i^*})^2}\vee 1$. The form presented in Theorem~\ref{thm:0} is particularly sharp when the variances $V_{ii^*}$ tend to zero, resulting in a constant upper bound. Recently, in the independent setting where variances are unknown, \cite{jourdan2023dealing} analyzed the comparison of two arms $i$ and $j$. They derived a complexity of the order $1/\log\left(1+\frac{(\mu_i - \mu_j)^2}{(\sigma_i^2+\sigma_j^2)}\right)$, where $\sigma_i^2$ and $\sigma_j^2$ are the variances of arms $i$ and $j$, respectively. This result is reflected in our bound \eqref{eq:g2}, where instead of the sum of variances, our bound considers the variance of the difference, leading to a sharper bound. In Section~\ref{sec:lower}, we provide a lower bound that nearly matches \eqref{eq:g2}.

	\vspace{-0.3cm}
	\section{Lower bounds}\label{sec:lower}
	\vspace{-0.2cm}
	
	In this section, we present lower bounds for the problem of best arm identification with multiple queries per round, following Protocol~\ref{algo:gp}. We provide lower bounds for both the bounded distributions setting (Theorem~\ref{thm:lower_bound_b}) and the Gaussian distributions setting (Theorem~\ref{thm:lower_bound_g}). It is important to note that our lower bounds are derived considering a class of correlated arm distributions. Therefore, the results obtained in the standard one-query-per-round setting \cite{kaufmann2016complexity} do not hold in our setting.

	Our first lower bound is derived for the case where arms follow a Bernoulli distribution. Let $\bm{\mu} = (\mu_i)_{i\in \intr{K}}$ be a sequence of means in $[1/4,3/4]$, denote by $i^*$ the index of the largest mean. Consider a sequence of positive numbers $\bm{V} = (V_{ii^*})_{i \in \intr{K}}$. We define $\mathbb{B}_K(\bm{\mu}, \bm{V})$ as the set of Bernoulli arm distributions such that: (i) $\mathbb{E}[X_i]=\mu_i$; (ii) $\Var(X_i - X_{i^*}) \le V_{ii^*}$ for all $i \in \intr{K}$. 
	
	For Bernoulli distributions, the variances and means are linked. Specifically, Lemma~\ref{lem:ber} demonstrates that for any pair of Bernoulli random variables $(B_1, B_2)$ with means $(b_1,b_2)$, the following inequality holds: $(b_1-b_2)-(b_1-b_2)^2 \le \Var(B_1-B_2) \le \min\{2-(b_1+b_2); b_1+b_2\} - (b_1-b_2)^2$.
	
	We assume that the sequences $\bm{V}$ and $\bm{\mu}$ satisfy the aforementioned condition for $(X_i,X_{i^*})_{i \in \intr{K}}$, as otherwise the class $\mathbb{B}_{K}(\bm{\mu}, \bm{V})$ would be an empty set. Theorem~\ref{thm:lower_bound_b} provides a lower bound on the sample complexity required for best arm identification in the worst-case scenario over the class $\mathbb{B}_K(\bm{\mu}, \bm{V})$.
	
	\begin{theorem}\label{thm:lower_bound_b}
		Let $K \ge 2$ and $\delta \in (0,1)$. For any $\delta$-sound algorithm, we have
		\[
		\max_{\mathcal{B} \in \mathbb{B}_K(\bm{\mu}, \bm{V})} \mathbb{E}_{\mathcal{B}} \left[N\right] \ge \frac{1}{8}\log(1/4\delta) \sum_{i \in \intr{K}\setminus \{i^*\}} \max\left\lbrace \frac{V_{ii^*}}{(\mu_{i^*}-\mu_i)^2}, \frac{1}{\mu_{i^*}-\mu_i}\right\rbrace,
		\] 
		where $N$ is the total number of queries.
	\end{theorem}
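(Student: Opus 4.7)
The plan is to apply the change-of-measure strategy of Kaufmann-Cappé-Garivier in the multi-query setting, where each round the learner's action is a subset $C \subseteq \intr{K}$ and the observation is the joint marginal of the Bernoulli vector on $C$. Denoting by $N_C$ the number of rounds in which subset $C$ is queried, and by $\mathcal{Q}^{(C)}$ the marginal law of an instance $\mathcal{Q}$ on $(X_j)_{j\in C}$, the transportation inequality applied to two Bernoulli instances $\mathcal{B}, \mathcal{B}_i$ with distinct optimal arms and any $\delta$-sound algorithm reads
\[
\sum_{C \subseteq \intr{K}} \mathbb{E}_{\mathcal{B}}[N_C]\,\mathrm{KL}\bigl(\mathcal{B}^{(C)},\, \mathcal{B}_i^{(C)}\bigr) \;\ge\; \mathrm{kl}(\delta, 1-\delta) \;\ge\; \log\!\left(\frac{1}{4\delta}\right).
\]
Since $\mathbb{E}[N] = \sum_j \mathbb{E}[N_j]$ with $\mathbb{E}[N_j] = \sum_{C \ni j}\mathbb{E}[N_C]$, the strategy is to exhibit, for each sub-optimal $i$, an alternative $\mathcal{B}_i$ such that $\mathcal{B}^{(C)} = \mathcal{B}_i^{(C)}$ whenever $i \notin C$, together with a uniform upper bound on $\mathrm{KL}(\mathcal{B}^{(C)}, \mathcal{B}_i^{(C)})$ for $C \ni i$; summing over sub-optimal arms then delivers the bound.

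As base instance $\mathcal{B} \in \mathbb{B}_K(\bm{\mu}, \bm{V})$ I would take the joint Bernoulli law in which the coordinates $(X_j)_{j \neq i^*}$ are conditionally independent given $X_{i^*}$, each pair $(X_j, X_{i^*})$ having the prescribed marginals and saturating $\mathrm{Var}(X_j - X_{i^*}) = V_{ji^*}$ (feasibility guaranteed by Lemma~\ref{lem:ber}). For each sub-optimal arm $i$, I would define $\mathcal{B}_i$ by modifying only the conditional law $P(X_i \mid X_{i^*})$: this automatically ensures $\mathcal{B}^{(C)} = \mathcal{B}_i^{(C)}$ for $C \not\ni i$ and reduces, via the chain rule and the conditional independence, the joint KL on $C \ni i$ to the pair KL on $(X_i, X_{i^*})$. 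Two modifications are convenient: a \emph{swap} of the two off-diagonal atoms $P(X_i=1, X_{i^*}=0)$ and $P(X_i=0, X_{i^*}=1)$ in the $2\times 2$ pair table (which reverses the sign of $\mu_i - \mu_{i^*}$ and preserves $V_{ii^*}$, up to an infinitesimal perturbation making $i$ strictly best), and a \emph{mass transfer} of size $\epsilon = 2(\mu_{i^*}-\mu_i)$ between two atoms of the pair table chosen to raise $\mathbb{E}[X_i]$ above $\mu_{i^*}$.

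A direct computation on the $2\times 2$ Bernoulli pair table produces two complementary bounds on the pair KL: the swap alternative gives $\mathrm{KL} \le c_1 (\mu_{i^*}-\mu_i)^2/V_{ii^*}$, which is sharp when $V_{ii^*} \gtrsim \mu_{i^*}-\mu_i$, while the mass-transfer alternative gives $\mathrm{KL} \le c_2 (\mu_{i^*}-\mu_i)$, exploiting the fact that $\mu_j \in [1/4, 3/4]$ forces at least one of the four atoms of the pair table to exceed $1/4$ (since they sum to one), keeping the KL denominator bounded below by a constant. Plugging each bound into the transportation inequality and retaining the stronger of the two yields
\[
\mathbb{E}_{\mathcal{B}}[N_i] \;\ge\; \frac{1}{8}\log\!\left(\frac{1}{4\delta}\right) \max\!\left\{\frac{V_{ii^*}}{(\mu_{i^*}-\mu_i)^2},\; \frac{1}{\mu_{i^*}-\mu_i}\right\},
\]
where the constant $1/8$ absorbs the worst-case loss when combining the two regimes. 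Summing over $i \neq i^*$ and using $\mathbb{E}[N] \ge \sum_{i \neq i^*}\mathbb{E}[N_i]$ gives the announced bound.

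The main technical obstacle lies in the mass-transfer construction: for certain $(\mu_i, \mu_{i^*}, V_{ii^*})$ some atoms of the $2\times 2$ pair table can be arbitrarily close to zero, which would naively blow up the pair KL. A short case analysis, depending on which atom is bounded below by $1/4$, would identify a safe direction for the $\epsilon$-transfer and keep the pair KL of order $\mu_{i^*}-\mu_i$ uniformly. Extending Kaufmann-Cappé-Garivier's transportation lemma to the multi-query protocol is by contrast conceptually routine: the standard proof applies verbatim once subsets $C$ are treated as actions and $N_C$ replaces the single-arm pull counts.
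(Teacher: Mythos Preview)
Your high-level strategy is correct and close to the paper's: a transportation/change-of-measure argument in the multi-query protocol, with a base instance having a conditional-independence structure so that the KL on any subset $C\ni i$ collapses to a ``pair'' KL involving only arm $i$. The paper also proceeds this way. The difference is in how the base instance and the alternatives are built, and this is where your proposal has a concrete bug.

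Your swap alternative is inconsistent with your own reduction. You first claim $\mathcal{B}_i$ is obtained by modifying \emph{only} the conditional $P(X_i\mid X_{i^*})$, which is precisely what makes $\mathcal{B}^{(C)}=\mathcal{B}_i^{(C)}$ for $C\not\ni i$ and lets the chain rule collapse the joint KL to the pair KL. But swapping the off-diagonal atoms $p_{10}$ and $p_{01}$ of the $2\times2$ table changes the marginal of $X_{i^*}$ from $\mu_{i^*}$ to $\mu_i$: the new $P(X_{i^*}=1)=p_{11}+p_{10}=\mu_i$. Since all other arms are coupled to $X_{i^*}$, every marginal $\mu_j$ shifts and $\mathcal{B}^{(C)}\neq\mathcal{B}_i^{(C)}$ whenever $i^*\in C$ even if $i\notin C$; the KL no longer reduces to the pair. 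If instead you insist on keeping $P(X_{i^*})$ fixed and only move mass within each conditional, the best achievable KL is of order $(\mu_{i^*}-\mu_i)^2/\mathbb{E}[\mathrm{Var}(X_i\mid X_{i^*})]$, and you would still need to argue that $\mathbb{E}[\mathrm{Var}(X_i\mid X_{i^*})]\asymp V_{ii^*}$ on the class (they differ by $\mu_{i^*}(1-\mu_{i^*})(1-\rho)^2$ with $\rho$ the conditional-mean slope), which you do not address.

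The paper sidesteps this by conditioning on a \emph{latent uniform} $U$ rather than on $X_{i^*}$: it sets $X_{i^*}=\mathds{1}(U\le\mu_{i^*})$ and $X_i=\mathds{1}(U\le a_i\ \text{or}\ W_i\le b_i)$ with $(a_i,b_i)$ tuned to hit $(\mu_i,V_{ii^*})$. The alternative raises only the threshold on the \emph{independent} noise $W_i$, so $U$ (hence $X_{i^*}$ and every other arm) is untouched; the data-processing step adds $U$, not $X_{i^*}$, and the KL reduces exactly to a single Bernoulli KL in $b_i$. This single construction already yields the additive form $T_i\gtrsim\bigl(V_{ii^*}+(\mu_{i^*}-\mu_i)\bigr)/(\mu_{i^*}-\mu_i)^2$, so no separate ``swap vs.\ mass-transfer'' case split is needed; the second regime ($V_{ii^*}$ large) is handled by taking $X_i$ independent of $U$. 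If you want to keep your $X_{i^*}$-conditioning route, the fix is to replace the swap by a genuine conditional perturbation and carry out the comparison $\mathbb{E}[\mathrm{Var}(X_i\mid X_{i^*})]\gtrsim V_{ii^*}$ under the $[1/4,3/4]$ mean constraint; otherwise, the latent-uniform device is both cleaner and what the paper actually does.
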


	The presented lower bound takes into account the correlation between arms by incorporating the quantities $V_{ii^*}$ as upper bounds for the variances of the differences between arm $i$ and the optimal arm $i^*$. This lower bound indicates that for class $\mathbb{B}_K(\bm{\mu}, \bm{V})$, Algorithm~\ref{algo:1} is nearly optimal.
	
	Next, we provide a lower bound in the Gaussian case. Let $\bm{\mu} = (\mu_i)_{i\in \intr{K}}$ be a sequence of means, where $i^*$ denotes the index of the largest mean. We also consider a sequence of positive numbers $\bm{V} = (V_{ii^*})_{i \in \intr{K}}$. We define $\mathbb{G}_K(\bm{\mu}, \bm{V})$ as the set of Gaussian arm distributions satisfying the following conditions: (i) $\mathbb{E}[X_i] = \mu_i$, (ii) $\Var(X_i - X_{i^*}) = V_{ii^*}$, and (iii) $\Var(X_i) \ge 1$ for all $i\in \intr{K}$. Theorem~\ref{thm:lower_bound_g} provides a lower bound on the sample complexity required for best arm identification in the worst-case scenario over the class $\mathbb{G}_K(\bm{\mu}, \bm{V})$.

	
	\begin{theorem}\label{thm:lower_bound_g}
		Let $K \ge 2$ and $\delta \in (0,1)$. For any $\delta$-sound algorithm, we have
		\[
		\max_{\mathcal{G} \in \mathbb{G}_K(\bm{\mu}, \bm{V})} \mathbb{E}_{\mathcal{G}} \left[N\right] \ge 2\log(1/4\delta) \sum_{i \in \intr{K}\setminus \{i^*\}} \frac{1}{\log\left(1+\frac{(\mu_i-\mu_{i^*})^2}{V_{ii^*}^2}\right)},
		\] 
		where $N$ is the total number of queries.
	\end{theorem}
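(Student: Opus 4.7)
}
My plan is to apply the Kaufmann--Cappé--Garivier change-of-measure inequality with a carefully chosen Gaussian alternative that perturbs \emph{both} the mean and the covariance of the shifted arm. This ``variance-flexible'' alternative is the source of the $1/\log(1+\cdot)$ rate, in contrast to the standard $(\cdot)^{-1}$ rate obtained from a pure mean shift.

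I would first build an explicit baseline $\nu\in\mathbb{G}_K(\bm{\mu},\bm{V})$: set $X_{i^*}\sim \mathcal{N}(\mu_{i^*},1)$ and, for each $i\neq i^*$, let $X_i = X_{i^*}+Y_i$ with $Y_i\sim \mathcal{N}(\mu_i-\mu_{i^*},\,V_{ii^*})$ mutually independent and independent of $X_{i^*}$. Then $\Var(X_i)=1+V_{ii^*}\ge 1$ and $\Var(X_i-X_{i^*})=V_{ii^*}$, so $\nu$ lies in the class; and because $Y_i$ is independent of all $X_j$ with $j\neq i$, the key identity $\Var_\nu(X_i\mid X_{-i})=V_{ii^*}$ holds. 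Next, for each sub-optimal $i$ and small $\epsilon>0$, with $\Delta_i:=\mu_{i^*}-\mu_i$ and $\bm{\delta}_i:=(\Delta_i+\epsilon)\bm{e}_i$, I introduce the alternative
\[
\nu'_{(i)}:=\mathcal{N}\bigl(\bm{\mu}+\bm{\delta}_i,\;\Sigma+\bm{\delta}_i\bm{\delta}_i^{\top}\bigr),
\]
where $\Sigma$ is the covariance of $\nu$. Under $\nu'_{(i)}$, arm $i$ has mean $\mu_{i^*}+\epsilon$ and becomes the unique best arm.

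The central computation is the Gaussian KL identity
\[
\KL\!\bigl(\mathcal{N}(\bm{m},\Gamma)\,\big\|\,\mathcal{N}(\bm{m}+\bm{\eta},\Gamma+\bm{\eta}\bm{\eta}^{\top})\bigr) \;=\; \tfrac{1}{2}\log\bigl(1+\bm{\eta}^{\top}\Gamma^{-1}\bm{\eta}\bigr),
\]
which follows from Sherman--Morrison applied to $(\Gamma+\bm{\eta}\bm{\eta}^{\top})^{-1}$ together with the matrix determinant lemma. Restricting to any query $C\subseteq\intr{K}$, the marginals factorise as $\mathcal{N}(\bm{\mu}_C,\Sigma_C)$ and $\mathcal{N}(\bm{\mu}_C+\bm{\delta}_{i,C},\Sigma_C+\bm{\delta}_{i,C}\bm{\delta}_{i,C}^{\top})$. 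Since $\bm{\delta}_{i,C}=0$ when $i\notin C$, we get $\KL(\nu_C\|\nu'_{(i),C})=0$ in that case; when $i\in C$, the identity combined with $(\Sigma_C^{-1})_{ii}^{-1}=\Var_\nu(X_i\mid X_{C\setminus\{i\}})\ge \Var_\nu(X_i\mid X_{-i})=V_{ii^*}$ (monotonicity of conditional variance under additional conditioning) yields the uniform bound $\KL(\nu_C\|\nu'_{(i),C})\le \tfrac{1}{2}\log\!\bigl(1+(\Delta_i+\epsilon)^2/V_{ii^*}\bigr)$.

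Finally, the Kaufmann--Cappé--Garivier inequality, obtained by expanding the log-likelihood ratio round by round and applying Le Cam's inequality to the event $\{\psi=i^*\}$, gives $\sum_C\mathbb{E}_\nu[N_C]\,\KL(\nu_C\|\nu'_{(i),C})\ge \mathrm{kl}(1-\delta,\delta)\ge\log(1/(4\delta))$, where $N_C$ counts rounds querying $C$. Combining with the KL bound above and writing $N^{(i)}:=\sum_{C\ni i}N_C$ for the number of queries involving arm $i$, I get
\[
\mathbb{E}_\nu[N^{(i)}] \;\ge\; \frac{2\log(1/(4\delta))}{\log\bigl(1+(\Delta_i+\epsilon)^2/V_{ii^*}\bigr)}.
\]
Since $N=\sum_C N_C|C|=\sum_i N^{(i)}$, summing over $i\neq i^*$ and letting $\epsilon\to 0$ produces the claimed lower bound. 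The main technical obstacle is the KL identity itself: recognising that the rank-one covariance update $\Gamma\mapsto\Gamma+\bm{\eta}\bm{\eta}^{\top}$ is the minimiser (over Gaussian covariances) of KL with a prescribed mean shift $\bm{\eta}$, which is what replaces the naive linear KL rate with the sharper logarithmic one.
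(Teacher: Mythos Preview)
Your proposal is correct and follows the same overall architecture as the paper: the identical baseline construction $X_i=X_{i^*}+Y_i$ with independent $Y_i\sim\mathcal{N}(\mu_i-\mu_{i^*},V_{ii^*})$, essentially the same alternative (the paper perturbs $Y_i$ to $\mathcal{N}(\epsilon,V_{ii^*}+\Delta_i^2)$, which is exactly your rank-one update $\Sigma\mapsto\Sigma+\bm{\delta}_i\bm{\delta}_i^{\top}$ up to the harmless $\Delta_i^2$ vs.\ $(\Delta_i+\epsilon)^2$), and the same change-of-measure lower bound over all query subsets followed by summation over $i\neq i^*$.

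The one genuine difference is in how the per-subset KL is controlled. The paper enlarges each subset $A$ to $A\cup\{i^*\}$ via the data-processing inequality, conditions on $X_{i^*}$ to exploit the conditional independence of the arms, and computes a univariate Gaussian KL explicitly; after dropping a nonnegative remainder it obtains $\tfrac12\log(1+\Delta_i^2/V_{ii^*})$. You instead prove the exact multivariate identity $\KL(\mathcal N(m,\Gamma)\,\|\,\mathcal N(m+\eta,\Gamma+\eta\eta^\top))=\tfrac12\log(1+\eta^\top\Gamma^{-1}\eta)$ via Sherman--Morrison and the matrix determinant lemma, then bound $(\Sigma_C^{-1})_{ii}$ through the precision-matrix interpretation $(\Sigma_C^{-1})_{ii}^{-1}=\Var(X_i\mid X_{C\setminus\{i\}})\ge V_{ii^*}$. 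Your route is cleaner and makes transparent \emph{why} the rank-one covariance inflation is the right alternative (it minimises the Gaussian KL subject to the prescribed mean shift), whereas the paper's conditioning argument is more elementary and avoids any matrix inversion. Both lead to the same bound.
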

	Theorem~\ref{thm:lower_bound_g} demonstrates that our algorithm achieves near-optimal performance over $\mathbb{G}_K(\bm{\mu}, \bm{V})$, up to a logarithmic factor.

	\vspace{-0.3cm}
	\section{Numerical simulations}
	\vspace{-0.2cm}
	
	We consider the Gaussian rewards scenario. We compare our algorithm Pairwise-BAI (Algorithm~\ref{algo:1}) to 3 benchmark algorithms: Hoeffding race \cite{maron1993hoeffding}, adapted to the Gaussian setting (consisting of successive elimination based on Chernoff's bounds) and LUCB \cite{kalyanakrishnan2012pac}, which is an instantiation of the upper confidence bound (UCB) method. We assume that the last two algorithms have a prior knowledge on the variances of the arms. The third benchmark algorithm consists of using a successive elimination approach using the empirical estimates of the variances. We evaluated two variations of our algorithm. The first one, Pairwise-BAI+, implemented Algorithm~\ref{algo:1} for Gaussian variables. In this instance, we modified line~\ref{line:algo} by continuing to sample sub-optimal arms that were eliminated at round $t$ until round $2t$ instead of $82t$. We stress that
	both variants guarantee a $\delta$-sound
	decision on the optimal arm
	(see Theorem~\ref{thm:0}).
	The second instance involved removing the last instruction, meaning we directly stopped querying sub-optimal arms. Figure~\ref{fig:1} displays the average sample complexities for each considered algorithm. As expected, the larger the correlation between arms, the better Pairwise-BAI performs.
	
	\begin{figure}
		\centering
		\begin{subfigure}{.5\textwidth}
			\centering
			\includegraphics[width=\linewidth]{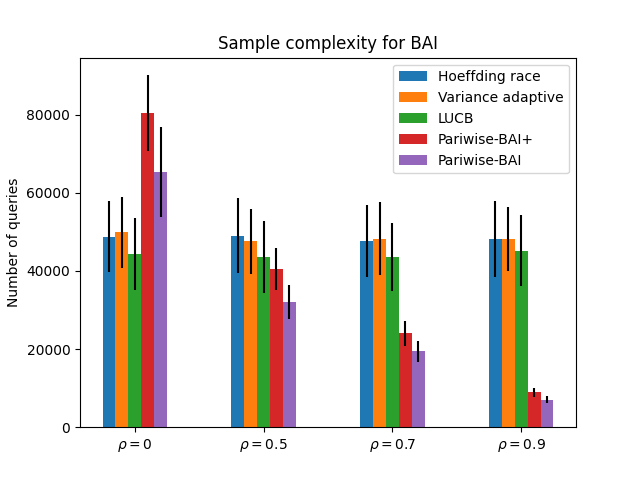}
		\end{subfigure}%
		\begin{subfigure}{.5\textwidth}
			\centering
			\includegraphics[width=\linewidth]{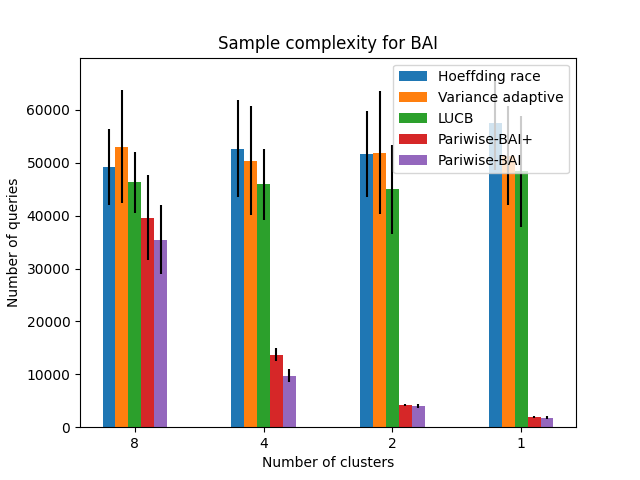}
		\end{subfigure}
		\caption{Left figure: the average sample complexities, in $4$ scenarios with $K=10$ arms with means $\mu_i = i/10$. The covariance matrix $C$ is defined as follows: for $i\in \intr{K}$, $C_{ii} = 1$ and for $i\neq j$: $C_{ij} = \rho$. We consider $4$ scenarios with  the correlation: $\rho \in \{0, 0.5, 0.7, 0.9\}$. Right figure: the average sample complexities with $K=16$ arms with means $\mu_i = i/10$. Arms in the same cluster have a correlation of $0.99$, and arms from different clusters are independent. We consider $4$ scenarios with different number of clusters: $n_{\text{cl}} \in \{8,4,2,1\}$. All clusters are of the same size.}
		\label{fig:1}
	\end{figure}
	The second experiment aims to demonstrate that Algorithm~\ref{algo:1} adapts to the covariance between sub-optimal arms, as indicated by bound~\eqref{eq:g1}
	in Theorem~\ref{thm:2},
	rather than solely adapting to the correlation with the optimal arm as shown by~\eqref{eq:g2}. 
	In this experiment, we consider that the arms are organized into clusters, where each pair of arms within the same cluster exhibits a high correlation (close to 1), while arms from different clusters are independent. According to bound~\eqref{eq:g1} (see also discussion after Theorem~\ref{thm:2}) 
	we expect to observe a gain of up to a factor corresponding to the number of arms per cluster compared to algorithms that do not consider covariance. (Since the total number of arms is kept fixed, the number of arms per cluster scales
	as the inverse of the number of clusters.)
	
	Figure~\ref{fig:1} illustrates the results, showing that the performance of Pairwise-BAI improves as the number of clusters decreases (indicating a larger number of correlated arms). This suggests that increasing the number of correlated sub-optimal arms, which are independent from the optimal arm, still leads to significant performance improvement. These findings support the idea that Pairwise-BAI and Pairwise-BAI+ exhibit behavior that aligns more closely with bound \eqref{eq:g1} 
	in Theorem~\ref{thm:2}, rather than bound~\eqref{eq:g2}. 
	
	In both experiments, we observe that Pairwise-BAI+ performs worse compared to Pairwise-BAI, indicating that, empirically, in the given scenarios, continuing to sample sub-optimal arms does not contribute to improved performance. While this modification provides better theoretical guarantees, it may not lead to empirical performance improvements in general scenarios.
	
	\vspace{-0.3cm}
	\section{Conclusion and future directions}
	\vspace{-0.2cm}
	
	
	This work gives rise to several open questions. Firstly, the presented lower bounds take into account partially the covariance between arms. It would be interesting to explore the development of a more precise lower bound that can adapt to any covariance matrix of the arms. Additionally, in terms of the upper bound guarantees, our focus has been on pairwise comparisons, along with an algorithm that compares candidate arms with convex combinations of the remaining arms (Section~\ref{sec:convex}). An interesting direction for further research would involve extending this analysis to an intermediate setting, involving comparisons with sparse combinations.

    \paragraph{Acknowledgements:}  This work 
    is supported by ANR-21-CE23-0035 (ASCAI) and
    ANR-19-CHIA-0021-01 (BISCOTTE). This work was conducted while EM Saad was in INRAE Montpellier.

	\newpage
	\bibliographystyle{plain}
	\bibliography{arxiv_vof}

	\newpage
	
	\appendix
	\begin{center}
		{\bf \Large Supplementary material for:\\ Covariance adaptive best arm identification.}
	\end{center}
	
	\vspace{1cm}

	\section{Notation}\label{sec:not}
	
	\begin{itemize}
		\item Let $\bm{X} = (X_1, \dots, X_K)$ denote the vector of variables associated to the arms.
		\item Let $\bm{\mu} = (\mu_i)_{i \in \intr{K}}$ denote the vector of arms' means.
		\item For each $i, j \in \intr{K}$, let $V_{ij} = \Var(X_i-X_j)$.
		\item For each round $t \ge 1$ let $\bm{X}_t = (X_{1,t}, \dots, X_{K,t})$ denote the rewards sampled by the environment at round $t$.
		\item Let $\hat{\mu}_{i,t}$ denote  empirical mean of samples pulled from arm $i$ up to round $t$:
		\[
		\hat{\mu}_{i,t} := \frac{1}{t} \sum_{s=1}^{t} X_{i,s}.	
		\]
		Denote $\hat{\bm{\mu}}_t = (\hat{\mu}_{i,t}, \dots, \hat{\mu}_{K,t})$.  
		\item Let $(Y_t)_t$ denote a sequence of random variables distributed following $Y$:
		\[
		\widehat{V}_{t}(Y) =  \frac{1}{t(t-1)} \sum_{1\le u<v \le t}  (Y_{u} - Y_{v})^2. 
		\]
		\item For $i,j \in \intr{K}$, define the empirical variance for $(X_i-X_j)$ as follows:
		\begin{equation}\label{eq:def_sample_v}
		\widehat{V}_{ij,t} := \frac{1}{t(t-1)} \sum_{1\le u<v \le t} \left( (X_{i,u}-X_{j,u}) - (X_{i,v}-X_{j,v}) \right)^2.
		\end{equation}
		\item Define $\delta_t := \delta/(2K^2t(t+1))$ and  $\alpha(t,\delta) := \sqrt{\frac{\log(\delta_t^{-1})}{t-1}}$.
		\item For bounded variables, define: 
		\[
		\hat{\Delta}_{ij}(t,\delta) := \hat{\mu}_{j,t} - \hat{\mu}_{i,t}  - \frac{3}{2}\alpha(t,\delta)\sqrt{2\widehat{V}_{ij,t}}  - 9~\alpha^2(t,\delta).
		\]
		
		\item For Gaussian variables, define:
		\[
		\hat{\Delta}'_{ij}(t,\delta) := \hat{\mu}_{j,t} - \hat{\mu}_{i,t}  - \frac{3}{2}\alpha(t,\delta)\sqrt{2f(\alpha(t,\delta))\widehat{V}_{ij,t}}.
		\]

		\item For bounded variables, define: 
		\[
		\hat{\Gamma}_{i}(w,t,\delta) := \langle w,\hat{\bm{\mu}}_{t} \rangle - \hat{\mu}_{i,t}  - 2\sqrt{2K\widehat{V}_{t}(X_i, \langle  w,  \bm{X} \rangle )}\alpha(t,\delta) - 14 K\norm{w-e_i}_1~\alpha^{2}(t,\delta),
		\]
		where $w \in \bm{B}_1^+$ and $\bm{B}_1^+ := \{ w, w \in [0,1]^K \text{ and } \norm{w}_1 = 1\}$.
		\item Define for $S \subseteq \intr{K}$ and $t \ge 1$
		\begin{equation}\label{def:B_1}
		\bm{B}_1^+(S) := \{ w \in \bm{B}_1^+  \text{ such that: } \text{supp}(w) \subseteq S\}.
		\end{equation}

		\item For bounded arms problem, define for $i,j \in \intr{K}$:
		\[
		\Lambda_{ij} := \left\{
		\begin{array}{ll}
		+\infty & \mbox{if } \mu_j \le \mu_i \\
		\frac{V_{ij}}{(\mu_j - \mu_i )^2} + \frac{3}{\mu_j- \mu_i }  & \mbox{otherwise }
		\end{array}
		\right.
		\]
		
		\item For Gaussian arms problem, define for $i,j \in \intr{K}$:
		\[
		\Lambda'_{ij} := \left\{
		\begin{array}{ll}
		+\infty & \mbox{if } \mu_j \le \mu_i \\
		\frac{V_{ij}}{(\mu_j - \mu_i )^2}  & \mbox{otherwise }
		\end{array}
		\right.
		\]
		\item Observe the quantities $\Lambda_{ij}$ and $\Lambda'_{ij}$ defined in the previous sections are slightly different from the above quantities.
		\item For bounded arms problem, define for $i \in \intr{K}$ and $w\in \mathbb{B}_1^+$:
		\[
		\Xi_{i}(w) := \left\{
		\begin{array}{ll}
		+\infty & \mbox{if } \langle w, \bm{\mu} \rangle \le \mu_i \\
		\max\left\lbrace \frac{\Var( \langle \bm{X}, w \rangle- X_i )}{(\langle w, \bm{\mu} \rangle - \mu_i )^2}, \frac{3\norm{w-e_i}_1}{\langle w, \bm{\mu} \rangle- \mu_i }\right\rbrace  & \mbox{otherwise }
		\end{array}
		\right.
		\]
		
		\item Notation for Algorithms~\ref{algo:0} and~\ref{algo:1}: In round $t$, let $S_t$ denote the set of candidate arms and $C_t$ the set of queried arms at round $t$.
	\end{itemize}
	
	\section{Additional Results for bounded variables: comparison to convex combination} \label{sec:convex}
	In this section we consider that arms distributions are bounded by $1$. We adopt the same notation introduced in the main body and add the following: Let $\bm{B}_1^+ \subset \mathbb{R}^K$ denote the set of vectors $w$ with non-negative entries such that $\norm{w}_1 = 1$. Let $\bm{X}:= (X_1, \dots, X_K)$ and $\langle ., .\rangle$ denote the euclidean scalar product in $\mathbb{R}^K$. For a subset $A\subset \intr{K}$, we denote by $\bm{B}_1^+(A)$ the set of elements in $\bm{B}_1^+$ with support in $A$.
	
	While in the previous sections the main idea of the presented procedures is to perform pairwise comparisons between arms, we consider here that for some classes of covariance matrices between the arms, it may is beneficial to perform sequential tests comparing the candidate arms with convex combinations of the non-eliminated arms. For example, for a sub-optimal arm $i$, it is possible to have for some weights vector $\bm{w}$, with support in $\intr{K}\setminus \{i\}$: $\Var(X_i - \langle w, \bm{X}\rangle) \ll \min_{j\in \intr{K}} \Var(X_i-X_j)$. In this case, it is advantageous to eliminate arm $i$ through a comparison with the combination $\langle w, \bm{X}\rangle$ instead of pairwise comparisons, as concluding that $\mathbb{E}[X_i - \langle w, \bm{X} \rangle] <0$ for some $w \in \bm{B}_1^+$ signifies that arm $i$ is sub-optimal.
	
	The approach used in this section shares similarities with the preceding methodology. More precisely, we develop an empirical second-order concentration inequality over the differences $(X_i - \langle w, \bm{X}\rangle)$ for $i \in \intr{K}$ and $\bm{w} \in \bm{B}_1^+$, based on empirical Bernstein inequality and a covering argument over $\bm{B}_1^+$. We define the following quantity: for $i \in \intr{K}$ and $w \in \bm{B}_1^+$.
	\[
	\hat{\Gamma}_{i}(w,t,\delta) := \langle w,\hat{\bm{\mu}}_{t} \rangle - \hat{\mu}_{i,t}  - 2\sqrt{2K\widehat{V}_{t}(X_i- \langle  w,  \bm{X} \rangle )}\alpha(t,\delta) - 14 K\norm{w-e_i}_1~\alpha^{2}(t,\delta).
	\]
	
	Lemma~\ref{lem:conc2} shows that if $\hat{\Gamma}_{i}(w,t,\delta) >0$, then $\mathbb{E}\left[\langle w, \bm{X} \rangle\right] > \mu_i$.

	\begin{algorithm} 
		\caption{Convex-BAI \label{algo:0}}
		\begin{algorithmic}[1]
			\STATE \textbf{Input}  $\delta$.
			\STATE \textbf{Initialization:}
			\STATE \hspace{1cm} Query all arms for $2$ rounds and compute the empirical means vector $\hat{\bm{\mu}}_t$, $t \gets 3$. 
			\STATE \hspace{1cm} $S \gets \intr{K}$, \qquad /*\texttt{Set of candidate arms}*/
			\STATE \hspace{1cm} $C \gets \intr{K}$, \qquad /*\texttt{Set of queried arms}*/
			\WHILE{ $\lvert S \rvert>1$ } 
			\STATE Jointly query all the experts in $ C$.
			\STATE Update $\hat{\bm{\mu}}_t$ and compute $\sup_{w \in \mathbb{B}_1^+(C )} \hat{\Gamma}_i(w, t, \delta)$ for each $i \in S$.
			\FOR{$i \in S$}
			\IF{$\sup_{w \in \bm{B}_1^+(C)} \hat{\Gamma}_i(w, t, \delta)>0$}
			\STATE 	Eliminate $i$ from $S$: $S \gets S \setminus \{i\}$. \qquad/* \texttt{$i$ is  sub-optimal} */
			\STATE Activate a counter to eliminate $i$ from $C$ at round: $98~t$. \label{line:algo2}
			\ENDIF
			\ENDFOR
			\STATE \bf Increment $t$.
			\ENDWHILE
			\STATE  {\bf Return} $S$. 
		\end{algorithmic}
	\end{algorithm}
	
	\begin{remark}
		In Algorithm~\ref{algo:0}, we did not specify a method to perform the test: $\sup_{w \in \mathbb{B}_1^+(C_t)} \hat{\Gamma}_i(w, t, \delta)>0$. Several developments can be envisioned, such that using methods for convex optimization over a simplex.
	\end{remark}

	Finally, Theorem~\ref{thm:1} below provides guarantees on the strategy of Algorithm~\ref{algo:0}. Where tests are performed for each expert against convex combination of all arms. 
	
	
	\begin{theorem}\label{thm:1}
		Suppose Assumption~\ref{a:bounded} hold. Consider Algorithm~\ref{algo:0}, with input $\delta \in (0,1)$. With probability at least $1-\delta$: the procedure selects the best arm $i^*$, and the total number of queries $N$ satisfies:
		\[
		N \le c \log(K\Lambda\delta^{-1})~K \sum_{i \in \intr{K}\setminus \{i^*\}} \min_{\substack{\bm{w} \in \bm{B}_1^{+}:\\ \langle \bm{w}, \bm{\mu} \rangle > \mu_i }} \left\lbrace \frac{\Var\left(X_i - \langle \bm{w}, \bm{X}\rangle\right)}{( \langle \bm{w}, \bm{\mu}\rangle - \mu_i)^2}+\frac{\norm{\bm{w}-e_i}_1}{ \langle \bm{w}, \bm{\mu}\rangle - \mu_i}\right\rbrace.
		\]
	\end{theorem}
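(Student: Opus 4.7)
The plan is to mirror the strategy of the proof of Theorem~\ref{thm:2}, replacing the pairwise test statistic $\hat\Delta_{ij}$ by the combination statistic $\hat\Gamma_i(w,\cdot,\cdot)$ and the pairwise complexity $\Lambda_{ij}$ by $\Xi_i(w)$. I would first establish correctness by defining a favorable event $\mathcal{E}$ on which the concentration bound of Lemma~\ref{lem:conc2} holds uniformly over $(i,w,t)$ with $\delta_t = \delta/(2K^2 t(t+1))$. Since $\bm{B}_1^+$ is infinite, the uniform control in $w$ has to go through an $\varepsilon$-net covering of the simplex; the factor $K$ together with the $\norm{w-e_i}_1$ penalty term in the definition of $\hat\Gamma_i$ is tailored precisely to absorb this discretization cost. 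A union bound then yields $\mathbb{P}(\mathcal{E})\ge 1-\delta$, and on $\mathcal{E}$ whenever the algorithm eliminates $i$ at round $t$ via some $w\in\bm{B}_1^+(C_t)$ with $\hat\Gamma_i(w,t,\delta)>0$ we must have $\langle w,\bm{\mu}\rangle>\mu_i$, so $i\neq i^*$.

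For the sample complexity, I would prove the quantitative converse: on $\mathcal{E}$, for any sub-optimal $i$ and any fixed $w\in\bm{B}_1^+$ with $\langle w,\bm{\mu}\rangle>\mu_i$, as soon as $t\ge cK\log(K\Lambda\delta^{-1})\,\Xi_i(w)$ one has $\hat\Gamma_i(w,t,\delta)>0$. Consequently, arm $i$ is eliminated from $S$ at the first round $t$ for which some such $w$ exists with $\mathrm{supp}(w)\subseteq C_t$. The oversampling rule (line~\ref{line:algo2}) ensures that each arm is queried during at most $98$ times its elimination round, so summing $|C_t|\le K$ over all active rounds gives $N\lesssim K\log(K\Lambda\delta^{-1})\sum_{i\neq i^*}\Xi_i^*$ with $\Xi_i^*:=\min_{w\in\bm{B}_1^+}\Xi_i(w)$, which matches the stated bound up to the constant $3$ implicit in the definition of $\Xi_i$.

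The main obstacle, and the heart of the proof, is ensuring that a near-optimal weight vector has support contained in $C_t$ at the round where $i$ should be eliminated. Let $w^*_i\in\argmin_{w\in\bm{B}_1^+}\Xi_i(w)$. I would proceed by contradiction along the lines of the sketch of Theorem~\ref{thm:2}: suppose that at some round $t$ still containing $i\in S_t$, some coordinate $j\in\mathrm{supp}(w^*_i)$ has been removed from $C_t$. The oversampling rule then forces $t>98 s_j$ where $s_j$ is the elimination round of $j$; moreover some $w^{(j)}\in\bm{B}_1^+(C_{s_j})$ satisfies $\hat\Gamma_j(w^{(j)},s_j,\delta)>0$, which on $\mathcal{E}$ gives $s_j\gtrsim K\log(K\Lambda\delta^{-1})\,\Xi_j(w^{(j)})$. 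I would then replace $j$ in $w^*_i$ by the combination $w^{(j)}$, defining
\[
\tilde w_k = w^*_{i,k} + w^*_{i,j}\,w^{(j)}_k \text{ for } k\neq j,\qquad \tilde w_j = w^*_{i,j}\,w^{(j)}_j,
\]
so that $\tilde w\in\bm{B}_1^+$ and $\langle\tilde w,\bm{\mu}\rangle \ge \langle w^*_i,\bm{\mu}\rangle > \mu_i$.

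The key technical step, analogous to Lemma~\ref{lem:ultram}, will be an ultra-metric-like inequality of the form
\[
\Xi_i(\tilde w) \le c\,\max\bigl\{\Xi_i(w^*_i),\,\Xi_j(w^{(j)})\bigr\},
\]
obtained from the decomposition $\langle\tilde w,\bm{X}\rangle - X_i = (\langle w^*_i,\bm{X}\rangle - X_i) + w^*_{i,j}(\langle w^{(j)},\bm{X}\rangle - X_j)$ combined with $\Var(A+B)\le 2\Var(A)+2\Var(B)$, together with $\norm{\tilde w-e_i}_1 \le \norm{w^*_i-e_i}_1 + w^*_{i,j}\norm{w^{(j)}-e_j}_1$. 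Iterating this substitution over all $j\in\mathrm{supp}(w^*_i)\setminus C_t$ (there are at most $K$ of them) yields a weight $\tilde w$ with $\mathrm{supp}(\tilde w)\subseteq C_t$ whose $\Xi_i(\tilde w)$ is still controlled by $\Xi_i(w^*_i)$; combined with the time lower bounds on the $s_j$ and the assumption $i\in S_t$, this contradicts $t\ge cK\log(K\Lambda\delta^{-1})\Xi_i(w^*_i)$. The hardest part will be tracking the constants through possibly many iterated substitutions so that the ultra-metric inequality does not degrade with the size of $\mathrm{supp}(w^*_i)$, and calibrating the oversampling constant $98$ so that all numerical estimates close consistently.
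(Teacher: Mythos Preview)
Your overall architecture matches the paper's: work on the uniform concentration event $(\mathcal{B}_2)$ of Lemma~\ref{lem:conc2}, use Lemma~\ref{lem:elim_1} for correctness, and establish the two-sided estimate (Lemma~\ref{lem:ord_1}) that $\hat\Gamma_i(w,t,\delta)>0$ forces $t\gtrsim K\log(\delta_t^{-1})\Xi_i(w)$ while $\hat\Gamma_i(w,t,\delta)\le 0$ forces $t\lesssim K\log(\delta_t^{-1})\Xi_i(w)$. The substitution $\tilde w_k=w^*_{i,k}+w^*_{i,j}w^{(j)}_k$ is also exactly the paper's construction. Two points in your contradiction step, however, need repair.

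First, the ultra-metric requires no constant $c>1$. Your route via $\Var(A+B)\le 2\Var(A)+2\Var(B)$ is too crude; use instead the triangle inequality for the standard deviation,
\[
\sqrt{\Var(X_i-\langle v,\bm{X}\rangle)}\le\sqrt{\Var(X_i-\langle u,\bm{X}\rangle)}+\sqrt{\Var(\langle u-v,\bm{X}\rangle)},
\]
together with the mediant inequality $\frac{a+b}{c+d}\le\max\{a/c,b/d\}$ for positive reals. Since the gap also decomposes additively, $\langle v-e_i,\bm{\mu}\rangle=\langle u-e_i,\bm{\mu}\rangle+\langle v-u,\bm{\mu}\rangle$, and likewise for $\norm{\cdot}_1$, this yields the sharp inequality $\Xi_i(v)\le\max\{\Xi_i(u),\Xi_u(v)\}$ (Lemma~\ref{lem:ultram2}). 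With your substitution one in fact has the exact identity $\Xi_{w^*_i}(\tilde w)=\Xi_j(w^{(j)})$, because both the variance and the $\ell_1$-difference scale by the same factor $w^*_{i,j}$.

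Second, and more seriously, the iteration you propose need not terminate: after replacing $j$ by $w^{(j)}\in\bm{B}_1^+(C_{s_j})$, the support of $\tilde w$ lies in $(\text{supp}(w^*_i)\setminus\{j\})\cup\text{supp}(w^{(j)})$, and since $s_j<t/98$ one generally has $C_{s_j}\not\subseteq C_t$, so new coordinates outside $C_t$ may appear. The paper avoids iteration entirely by a different extremal choice: among all $w\in\Psi_i=\argmin\Xi_i$, take $w^*$ with \emph{largest} $\langle w,\bm{\mu}\rangle$, and let $j$ be the \emph{first} element of $\text{supp}(w^*)$ eliminated from $S$. Then $\text{supp}(w^*)\subseteq C_{98s_j}$, so $\hat\Gamma_i(w^*,98s_j,\delta)\le 0$, and the time comparison combined with the sharp ultra-metric gives $\Xi_j(w^{(j)})\le\Xi_i(w^*)$ and hence $\Xi_i(w')\le\Xi_i(w^*)$ after a \emph{single} substitution. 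Thus $w'\in\Psi_i$, yet $\langle w',\bm{\mu}\rangle>\langle w^*,\bm{\mu}\rangle$ because $\langle w^{(j)},\bm{\mu}\rangle>\mu_j$, contradicting the choice of $w^*$ in one step and sidestepping the blow-up you anticipate.
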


	\section{Concentration lemmas for bounded variables}
	
	Define the event $(\mathcal{B}_1)$ for pairwise comparisons:
	
	$\qquad\forall t \ge 2, \forall~i,j \in \intr{K}:$
	\begin{subnumcases}{\label{eq:eventa2} }
	\left| \left(\hat{\mu}_{i,t} - \hat{\mu}_{j,t}\right) - \left(\mu_i - \mu_j\right) \right| \le \alpha(t,\delta)\sqrt{2\widehat{V}_{ij,t}} + 6~\alpha^2(t,\delta)  \label{conc3}\\
	\lvert \sqrt{\widehat{V}_{ij,t}} - \sqrt{V_{ij}}\rvert \le  2\sqrt{2}\alpha(t, \delta) \label{conc4},
	\end{subnumcases}
	
	\noindent where $\widehat{V}_{ij,t}$ is the empirical variance of the sequence $(X_{i,t} - X_{j,t})_{t\ge 1}$, $V_{ij} = \Var(X_i - X_j)$, $\alpha(t,\delta) = \sqrt{\log(\delta_t^{-1})/(t-1)}$ and $\delta_t = 2K^2\delta/(t(t+1))$.
	
	\noindent Define the event $(\mathcal{B}_2)$ for comparisons with convex combinations:
	
	$\qquad\forall t \ge 2, \forall~i \in \intr{K}, \forall w \in \mathbb{B}_1^+(C_t)$:
	\begin{subnumcases}{\label{eq:eventa1}
	}
	\left| \left(\langle w, \hat{\bm{\mu}_t} \rangle -  \hat{\mu}_{i,t}\right) - \left(\langle w, \bm{\mu} \rangle - \mu_i\right) \right| \le  \sqrt{2K\widehat{V}_{t}(X_i -  \langle w, \bm{X} \rangle )}~\alpha(t,\delta)   + 7K\norm{w-e_i}_1 \alpha^2(t,\delta) \label{conc1}\\
	\lvert \sqrt{\widehat{V}_{t}(X_i -  \langle w, \bm{X} \rangle )} - \sqrt{\Var\left(X_i - \langle w, \bm{X} \rangle\right)} \rvert \le 3\sqrt{K}\norm{w-e_i}_1~ \alpha(t, \delta), \label{conc2}
	\end{subnumcases}
	
	\noindent where $\mathbb{B}_1^+(C_t )$ is defined in Section~\ref{sec:not} \eqref{def:B_1}.

	We show that events $(\mathcal{B}_1)$ and $(\mathcal{B}_2)$, defined in \eqref{conc1}, \eqref{conc2} and \eqref{conc3}, \eqref{conc4} respectively, hold with high probability.
	\begin{lemma}\label{lem:conc1}
		We have $\mathbb{P}(\mathcal{B}_1) \ge 1-3\delta$.
	\end{lemma}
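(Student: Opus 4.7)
The plan is to apply the empirical Bernstein inequality (the authors' Theorem~\ref{conc:bern}) to the difference sequences $(Z_t^{ij})_{t \ge 1}$ with $Z_t^{ij} := X_{i,t} - X_{j,t}$, and then take a union bound over pairs $(i,j) \in \intr{K}^2$ and over time $t \ge 2$.

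First I would fix $(i,j)$ and $t$. By Assumption~\ref{a:0}, the sequence $(Z_t^{ij})_t$ is i.i.d., and by Assumption~\ref{a:bounded} it is valued in $[-1,1]$ with mean $\mu_i-\mu_j$ and variance $V_{ij}$; moreover the U-statistic $\widehat V_{ij,t}$ defined in \eqref{eq:def_sample_v} is exactly the empirical variance of $Z_t^{ij}$. Applying Theorem~\ref{conc:bern} to this sequence at confidence $\delta_t = \delta/(2K^2t(t+1))$ yields, on an event of probability at least $1-3\delta_t$, simultaneously the two-sided mean deviation bound \eqref{conc3} and the two-sided standard-deviation bound \eqref{conc4}. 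The constants $\tfrac{3}{2}\sqrt{2}$, $6$ and $2\sqrt 2$ appearing in the two displays are the direct translation of Maurer--Pontil to a range-two variable; no calculation beyond substituting $\alpha(t,\delta)=\sqrt{\log(\delta_t^{-1})/(t-1)}$ is required.

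Second, I would eliminate $t$ by a union bound. Using the telescoping identity $\sum_{t\ge 2} \frac{1}{t(t+1)} \le 1$, the failure probability for a fixed pair $(i,j)$ is at most
\[
\sum_{t\ge 2} 3\delta_t \;=\; \frac{3\delta}{2K^2}\sum_{t\ge 2}\frac{1}{t(t+1)} \;\le\; \frac{3\delta}{2K^2}.
\]
Third, a further union bound over the at most $K^2$ pairs $(i,j)$ yields a total failure probability of at most $\tfrac{3}{2}\delta \le 3\delta$, which gives $\mathbb P(\mathcal B_1)\ge 1-3\delta$.

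There is no real obstacle here; the only thing to be careful about is tracking the factor $3$: it comes from combining a two-sided mean bound (two tail events) and a one-sided upper bound on $|\sqrt{\widehat V_{ij,t}}-\sqrt{V_{ij}}|$ (obtained from the empirical variance concentration), for a total of three tail events per $(i,j,t)$. Everything else is bookkeeping around the choice of $\delta_t$, which was tuned precisely so that the double sum over $(i,j)$ and $t$ collapses to a constant multiple of $\delta$.
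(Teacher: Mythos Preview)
Your approach is essentially the same as the paper's: apply the empirical Bernstein inequality to the difference sequence $(X_{i,s}-X_{j,s})_s$ and take a union bound over $(i,j)$ and $t\ge 2$.

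One minor correction on attribution. As stated in the paper, Theorem~\ref{conc:bern} only delivers the mean deviation bound \eqref{conc3}; it does not by itself give the standard-deviation concentration \eqref{conc4}. The paper obtains \eqref{conc4} from a separate result, Theorem~\ref{thm:concvar} (the Maurer--Pontil variance concentration). So when you write that ``applying Theorem~\ref{conc:bern} \ldots\ yields \ldots\ simultaneously \eqref{conc3} and \eqref{conc4},'' that is not accurate for the version of the theorem recorded here; you need to invoke both theorems. Relatedly, your accounting of the factor~$3$ (two tails for the mean plus one for the variance) does not match how the constants arise: Theorem~\ref{conc:bern} already carries a factor~$3$ for the two-sided mean bound alone, and Theorem~\ref{thm:concvar} contributes two further one-sided events for \eqref{conc4}. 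The paper's own proof is terse on this bookkeeping, so your level of detail is comparable; just make sure the two inequalities are sourced to the right results.
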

	
	\begin{proof}
		The first inequality is a direct consequence of empirical Bernstein's inequality (Theorem~\ref{conc:bern}) applied to the sequence of i.i.d variables $(X_{i,s} - X_{j,s})_{s \le t}$, and using a union bound over $i,j \in \intr{K}$ and $t \ge 2$.
		The second inequality of event $(\mathcal{B}_1)$ is a direct consequence of Theorem~\ref{thm:concvar}.

	\end{proof}
	
	\begin{lemma}\label{lem:conc2}
		We have $\mathbb{P}(\mathcal{B}_2) \ge 1-4\delta$.
	\end{lemma}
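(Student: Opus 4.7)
The plan is to derive the uniform deviation bounds in~(\ref{conc1}) and~(\ref{conc2}) by applying, at every fixed $w \in \bB_1^+$, the empirical Bernstein inequality (Theorem~\ref{conc:bern}) and the empirical variance concentration bound (Theorem~\ref{thm:concvar}) to the one-dimensional i.i.d.\ sequence
\[
Z_s(w) := X_{i,s} - \langle w, \bm{X}_s\rangle = \langle e_i - w, \bm{X}_s\rangle, \qquad s = 1, \ldots, t,
\]
and then to extend the control uniformly in $w$ by means of a finite $\ell_1$-net of the simplex together with a pathwise Lipschitz/discretization argument. The key structural observation is that $\bm{X}_s \in [0,1]^K$ implies that $Z_s(w)$ takes values in an interval of length exactly $\|w-e_i\|_1 = 2(1-w_i)$, which is precisely the scale appearing in the subexponential remainder of~(\ref{conc1}) and in the standard-deviation bound~(\ref{conc2}).

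For each fixed $w$, Theorem~\ref{conc:bern} applied to $(Z_s(w))_s$ at confidence $\delta'$ yields a pointwise version of~(\ref{conc1}) with main term $\sqrt{2\hat V_t(Z(w))\log(2/\delta')/(t-1)}$ and a remainder proportional to $\|w-e_i\|_1\log(2/\delta')/(t-1)$, while Theorem~\ref{thm:concvar} analogously gives the pointwise version of~(\ref{conc2}). To upgrade to uniform control I would fix a deterministic $\eta$-net $\mathcal N_\eta \subset \bB_1^+$ in $\ell_1$ distance, for which standard simplex covering estimates give $\log|\mathcal N_\eta| \le (K-1)\log(c/\eta)$. Applying the pointwise bounds at confidence $\delta_t/|\mathcal N_\eta|$ and union-bounding over $w \in \mathcal N_\eta$, $i \in \intr{K}$ and $t \ge 2$ (the latter absorbed by the choice $\delta_t = \delta/(2K^2 t(t+1))$) inflates $\alpha(t,\delta)^2 = \log(\delta_t^{-1})/(t-1)$ by an additive $\log|\mathcal N_\eta|/(t-1)$. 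Choosing $\eta$ polynomial in $\delta_t$ makes this net contribution at most $(K-1)\alpha(t,\delta)^2$, so that the pointwise bounds upgrade to $\sqrt{2K\hat V_t(Z(w))}\,\alpha(t,\delta)$ in the main term and $cK\|w-e_i\|_1\alpha(t,\delta)^2$ in the remainder, uniformly over $w \in \mathcal N_\eta$.

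To pass from $\mathcal N_\eta$ to an arbitrary $w \in \bB_1^+(C_t)$, I pick $w' \in \mathcal N_\eta$ with $\|w-w'\|_1 \le \eta$. The pathwise identity $|Z_s(w)-Z_s(w')| = |\langle w'-w, \bm{X}_s\rangle| \le \|w-w'\|_1 \le \eta$ (using $\bm{X}_s \in [0,1]^K$) immediately transfers to both the empirical and the true mean; a direct expansion of the sample variance additionally yields $|\hat V_t(Z(w)) - \hat V_t(Z(w'))| \le c(\eta\|w-e_i\|_1 + \eta^2)$. Taking $\eta$ small enough (polynomial in $1/t$) makes these perturbations negligible compared with the slack already present in~(\ref{conc1}) and~(\ref{conc2}). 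The probability budget of $4\delta$ then splits as $\delta$ for each of the two directions of~(\ref{conc1}) and $\delta$ each for the two directions of~(\ref{conc2}), all absorbed by the choice of $\delta_t$ through the union bound.

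The main obstacle I expect is the precise calibration of the constants $\sqrt{2K}$ in~(\ref{conc1}) and $3\sqrt K$ in~(\ref{conc2}): these have to emerge exactly from the logarithm of the net cardinality rather than merely up to an unspecified multiplicative constant. A careful choice of $\eta$ together with the sharpest form of Theorem~\ref{conc:bern}, and a symmetric split of the confidence budget across the two directions of each deviation, should accomplish this. The perturbation bound for $\hat V_t(Z(w))$ under small $\ell_1$ changes in $w$ requires slight additional care because of its quadratic nature, but is otherwise routine.
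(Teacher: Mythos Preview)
Your overall architecture---apply the empirical Bernstein bound and the empirical-variance concentration pointwise, union-bound over an $\ell_1$-net and over $t$ and $i$, then pass from the net to arbitrary $w$ by a Lipschitz/discretization step---is exactly the paper's strategy. The calibration you describe (net of size $(3/\eta)^K$ with $\eta$ a fixed power of $\delta_t$, so that $\log|\mathcal N_\eta|$ contributes the missing factor $K-1$ to $\log(\delta_t^{-1})$ and yields the $\sqrt K$ in the main term) is also the right mechanism, and matches the paper's choice $\epsilon_t = 3\,\delta_t^{(K-1)/K}$, which produces $\alpha(t,\delta/|\mathcal N_t|) = \sqrt{K}\,\alpha(t,\delta)$ exactly.

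There is, however, a genuine gap in the discretization step as you set it up. You cover the simplex $\bB_1^+$ of weight vectors $w$ at a fixed resolution $\eta$. But the target bounds~\eqref{conc1}--\eqref{conc2} scale with $\|w-e_i\|_1$ on the right-hand side, and this factor can be arbitrarily small (it vanishes at $w=e_i$). When you transfer the bound from a net point $w'$ to a nearby $w$, the perturbation you incur in the mean and in the empirical standard deviation is of order $\eta$, \emph{independent of} $\|w-e_i\|_1$. Concretely, the residual you pick up is
\[
\eta\bigl(1 + \sqrt{2K}\,\alpha(t,\delta) + cK\alpha^2(t,\delta)\bigr),
\]
and there is no uniform choice of $\eta$ (polynomial in $\delta_t$ or otherwise) that makes this dominated by $7K\|w-e_i\|_1\,\alpha^2(t,\delta)$ simultaneously for all $w$. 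So covering $\bB_1^+$ directly yields at best a remainder $cK\alpha^2(t,\delta)$ without the factor $\|w-e_i\|_1$, which is strictly weaker than~\eqref{conc1} and would propagate as a weaker complexity $\Xi_i(w)$ downstream.

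The paper resolves this by first \emph{normalizing}: set $u := (w-e_i)/\|w-e_i\|_1 \in \bB_1$ (the full unit $\ell_1$-ball, not the simplex), observe that $\langle u,\bm X_s\rangle$ is then uniformly bounded by $1$ regardless of $w$, and prove the scale-free inequalities
\[
\bigl|\langle u,\hat{\bm\mu}_t-\bm\mu\rangle\bigr| \le \sqrt{2K\,\hat V_t(\langle u,\bm X\rangle)}\,\alpha(t,\delta) + 7K\,\alpha^2(t,\delta),
\qquad
\bigl|\sqrt{\hat V_t(\langle u,\bm X\rangle)}-\sqrt{\Var(\langle u,\bm X\rangle)}\bigr|\le 3\sqrt{K}\,\alpha(t,\delta),
\]
uniformly over $u\in\bB_1$ by exactly your covering argument. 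Here the discretization error $\epsilon_t$ competes only against the absolute terms $\alpha^2$ and $\sqrt K\,\alpha$, which works. The $\|w-e_i\|_1$ factors in~\eqref{conc1}--\eqref{conc2} then fall out \emph{for free} from the homogeneity when you substitute back $u=(w-e_i)/\|w-e_i\|_1$. This normalization is the missing idea in your plan; once you insert it, everything else you wrote goes through.
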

	
	\begin{proof}
		Let $\bm{B}_1$ denote the unit ball with respect to $\norm{.}_1$ in $\mathbb{R}^K$. We will show that:
		$\qquad\forall t \ge 2, \forall~i \in \intr{K}, \forall \bm{u} \in \mathbb{B}_1$:
		\begin{subnumcases}{\label{eq:eventa3}
		}
		\left| \langle \bm{u}, \hat{\bm{\mu}_t}-\bm{\mu} \rangle \right| \le  \sqrt{2K\widehat{V}_{t}(  \langle \bm{u}, \bm{X} \rangle )}~\alpha(t,\delta)   + 7K \alpha^2(t,\delta) \label{conc1u}\\
		\lvert \sqrt{\widehat{V}_{t}(  \langle \bm{u}, \bm{X} \rangle )} - \sqrt{\Var\left( \langle \bm{u}, \bm{X} \rangle\right)} \rvert \le 3\sqrt{K}~ \alpha(t, \delta), \label{conc2u}.
		\end{subnumcases}
		The result follows by taking $\bm{u} = (w-e_i)/\norm{w-e_i}_1$ or $\bm{u}=e_i$.

		We use a standard covering argument. Recall that the $\epsilon-$covering number for $\bm{B_1}$, with respect to $\norm{.}_1$, is upper bounded by $(3/\epsilon)^K$ (Lemma 5.7 in \citealp{wainwright2019high}). 
		
		Fix $\delta \in (0,1)$. For each $i \in \intr{K}$, $t\ge 1$, let $\epsilon_t>0$ be a parameter to be specified later. 
		Let $\mathcal{N}_t$ be an $\epsilon_t$-cover of the set of $\mathbb{B}_1$, with respect to $\norm{.}_1$. We will first prove that the event defined in the beginning of the proof is true for all $u\in \mathcal{N}_t$, then using the triangle inequality, we will prove the inequality for any $u\in \mathbb{B}_1$.
		
		Let $i \in \intr{K}$ and $u \in \mathcal{N}_t$. Applying Theorem~\ref{conc:bern} to the sequence of i.i.d variables $(\langle u, \bm{X}_s \rangle)_{s \le t}$ bounded by $1$ 
		, we have with probability at least $1 - \delta$, 
		\[
		\left| \langle u, \hat{\bm{\mu}_t}-\bm{\mu} \rangle  \right| \le   \sqrt{\frac{2\log(3\delta^{-1})\hat{V}_{t}(  \langle u, \bm{X} \rangle )}{t}}   + 6 \frac{\log(3\delta^{-1})}{t},
		\]
		where $\hat{V}_{t}( \langle u, \bm{X} \rangle )$ denotes the empirical variance of $( \langle u, \bm{X} \rangle)$ at round $t$.
		Using a union bound over $t \ge 1$, $i \in \intr{K}$ and $u\in \mathcal{N}_t$, we have with probability at least $1 - \delta$: $\forall t \ge 1, i \in \intr{K}, u \in \mathcal{N}_t$:
		\begin{align}
		\left| \langle u, \hat{\bm{\mu}_t}-\bm{\mu} \rangle  \right| &\le  \sqrt{2}~\sqrt{\frac{\log(\lvert \mathcal{N}_t \rvert\delta_t^{-1})}{t-1}\hat{V}_{t}(  \langle u, \bm{X} \rangle )}   + 6 \frac{\log(\lvert \mathcal{N}_t \rvert\delta_t^{-1})}{t-1} \notag \\
		&\le \alpha(t, \delta/\lvert \mathcal{N}_t \rvert)\sqrt{2\hat{V}_{t}(\langle u, \bm{X} \rangle )}   + 6 \alpha^2(t, \delta/\lvert \mathcal{N}_t \rvert), \label{eq:b2_1}
		\end{align}
		where $\delta_t = \delta/(2K^2t(t+1))$.
		
		Applying Theorem~\ref{thm:concvar} to the sequence $(\langle u, \bm{X} \rangle)$ at round $t$, we have with probability at least $1-2\delta$:
		\[
		\lvert \sqrt{\hat{V}_t( \langle u, \bm{X} \rangle)} - \sqrt{\Var( \langle u, \bm{X} \rangle)} \rvert \le 2\sqrt{\frac{2\log(\delta^{-1})}{t-1}}.
		\] 
		Now, we use a union bound over $t \ge 1$, $i \in \intr{K}$  and $u \in \mathcal{N}_t$ to obtain with probability at least $1-\delta$: $\forall t \ge 1, i \in \intr{K}, u \in \mathcal{N}_t$
		\begin{equation}\label{eq:b2_2}
		\lvert \sqrt{\hat{V}_t(\langle u, \bm{X} \rangle)} - \sqrt{\Var( \langle u, \bm{X} \rangle)} \rvert \le 2\sqrt{2}\alpha(t, \delta/\lvert \mathcal{N}_t \rvert).
		\end{equation}
		
		\noindent To wrap up, fix $t\ge 1$ and let $u' \in \mathbb{B}_1$. Since $\mathcal{N}_t$ is a covering for $\mathbb{B}_1$, we have: $\exists u \in \mathcal{N}_t$ such that $\norm{u' - u}_1 \le \epsilon_t$. 
		
		\noindent Hence 
		\begin{align*}
		\left| \langle u', \hat{\bm{\mu}}_t - \bm{\mu}\rangle \right| &\le \left| \langle u, \hat{\bm{\mu}}_t - \bm{\mu}\rangle \right| + \left| \langle  u-u', \hat{\bm{\mu}}_t - \bm{\mu} \rangle \right|\\
		&\le \sqrt{2\hat{V}_{t}( \langle u, \bm{X} \rangle )} \alpha(t, \delta/\lvert \mathcal{N}_t \rvert)   + 6 \alpha^2(t, \delta/\lvert \mathcal{N}_t \rvert) + \epsilon_t,
		\end{align*}
		where we used \eqref{eq:b2_1} and $\norm{u-u'}_1 \le \epsilon_t$. Moreover, we have
		\begin{align*}
		\sqrt{\hat{V}_{t}( \langle u, \bm{X} \rangle )} &\le \sqrt{\hat{V}_{t}( \langle u', \bm{X} \rangle )} + \sqrt{\hat{V}_{t}(  \langle u-u', \bm{X} \rangle )}\\
		&\le \sqrt{\hat{V}_{t}( \langle u', \bm{X} \rangle )} + \norm{u-u'}_1 \\
		&\le  \sqrt{\hat{V}_{t}(\langle u', \bm{X} \rangle )} + \epsilon_t.
		\end{align*}
		Therefore 
		\begin{align}
		\left| \langle u', \hat{\bm{\mu}}_t - \bm{\mu}\rangle \right| &\le \sqrt{2\hat{V}_{t}(\langle u', \bm{X} \rangle )}~\alpha(t, \delta/\lvert \mathcal{N}_t \rvert)   \notag\\
		&\qquad \quad  + 6 \alpha^2(t, \delta/\lvert \mathcal{N}_t \rvert) + \epsilon_t \left(1+\sqrt{2} \alpha(t, \delta/\lvert \mathcal{N}_t \rvert)\right).\label{eq:inter0} 
		\end{align}

		\noindent We choose $\epsilon_t = 3\delta_t^{\frac{K-1}{K}} = 3\left( \frac{\delta}{2K^2t(t+1)}\right)^{\frac{K-1}{K}}$, therefore
		\begin{align}
		\alpha(t, \delta/\lvert \mathcal{N}_t \rvert) &= \sqrt{\frac{\log\left((3/\epsilon_t)^K\delta_t^{-1}\right)}{t-1}}\notag\\
		&=\sqrt{\frac{\log\left(\delta_t^{-(K-1)}\delta_t^{-1}\right)}{t-1}}\notag\\
		&= \sqrt{K}~\alpha(t,\delta)\label{eq:inter1}. 
		\end{align}

		\noindent Furthermore, we have
		\begin{align}
		\epsilon_t \left(1+\sqrt{2} \alpha(t, \delta/\lvert \mathcal{N}_t \rvert)\right) &\le 3(\delta_t)^{\frac{K-1}{K}} \left(1+\sqrt{K}~ \alpha(t, \delta)\right)\notag\\
		&\le 3(\delta_t)^{1/2} \left(1+\sqrt{K}~ \alpha(t, \delta)\right)\notag\\
		&\le 3\delta^{1/2}\left(\frac{1}{K\log(\delta_t^{-1})}+\frac{2\sqrt{K}}{K\sqrt{(t-1)\log(\delta_t^{-1})}} \right) \frac{\log(\delta_t^{-1})}{t-1}\notag\\
		&\le \alpha^2(t,\delta). \label{eq:inter2}
		\end{align}
		\noindent We plug \eqref{eq:inter1} and \eqref{eq:inter2} into \eqref{eq:inter0}, and obtain that with probability at least $1-\delta$
		\begin{equation}\label{eq:conc_1}
		\left| \langle u', \hat{\bm{\mu}}_t - \bm{\mu}\rangle \right| \le  \sqrt{2K\widehat{V}_{t}(  \langle u', \bm{X} \rangle )} \alpha(t, \delta)   + 7 K \alpha^2(t, \delta). 
		\end{equation}
		We proceed similarly for the second concentration inequality. We have with probability at least $1-\delta$
		\begin{align}
		\lvert \sqrt{\widehat{V}_t( \langle u', \bm{X} \rangle)} - \sqrt{\Var( \langle u', \bm{X} \rangle)} \rvert &\le  \lvert \sqrt{\widehat{V}_t( \langle u, \bm{X} \rangle)} - \sqrt{\Var( \langle u, \bm{X} \rangle)} \rvert + \epsilon_t\notag\\
		&\le  2\sqrt{2}\alpha(t, \delta/\lvert \mathcal{N}_t \rvert) + \epsilon_t \notag\\
		&\le 3\sqrt{K} \alpha(t, \delta) \label{eq:conc_2}.
		\end{align}
		
		\noindent	We conclude by combining \eqref{eq:conc_1} and \eqref{eq:conc_2}.
	\end{proof}

	\section{Concentration lemmas for Gaussian variables}
	
	Recall the definition $\alpha(t,\delta) = \sqrt{\log(2K^2t(t+1)\delta^{-1})/(t-1)}$. Define the function $f$ for positive numbers as follows:
	
	\[
	f(x) := \left\{
	\begin{array}{ll}
	\exp(2x+1) & \mbox{if } x \ge 1/3 \\
	\frac{1}{1-2x} & \mbox{otherwise, }
	\end{array}
	\right.
	\]


	Define the event $(\mathcal{A}_1)$:$\qquad\forall t \ge 2$, $\forall~i,j \in \intr{K}:$
	\begin{subnumcases}{\label{eq:eventa2-G} }
	\left| \left(\hat{\mu}_{i,t} - \hat{\mu}_{j,t}\right) - \left(\mu_i - \mu_j\right) \right| \le \sqrt{2f(\alpha(t,\delta))\widehat{V}_{ij,t}}~\alpha(t,\delta)  \label{conc3-G}\\
	\widehat{V}_{ij,t}  \le  V_{ij} \left(1+2\alpha(t,\delta)+2\alpha^2(t,\delta)\right) \label{conc4-G}\\
	V_{ij} \le \widehat{V}_{ij,t}~f(\alpha(t,\delta)). \label{conc5-G}
	\end{subnumcases}
	
	Lemma below shows that event $(\mathcal{A}_1)$ defined above holds with high probability.
	\begin{lemma}\label{lem:conc_g}
		We have $\mathbb{P}(\mathcal{A}_1) \ge 1-4\delta$.
	\end{lemma}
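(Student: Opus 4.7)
The plan is to prove the three inequalities that constitute event $(\mathcal{A}_1)$ separately for fixed $i,j,t$, then take a union bound over $i,j\in\intr{K}$ and $t\ge 2$ using the inflation factor $\delta_t = \delta/(2K^2 t(t+1))$, which sums to a constant.

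First I would exploit the Gaussian structure: under Assumption~\ref{a:gaussian}, the sequence $(D_s)_{s\ge1} := (X_{i,s}-X_{j,s})_{s\ge1}$ is i.i.d.\ $\mathcal{N}(\mu_i-\mu_j, V_{ij})$. Using the identity $\sum_{u<v}(D_u-D_v)^2 = t\sum_u (D_u-\bar D)^2$, one sees that $\widehat V_{ij,t}$ is exactly the unbiased sample variance, so
\[
\frac{(t-1)\widehat V_{ij,t}}{V_{ij}}\sim \chi^2_{t-1}\quad\text{and is independent of}\quad \hat\mu_{i,t}-\hat\mu_{j,t}\sim \mathcal{N}\Bigl(\mu_i-\mu_j,\frac{V_{ij}}{t}\Bigr).
\]

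For \eqref{conc4-G}, I would apply the Laurent--Massart upper-tail bound $\mathbb{P}(\chi^2_n \ge n + 2\sqrt{nx}+2x)\le e^{-x}$ with $n=t-1$ and $x=\log(\delta_t^{-1})$; dividing by $n$ yields the factor $1+2\alpha(t,\delta)+2\alpha^2(t,\delta)$ exactly.

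For \eqref{conc5-G}, I use the Chernoff bound $\mathbb{P}(\chi^2_n\le n\beta)\le \beta^{n/2}e^{n(1-\beta)/2}=\exp(-(n/2)h(\beta))$ with $h(\beta):=\beta-1-\log\beta$. The target is $h(\beta)\ge 2\alpha^2$, which I invert into $V_{ij}\le \widehat V_{ij,t}/\beta$. For $\alpha<1/3$ I plug in $\beta = 1-2\alpha$ and use $h(1-2\alpha)=-2\alpha-\log(1-2\alpha)\ge 2\alpha^2$ (since $-\log(1-u)\ge u+u^2/2$), giving $f(\alpha)=1/(1-2\alpha)$. For $\alpha\ge 1/3$ I plug in $\beta=e^{-(2\alpha+1)}$, so $h(\beta)=e^{-(2\alpha+1)}-1+(2\alpha+1)\ge 2\alpha$, and the bound $2\alpha\ge 2\alpha^2$ in the relevant regime delivers $f(\alpha)=\exp(2\alpha+1)$; the case split at $1/3$ is precisely the crossover where $1/(1-2\alpha)$ stops dominating.

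For \eqref{conc3-G}, the standard Gaussian tail gives
\[
\mathbb{P}\bigl(|(\hat\mu_{i,t}-\hat\mu_{j,t})-(\mu_i-\mu_j)|>\sqrt{2V_{ij}/t}\sqrt{\log(\delta_t^{-1})}\bigr)\le \delta_t,
\]
and $\sqrt{V_{ij}/t}\le \sqrt{V_{ij}/(t-1)}$ upgrades this to the bound $\sqrt{2V_{ij}}\,\alpha(t,\delta)$. Combining this event with \eqref{conc5-G} replaces $V_{ij}$ by $\widehat V_{ij,t} f(\alpha(t,\delta))$, producing \eqref{conc3-G}. Finally I union-bound: each of the three events fails for a given $(i,j,t)$ with probability at most $\delta_t$, and $\sum_{t\ge 2}\sum_{i,j}3\delta_t\le 3K^2\cdot\frac{\delta}{2K^2}\sum_{t\ge 2}\frac{1}{t(t+1)}\le \frac{3\delta}{2}\le 4\delta$.

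The main obstacle is the subtle verification of the $f$ formula in \eqref{conc5-G}: the Chernoff lower-tail bound does not come pre-packaged in a multiplicative form, so I must carefully check that the proposed $\beta$ satisfies $h(\beta)\ge 2\alpha^2$ in each regime, and that the split at $\alpha=1/3$ matches the exponent $2\alpha+1$. The independence of $\hat\mu$ and $\widehat V$ is convenient but not strictly needed, since I only use the three bounds through a union bound rather than jointly.
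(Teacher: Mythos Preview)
Your approach is essentially the paper's: it too reduces \eqref{conc4-G} and \eqref{conc5-G} to chi-squared tail bounds for $(t-1)\widehat V_{ij,t}/V_{ij}$ (packaged there as Lemma~\ref{lem:conc_var_g}, i.e.\ Laurent--Massart plus a small-ball estimate), and obtains \eqref{conc3-G} from the Gaussian tail bound on $\hat\mu_{i,t}-\hat\mu_{j,t}$ which is then upgraded to an empirical-variance form by plugging in \eqref{conc5-G}. The union-bound bookkeeping is the same.

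There is, however, one concrete gap in your verification of \eqref{conc5-G}. You claim that for $\alpha\ge 1/3$ one has $h(e^{-(2\alpha+1)})\ge 2\alpha\ge 2\alpha^2$ ``in the relevant regime'', but $2\alpha\ge 2\alpha^2$ fails as soon as $\alpha>1$, and this certainly occurs (already $t=2$ gives $\alpha(2,\delta)=\sqrt{\log(1/\delta_2)}>1$). In fact this reflects a typo in the paper's displayed definition of $f$: the proof of Lemma~\ref{lem:ord_0} later uses $f(\alpha)=\exp(2\alpha^2+1)$ for $\alpha\ge 1/3$, and the paper's own derivation here yields $\widehat V_{ij,t}\ge V_{ij}\,e^{-1}\exp(-2\alpha^2)$, which inverts to $V_{ij}\le \widehat V_{ij,t}\exp(2\alpha^2+1)$. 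With the corrected $f$ your Chernoff argument goes through cleanly by taking $\beta=e^{-(2\alpha^2+1)}$, since then $h(\beta)=e^{-(2\alpha^2+1)}+2\alpha^2\ge 2\alpha^2$. As a minor aside, the two-sided Gaussian tail you quote needs either a factor $2$ in the failure probability or $\log(2/\delta_t)$ under the root; this is harmless for the final $1-4\delta$ conclusion.
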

	\begin{proof}
		We start by proving inequalities \eqref{conc4-G} and \eqref{conc5-G}. We use Lemma~\ref{lem:conc_var_g} with $Y_t = (X_{i,t}-X_{j,t} - (\mu_i - \mu_j))/V_{ij}$. A union bound over $t\ge 2$ and $i,j \in \intr{K}$ gives with probability at least $1-\delta$
		\[
		\widehat{V}_{ij,t}  \le  V_{ij} \left(1+2\alpha(t,\delta)+2\alpha^2(t,\delta)\right).
		\]
		For inequality \eqref{conc5-G}, we apply the first result of Lemma~\ref{lem:conc_var_g} to the $(Y_t)$ defined above. Using a union bound, we have with probability at least $1-\delta$ 
		\[
		\widehat{V}_{ij,t} \ge V_{ij} \max\left\lbrace 1-2 \alpha(t,\delta); e^{-1} \exp\left(-2\alpha^2(t,\delta)\right) \right\rbrace.
		\]
		Inverting the inequality above leads to \eqref{conc5-G}.
		
		To prove \eqref{conc3-G}, we use Chernoff's concentration bound for Gaussian variables (and a union bound), with probability at least $1-\delta$: for all $t\ge 2$ and $i,j \in \intr{K}$: 
		\begin{equation}\label{eq:gg1}
		\lvert (\hat{\mu}_{i,t} - \hat{\mu}_{j,t}) - \left(\mu_i - \mu_j\right) \rvert \le \sqrt{2V_{ij}} \alpha(t,\delta).
		\end{equation}
		We plug-in the bound \eqref{conc5-G} to obtain the result.	
		
	\end{proof}

	\section{Key lemmas}

	\begin{lemma}\label{lem:elim_2}
		If $(\mathcal{B}_1)$ defined in \eqref{eq:eventa2} holds, we have the following: 
		
		For any $i \in \intr{K}$, if there exists $t \ge 1$ and $j \in \intr{K}$ such that $\hat{\Delta}_{ij}(t,\delta) > 0$, then $i \neq i^*$.
		
		Moreover, if $(\mathcal{A}_1)$ defined in \eqref{eq:eventa2} holds, we have the following: 
		
		For any $i \in \intr{K}$, if there exists $t \ge 2$ and $j \in \intr{K}$ such that $\hat{\Delta}'_{ij}(t,\delta) > 0$, then $i \neq i^*$.

	\end{lemma}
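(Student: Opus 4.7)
The proof is a direct algebraic consequence of the concentration inequalities \eqref{conc3} and \eqref{conc3-G}, together with the particular numerical coefficients chosen in the definitions of $\hat{\Delta}_{ij}$ and $\hat{\Delta}'_{ij}$. The plan is to show the strict inequality $\mu_j > \mu_i$, which, combined with the uniqueness assumption in Assumption~\ref{a:0}, immediately implies that $i \neq i^*$.

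For the bounded case, I assume $(\mathcal{B}_1)$ holds and that $\hat{\Delta}_{ij}(t,\delta) > 0$ for some $t \ge 2$ and $j \in \intr{K}$. Unfolding the definition gives
\[
\hat{\mu}_{j,t} - \hat{\mu}_{i,t} > \tfrac{3}{2}\alpha(t,\delta)\sqrt{2\widehat{V}_{ij,t}} + 9\,\alpha^2(t,\delta).
\]
Applying \eqref{conc3} in the form $\hat{\mu}_{j,t} - \hat{\mu}_{i,t} \le (\mu_j - \mu_i) + \alpha(t,\delta)\sqrt{2\widehat{V}_{ij,t}} + 6\,\alpha^2(t,\delta)$ and subtracting yields
\[
\mu_j - \mu_i > \tfrac{1}{2}\alpha(t,\delta)\sqrt{2\widehat{V}_{ij,t}} + 3\,\alpha^2(t,\delta) \ge 3\,\alpha^2(t,\delta) > 0,
\]
so $\mu_j > \mu_i$ and hence $i$ cannot be the unique optimal arm.

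For the Gaussian case, I proceed identically under $(\mathcal{A}_1)$: from $\hat{\Delta}'_{ij}(t,\delta) > 0$ I obtain $\hat{\mu}_{j,t} - \hat{\mu}_{i,t} > \tfrac{3}{2}\alpha(t,\delta)\sqrt{2f(\alpha(t,\delta))\widehat{V}_{ij,t}}$, and then use \eqref{conc3-G} to get
\[
\mu_j - \mu_i > \tfrac{1}{2}\alpha(t,\delta)\sqrt{2f(\alpha(t,\delta))\widehat{V}_{ij,t}} \ge 0.
\]
The only subtlety is that this inequality is not strict when $\widehat{V}_{ij,t}=0$. However, in that case \eqref{conc3-G} forces $\hat{\mu}_{j,t} - \hat{\mu}_{i,t} = \mu_j - \mu_i$ exactly, and the hypothesis $\hat{\Delta}'_{ij}(t,\delta)>0$ reduces to $\hat{\mu}_{j,t} > \hat{\mu}_{i,t}$, which directly gives $\mu_j > \mu_i$.

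There is no real obstacle in this proof: it is purely a verification that the multiplicative slack $3/2$ versus $1$ on the $\sqrt{\widehat{V}_{ij,t}}$ term and the additive slack $9$ versus $6$ on the $\alpha^2$ term in the definition of $\hat{\Delta}_{ij}$ (and similarly the $3/2$ versus $1$ factor in $\hat{\Delta}'_{ij}$) were picked precisely so that a positive test statistic certifies strict suboptimality of $i$ on the high-probability events.
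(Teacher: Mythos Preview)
Your proof is correct and follows essentially the same approach as the paper: both arguments rewrite $\mu_j-\mu_i$ in terms of $\hat{\Delta}_{ij}(t,\delta)$ (resp.\ $\hat{\Delta}'_{ij}(t,\delta)$) plus the slack left over after applying the concentration bound \eqref{conc3} (resp.\ \eqref{conc3-G}), and observe that the slack is nonnegative. One minor remark: your separate treatment of the case $\widehat{V}_{ij,t}=0$ in the Gaussian part is unnecessary, since the chain $\mu_j-\mu_i > \tfrac{1}{2}\alpha(t,\delta)\sqrt{2f(\alpha(t,\delta))\widehat{V}_{ij,t}} \ge 0$ already carries a strict inequality on the left regardless of whether the right-hand side vanishes; the paper simply writes $\mu_j-\mu_i \ge \hat{\Delta}'_{ij}(t,\delta) > 0$ and avoids the issue altogether.
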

	
	\begin{proof}
		Suppose that $(\mathcal{B}_1)$ is true. Let $t \ge 2$, $i,j \in \intr{K}$. We have
		\begin{align*}
		\mu_j - \mu_i &= \hat{\Delta}_{ij}(t,\delta) + \mu_j - \mu_i - (\hat{\mu}_{j,t} - \hat{\mu}_{i,t} ) + \frac{3}{2}\sqrt{2\widehat{V}_{ij,t}} \alpha(t,\delta) + 9 \alpha^2(t,\delta)\\
		&\ge \hat{\Delta}_{ij}(t,\delta),
		\end{align*}
		where we used \eqref{conc3-G}. Finally, if $\hat{\Delta}_{ij}(t,\delta) > 0$, we have $\mu_j > \mu_i $.
		
		Following the exact same steps we have the result for Gaussian variables (the second claim).
	\end{proof}
	
	\begin{lemma}\label{lem:elim_1}
		If $(\mathcal{B}_2)$ defined in \eqref{eq:eventa1} holds, we have the following: 
		
		For any $i \in \intr{K}$, if there exists $t \ge 1$ and $w \in \mathbb{B}_1^+(C_t)$ such that: $\hat{\Gamma}_{i}(w,t,\delta) > 0$, then $i \neq i^*$.
	\end{lemma}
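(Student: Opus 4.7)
The plan is to mimic the pairwise argument just given in Lemma~\ref{lem:elim_2}, replacing the pairwise concentration \eqref{conc3} by its simplex-indexed analogue \eqref{conc1}. The target is to show that $\hat{\Gamma}_i(w,t,\delta) > 0$ forces $\langle w, \bm{\mu} \rangle > \mu_i$; once this is established, the conclusion $i \neq i^*$ is immediate from the optimality of $i^*$, since for any $w \in \bB_1^+$ one has $\langle w, \bm{\mu} \rangle \le \max_{j \in \intr{K}} \mu_j = \mu_{i^*}$, so that equality with $i$ would contradict the strict inequality.

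Concretely, I would fix $i \in \intr{K}$, $t \ge 1$ and $w \in \bB_1^+(C_t)$ with $\hat{\Gamma}_i(w,t,\delta) > 0$, and rewrite
\[
\langle w, \bm{\mu} \rangle - \mu_i \;=\; \hat{\Gamma}_i(w,t,\delta) \;+\; \big[(\langle w,\bm{\mu}\rangle - \mu_i) - (\langle w,\hat{\bm{\mu}}_t\rangle - \hat{\mu}_{i,t})\big] \;+\; 2\sqrt{2K\widehat{V}_t(X_i - \langle w, \bm X\rangle)}\,\alpha(t,\delta) \;+\; 14K\norm{w-e_i}_1\alpha^2(t,\delta).
\]
Under event $(\mathcal B_2)$, inequality \eqref{conc1} bounds the bracketed term from below by $-\sqrt{2K\widehat{V}_t(X_i - \langle w, \bm X\rangle)}\,\alpha(t,\delta) - 7K\norm{w-e_i}_1\alpha^2(t,\delta)$. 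Substituting and using $\hat{\Gamma}_i(w,t,\delta) > 0$ yields
\[
\langle w, \bm{\mu} \rangle - \mu_i \;>\; \sqrt{2K\widehat{V}_t(X_i - \langle w, \bm X\rangle)}\,\alpha(t,\delta) \;+\; 7K\norm{w-e_i}_1\alpha^2(t,\delta) \;\ge\; 0,
\]
so that $\langle w, \bm{\mu} \rangle > \mu_i$.

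Combining with the optimality observation above, this forces $\mu_i < \mu_{i^*}$, hence $i \neq i^*$. There is no real obstacle: the only point worth flagging is that the coefficients $2\sqrt{2K\cdot}$ and $14K$ appearing in $\hat{\Gamma}_i$ are precisely twice the constants in the deviation bound \eqref{conc1}, which is exactly the slack needed for the above subtraction to leave a nonnegative residual. The second half of event $(\mathcal B_2)$, namely \eqref{conc2}, is not needed for this particular lemma and will be invoked elsewhere (to calibrate the empirical variance against the true $\Var(X_i - \langle w, \bm X\rangle)$).
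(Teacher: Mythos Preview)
Your proposal is correct and follows essentially the same approach as the paper: decompose $\langle w,\bm{\mu}\rangle - \mu_i$ using the definition of $\hat{\Gamma}_i$, apply the concentration bound \eqref{conc1} to the centered empirical term, and conclude $\langle w,\bm{\mu}\rangle > \mu_i$, which forces $i\neq i^*$ via the convex-combination inequality $\langle w,\bm{\mu}\rangle \le \mu_{i^*}$. Your observation that \eqref{conc2} is not needed here is also consistent with the paper's proof.
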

	\begin{proof}
		Suppose that $(\mathcal{B}_2)$ is true. Let $t \ge 1$, $i \in \intr{K}$ and $w \in \mathbb{B}_1^+(C_t)$. We have
		\begin{align*}
		\langle w, \bm{\mu} \rangle - \mu_i &= \hat{\Gamma}_{i}(w,t,\delta) + \langle w, \bm{\mu} \rangle - \mu_i - (\langle w, \hat{\bm{\mu}}_{t} \rangle - \hat{\mu}_{i,t} ) + 2\sqrt{2K\widehat{V}_{t}(X_i- \langle w, \bm{X} \rangle)}~ \alpha(t,\delta)\\
  &\qquad \qquad+ 14K\norm{w-e_i}_1 \alpha^2(t,\delta)\\
		&\ge \hat{\Gamma}_{i}(w,t,\delta),
		\end{align*}
		where we used \eqref{conc1}. If $\hat{\Gamma}_{i}(w,t,\delta) > 0$, we have $\langle w, \bm{\mu} \rangle > \mu_i $. Since $w$ is a vector of convex weights, we conclude that $\max_{j \in \text{supp}(w)} \mu_j \ge 	\langle w, \bm{\mu} \rangle > \mu_i$.  
	\end{proof}

	\begin{lemma}\label{lem:ord_2}
		If $(\mathcal{B}_1)$ defined in \eqref{eq:eventa1} holds, then for any $t \ge 1$, $i,j \in C_t$: such that $\mu_j>\mu_i$
		
		If $\hat{\Delta}_{ij}(t,\delta) > 0$, then
		\[
		t-1\ge \frac{1}{4}\log(\delta_t^{-1}) \Lambda_{ij}.
		\]
		Furthermore, if $\hat{\Delta}_{ij}(t,\delta) \le 0$, then
		\[
		t-1 \le \frac{25}{2}\log(\delta_t^{-1}) \Lambda_{ij}.
		\]
	\end{lemma}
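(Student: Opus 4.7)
The plan is to translate the sign of $\hat{\Delta}_{ij}(t,\delta)$ into a pointwise inequality relating $\mu_j-\mu_i$, $V_{ij}$ and $\alpha(t,\delta)$, and then invert it using $\alpha^2(t,\delta) = \log(\delta_t^{-1})/(t-1)$. Throughout I work on event $(\mathcal{B}_1)$, so both \eqref{conc3} and \eqref{conc4} are available.

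For the first implication, I start from $\hat{\Delta}_{ij}(t,\delta) > 0$, i.e. $\hat{\mu}_{j,t}-\hat{\mu}_{i,t} > \tfrac{3}{2}\alpha(t,\delta)\sqrt{2\widehat{V}_{ij,t}} + 9\alpha^2(t,\delta)$, and use \eqref{conc3} in the form $\hat{\mu}_{j,t}-\hat{\mu}_{i,t} \le (\mu_j-\mu_i) + \alpha(t,\delta)\sqrt{2\widehat{V}_{ij,t}} + 6\alpha^2(t,\delta)$ to absorb the empirical means. This yields
\[
\mu_j-\mu_i \;\ge\; \tfrac{1}{2}\alpha(t,\delta)\sqrt{2\widehat{V}_{ij,t}} + 3\alpha^2(t,\delta).
\]
Dropping the variance term immediately gives the "mean" bound $3\log(\delta_t^{-1})/(\mu_j-\mu_i) \le t-1$. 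For the "variance" bound, I transfer $\widehat{V}_{ij,t}$ to $V_{ij}$ via \eqref{conc4}: a short case split on whether $\sqrt{V_{ij}} \ge c\,\alpha(t,\delta)$ or not shows that either $\sqrt{\widehat{V}_{ij,t}} \gtrsim \sqrt{V_{ij}}$, in which case $(\mu_j-\mu_i)^2 \gtrsim \alpha^2(t,\delta) V_{ij}$ and the variance term is controlled directly, or else $V_{ij} \lesssim \alpha^2(t,\delta)$, in which case $V_{ij}/(\mu_j-\mu_i)^2$ is already dominated by $1/(\mu_j-\mu_i)$ using the mean bound. Adding the two contributions and tuning the constants gives $t-1 \ge \tfrac{1}{4}\log(\delta_t^{-1})\Lambda_{ij}$.

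For the second implication, I apply \eqref{conc3} in the opposite direction. From $\hat{\Delta}_{ij}(t,\delta) \le 0$ and \eqref{conc3},
\[
\mu_j-\mu_i \;\le\; \tfrac{5}{2}\alpha(t,\delta)\sqrt{2\widehat{V}_{ij,t}} + 15\alpha^2(t,\delta),
\]
and then \eqref{conc4} via $\sqrt{\widehat{V}_{ij,t}} \le \sqrt{V_{ij}} + 2\sqrt{2}\alpha(t,\delta)$ gives
\[
\mu_j-\mu_i \;\le\; \tfrac{5}{2}\alpha(t,\delta)\sqrt{2V_{ij}} + 25\alpha^2(t,\delta).
\]
Setting $u := (t-1)/\log(\delta_t^{-1}) = 1/\alpha^2(t,\delta)$, squaring and using $(a+b)^2 \le 2a^2+2b^2$ yields $u^2(\mu_j-\mu_i)^2 \le 25 V_{ij}\, u + 1250$, a quadratic in $u$. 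Solving it produces $u \le 25 V_{ij}/(\mu_j-\mu_i)^2 + C/(\mu_j-\mu_i)$, which after matching the coefficients to the definition of $\Lambda_{ij}$ gives $t-1 \le \tfrac{25}{2}\log(\delta_t^{-1})\Lambda_{ij}$.

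The conceptual skeleton is straightforward, so the main obstacle is purely arithmetic: getting the sharp constants $\tfrac14$ and $\tfrac{25}{2}$ rather than generic numerical ones. This requires choosing the case-split threshold between "$V_{ij}$ dominates" and "$\alpha^2(t,\delta)$ dominates" carefully, and using a sharp form of the elementary inequality $(a+b)^2 \le (1+\eta)a^2 + (1+1/\eta)b^2$ when squaring $\sqrt{\widehat{V}_{ij,t}} \le \sqrt{V_{ij}} + 2\sqrt{2}\alpha(t,\delta)$; a naive splitting costs a constant factor of roughly two in both directions.
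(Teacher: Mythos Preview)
Your overall strategy—turn the sign of $\hat{\Delta}_{ij}$ into a quadratic inequality in $\alpha(t,\delta)$ and invert—is the same as the paper's, but the paper's execution differs in two technical respects, and both matter for the constants $\tfrac14$ and $\tfrac{25}{2}$.

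First, the paper does not work from \eqref{conc3} alone. In both directions it bounds $(\hat\mu_{j,t}-\hat\mu_{i,t})-(\mu_j-\mu_i)$ by the \emph{true-variance} Bennett inequality (Theorem~3 of \cite{DBLP:conf/colt/MaurerP09}), whose additive term is $\tfrac{4}{3}\alpha^2$ rather than $6\alpha^2$, and only then uses \eqref{conc4} on the remaining $\tfrac{3}{2}\sqrt{2\widehat V_{ij,t}}\,\alpha$ term. This yields $\mu_j-\mu_i > \sqrt{V_{ij}/2}\,\alpha + \tfrac{5}{3}\alpha^2$ for the first part and $\mu_j-\mu_i \le \tfrac{5}{2}\sqrt{2V_{ij}}\,\alpha + \tfrac{49}{3}\alpha^2$ for the second. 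Your route through \eqref{conc3} gives the same leading coefficient $\tfrac{5}{2}\sqrt{2V_{ij}}$ in the second inequality but a larger $25\alpha^2$; with that coefficient the sharpest constant you can extract is $\tfrac{50}{3}$, not $\tfrac{25}{2}$.

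Second, the paper does not case-split or square: it solves the quadratic $a\alpha^2+b\alpha\gtrless c$ exactly via the quadratic formula, rewrites the root as $2c/(\sqrt{b^2+4ac}+b)$, and then bounds $(\sqrt{b^2+4ac}+b)^2$ above by $4b^2+8ac$ (or below by $2b^2+4ac$). This produces both the $V_{ij}/(\mu_j-\mu_i)^2$ and the $1/(\mu_j-\mu_i)$ contributions simultaneously, with no loss. Your case-split in the first part cannot reach $\tfrac14$ for any choice of threshold: making the threshold large enough that $\sqrt{\widehat V_{ij,t}}\gtrsim\sqrt{V_{ij}}$ with near-unit constant forces the ``small variance'' branch to carry a large $c^2$ factor, and the two constraints are incompatible. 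Likewise, squaring via $(a+b)^2\le 2a^2+2b^2$ in the second part immediately doubles the variance coefficient. If instead you substitute $\sqrt{\widehat V_{ij,t}}\ge\sqrt{V_{ij}}-2\sqrt{2}\alpha$ directly into your first-part bound to get $\mu_j-\mu_i>\tfrac{1}{\sqrt2}\sqrt{V_{ij}}\,\alpha+\alpha^2$ and solve the quadratic, you do recover $\tfrac14$ exactly; but for the second part the missing ingredient is genuinely Bennett, not a sharper elementary inequality.
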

	\begin{proof} 
		Suppose that $(\mathcal{B}_1)$ is true. Let $t\ge 1$, $i,j \in \intr{K}$. 
		Suppose that $\hat{\Delta}_{ij}(t,\delta) > 0$. We have
		\begin{align}
		\mu_j - \mu_i &= \hat{\Delta}_{ij}(t,\delta) - (\hat{\mu}_{j,t} - \hat{\mu}_{i,t} ) + \mu_j - \mu_i + \frac{3}{2}\sqrt{2\widehat{V}_{ij,t}}~ \alpha(t,\delta)  + 9 \alpha^2(t,\delta)\notag\\
		&\ge \hat{\Delta}_{ij}(t,\delta) - \sqrt{2V_{ij}}\alpha(t,\delta) - \frac{4}{3}\alpha^2(t,\delta) + \frac{3}{2}\sqrt{2\widehat{V}_{ij,t}}~ \alpha(t,\delta)  + 9 \alpha^2(t,\delta)\notag\\
		&\ge \hat{\Delta}_{ij}(t,\delta) + \sqrt{\frac{V_{ij}}{2}}~\alpha(t,\delta)+ \frac{3}{2} \sqrt{2\widehat{V}_{ij,t}} \alpha(t,\delta)-\frac{3}{2}\sqrt{2V_{ij}} \alpha(t,\delta)  + \frac{23}{3} \alpha^2(t,\delta) \notag\\
		&> \sqrt{\frac{V_{ij}}{2}}~\alpha(t,\delta)+\frac{5}{3} \alpha^2(t,\delta) ,\label{eq:invert1}
		\end{align}
		where we used Bennett's inequality (Theorem 3 in \cite{maurer2009empirical}) in the second line, \eqref{conc4}  with $\hat{\Delta}_{ij}(t,\delta)>0$ in the third line. 
		
		\noindent Solving inequality\eqref{eq:invert1}, gives
		
		\begin{align*}
		\alpha(t, \delta) &\le \frac{3}{14}\left(\sqrt{\frac{V_{ij}}{2}+ \frac{20}{3}(\mu_j - \mu_i ) }  - \sqrt{\frac{V_{ij}}{2}} \right)\\
		&= \frac{2(\mu_j - \mu_i)}{\sqrt{\frac{V_{ij}}{2}+ \frac{20}{3}(\mu_j - \mu_i ) }  + \sqrt{\frac{V_{ij}}{2}}}.
		\end{align*}
		Therefore, we have
		\begin{align*}
		t-1 &\ge \log(\delta_t^{-1}) \left( \frac{V_{ij}}{4(\mu_j - \mu_i )^2} + \frac{5/3}{\mu_j - \mu_i } \right)\\
		&\ge \frac{1}{4}\log(\delta_t^{-1})~\Lambda_{ij}.
		\end{align*}
		Which gives the first result.
		
		For the second bound, we proceed similarly. Suppose that $\hat{\Delta}_{ij}(t,\delta) \le 0$, we have:
		\begin{align}
		\mu_j - \mu_i &= \hat{\Delta}_{ij}(t,\delta_t) - (\hat{\mu}_{j,t} - \hat{\mu}_{i,t} ) + \mu_j - \mu_i + \frac{3}{2}\sqrt{2\widehat{V}_{ij,t}}~ \alpha(t,\delta)  + 9 \alpha^2(t,\delta)\notag\\
		&\le \hat{\Delta}_{ij}(t,\delta_t) + \sqrt{2V_{ij}}~\alpha(t,\delta) + \frac{4}{3}\alpha^2(t,\delta)+ \frac{3}{2}\sqrt{2\widehat{V}_{ij,t}}~ \alpha(t,\delta)  + 9 \alpha^2(t,\delta)\\
		&\le \hat{\Delta}_{ij}(t,\delta_t) + \frac{5}{2}\sqrt{2V_{ij}}~ \alpha(t,\delta)  + \frac{49}{3} \alpha^2(t,\delta) \notag\\
		&\le  \frac{5}{2}\sqrt{2V_{ij}}\alpha(t,\delta) +\frac{49}{3} \alpha^2(t,\delta) ,\label{eq:invert2}
		\end{align}
		
		Similarly to the previous case, we have:
		\begin{align*}
		\alpha(t, \delta) &\ge \frac{\sqrt{\frac{25}{2}V_{ij}+ \frac{196}{3}(\mu_j - \mu_i ) }  - \sqrt{\frac{25}{2}V_{ij}} }{98/3}\\
		&= \frac{2(\mu_j - \mu_i)}{\sqrt{\frac{25}{2}V_{ij}+ \frac{196}{3}(\mu_j - \mu_i ) }  + \sqrt{\frac{25}{2}V_{ij}}},
		\end{align*}
		which gives
		\begin{equation*}
		t-1 \le \log(\delta_t^{-1}) \left(\frac{25}{2} \frac{V_{ij}}{(\mu_i - \mu_j)^2} + \frac{98/3}{\mu_j - \mu_i}\right)
		\end{equation*}
		\noindent We conclude by inverting \eqref{eq:invert2} leading to: 
		\[
		t-1 \le \frac{25}{2}\log(\delta_t^{-1})~\Lambda_{ij}.
		\]
	\end{proof}
	
	\begin{lemma}\label{lem:ord_0}
		If $(\mathcal{A}_1)$ defined in \eqref{eq:eventa1} holds, then for any $t \ge 2$, $i,j \in C_t$, such that $\mu_j>\mu_i$
		
		If $\hat{\Delta}'_{ij}(t,\delta) > 0$, then
		\[
		t-1 \ge \frac{1}{2}\log(1/\delta_t)~\Lambda'_{ij} .
		\]
		Furthermore, if $\hat{\Delta}'_{ij}(t,\delta) \le 0$, then
		\[
		t-1 \le \frac{25 \log(1/\delta_t)}{\log\left(1+1/\Lambda'_{ij}\right)}.
		\]
		Moreover, if  $\hat{\Delta}'_{ij}(t,\delta) \le 0$, then: if $\Lambda'_{ij} <\frac{1}{5}$
		\[
		t-1 \le \frac{3\log(1/\delta_t)}{2\log\left(\frac{\sqrt{2}}{5\Lambda'_{ij}} \right)},
		\]
		while if $\Lambda'_{ij} \ge \frac{1}{5}$
		\[
		t-1 \le 25 \log(1/\delta_t) \Lambda'_{ij}.
		\]
	\end{lemma}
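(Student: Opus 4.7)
The plan is to mirror the proof of Lemma~\ref{lem:ord_2}, replacing the empirical Bernstein inequalities by their Gaussian counterparts available on event $(\mathcal{A}_1)$ (Lemma~\ref{lem:conc_g}): namely the deviation bound \eqref{conc3-G} for $(\hat{\mu}_{i,t}-\hat{\mu}_{j,t})-(\mu_i-\mu_j)$ and the two-sided variance controls \eqref{conc4-G}--\eqref{conc5-G} linking $\widehat{V}_{ij,t}$ and $V_{ij}$ via the piecewise function $f$.

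For the lower bound, when $\hat{\Delta}'_{ij}(t,\delta)>0$ I start from the identity
\[
\mu_j-\mu_i = \hat{\Delta}'_{ij}(t,\delta) - \bigl[(\hat{\mu}_{j,t}-\hat{\mu}_{i,t})-(\mu_j-\mu_i)\bigr] + \tfrac{3}{2}\alpha(t,\delta)\sqrt{2f(\alpha(t,\delta))\widehat{V}_{ij,t}}.
\]
The bracket is bounded in absolute value by $\alpha(t,\delta)\sqrt{2f(\alpha(t,\delta))\widehat{V}_{ij,t}}$ via \eqref{conc3-G}, so at least $\tfrac{1}{2}\alpha(t,\delta)\sqrt{2f(\alpha(t,\delta))\widehat{V}_{ij,t}}$ survives; invoking $V_{ij}\le f(\alpha(t,\delta))\widehat{V}_{ij,t}$ from \eqref{conc5-G} replaces $\widehat{V}_{ij,t}$ by $V_{ij}$, giving $\mu_j-\mu_i>\alpha(t,\delta)\sqrt{V_{ij}/2}$. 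Squaring and inserting $\alpha^2(t,\delta)=\log(1/\delta_t)/(t-1)$ yields $t-1\ge\tfrac{1}{2}\log(1/\delta_t)\Lambda'_{ij}$.

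For the three upper bounds, in the case $\hat{\Delta}'_{ij}(t,\delta)\le 0$ the symmetric manipulation combining \eqref{conc3-G} with \eqref{conc4-G} (in place of \eqref{conc5-G}) yields $\mu_j-\mu_i\le\tfrac{5}{2}\alpha(t,\delta)\sqrt{2f(\alpha(t,\delta))V_{ij}(1+2\alpha(t,\delta)+2\alpha^2(t,\delta))}$, which after squaring rearranges to the central implicit inequality
\[
\frac{1}{\Lambda'_{ij}} \le \frac{25}{2}\,\alpha^2(t,\delta)\,f(\alpha(t,\delta))\bigl(1+2\alpha(t,\delta)+2\alpha^2(t,\delta)\bigr).
\]
The three bounds correspond to three ways of inverting this inequality in $\alpha^2(t,\delta)$, and hence in $t-1$. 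For bound (d), when $\Lambda'_{ij}\ge 1/5$ one is in the regime $\alpha(t,\delta)\le 1/3$ where $f$ and the correction $(1+2\alpha+2\alpha^2)$ are bounded by explicit constants, so the inversion is essentially linear and gives $\alpha^2(t,\delta)\gtrsim 1/\Lambda'_{ij}$. For bound (c), when $\Lambda'_{ij}<1/5$ one is in the regime $\alpha(t,\delta)\ge 1/3$ where $f(\alpha)=e^{2\alpha+1}$ dominates the polynomial factors; taking logarithms linearizes the exponential and yields $\alpha(t,\delta)\gtrsim \log(\sqrt{2}/(5\Lambda'_{ij}))$. The unified bound (b) is obtained by establishing that in every regime $\alpha^2(t,\delta)\ge \tfrac{1}{25}\log(1+1/\Lambda'_{ij})$, a calculus fact equivalent to checking the pointwise inequality $1+\tfrac{25}{2}\alpha^2 f(\alpha)(1+2\alpha+2\alpha^2)\le e^{25\alpha^2}$ for all $\alpha>0$.

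The main obstacle is precisely this final inversion, which cannot be carried out in closed form because of the piecewise definition of $f$ and the extra correction $(1+2\alpha+2\alpha^2)$ inherited from \eqref{conc4-G}. The numerical constants $\tfrac{1}{2}$, $25$, $\tfrac{3}{2}$ and $\sqrt{2}/5$ appearing in the statement reflect the exact form of the concentration bounds and are obtained by a careful tracking through the two $\alpha$-regimes separated at $\alpha=1/3$, with a continuity check at this junction.
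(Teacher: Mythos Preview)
Your approach is essentially the same as the paper's: the identity manipulation for the first claim, the use of \eqref{conc3-G} with \eqref{conc5-G} there, and the derivation of the central inequality
\[
f(\alpha(t,\delta))\,\alpha^2(t,\delta)\bigl(1+2\alpha(t,\delta)+2\alpha^2(t,\delta)\bigr)\ge \frac{2}{25\,\Lambda'_{ij}}
\]
via \eqref{conc3-G} combined with \eqref{conc4-G} all match the paper exactly.

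One point needs tightening. You assert that the size of $\Lambda'_{ij}$ directly determines whether $\alpha(t,\delta)\lessgtr 1/3$, but this does not follow: when $\Lambda'_{ij}$ is large the central inequality imposes no lower bound on $\alpha$, so $\alpha>1/3$ remains possible. The paper is more careful here. In its large-$\Lambda'_{ij}$ case it sub-splits explicitly on $\alpha(t,\delta)\gtrless 1/3$, the branch $\alpha>1/3$ yielding $t-1<9\log(1/\delta_t)$ trivially. In its small-$\Lambda'_{ij}$ case it \emph{proves} $\alpha\ge 1/3$ by checking that for $x<1/3$ the quantity $x^2 f(x)(1+2x+2x^2)$ stays below the value $2/(25\Lambda'_{ij})$, forcing a contradiction.

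Your route to the unified bound (b) differs slightly and is arguably cleaner: you reduce directly to the pointwise calculus inequality $1+\tfrac{25}{2}\alpha^2 f(\alpha)(1+2\alpha+2\alpha^2)\le e^{25\alpha^2}$, whereas the paper first establishes the two case-specific bounds (c) and (d) and only afterwards merges them by bounding an auxiliary piecewise function by $25/\log(1+1/x^2)$.
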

	\begin{proof} 
		Suppose that $(\mathcal{A}_1)$ is true. Let $t\ge 2$, $i,j \in \intr{K}$. 
		Suppose that $\hat{\Delta}'_{ij}(t,\delta_t) > 0$. We have
		\begin{align}
		\mu_j - \mu_i &= \hat{\Delta}'_{ij}(t,\delta) - (\hat{\mu}_{j,t} - \hat{\mu}_{i,t} ) + \mu_j - \mu_i + \frac{3}{2}\sqrt{2f(\alpha(t,\delta))\widehat{V}_{ij,t}}~ \alpha(t,\delta)\notag\\
		&\ge \hat{\Delta}'_{ij}(t,\delta) + \frac{1}{2}\sqrt{2V_{ij}}~\alpha(t,\delta) \notag\\
		&> \frac{1}{2}\sqrt{2V_{ij}}~\alpha(t,\delta) ,\label{eq:invert10}
		\end{align}
		where we used \eqref{conc3} in the second line and \eqref{conc4} with $\hat{\Delta}'_{ij}(t,\delta_t)>0$ in the third line. 
		
		\noindent Therefore inequality\eqref{eq:invert10}, gives
		
		\begin{equation*}
		\alpha(t, \delta) < \frac{\mu_j - \mu_i}{\sqrt{V_{ij}/2} }.
		\end{equation*}
		Therefore, we have
		\begin{align*}
		t-1 &\ge \frac{1}{2}\log(1/\delta_t) \frac{V_{ij}}{(\mu_i - \mu_j)^2} \\
		&= \frac{1}{2}\log(1/\delta_t)~\Lambda'_{ij} 
		\end{align*}
		Which gives the first result.
		
		For the second bound, suppose that $\hat{\Delta}'_{ij}(t,\delta) \le 0$, we have:
		\begin{align}
		\mu_j - \mu_i &= \hat{\Delta}'_{ij}(t,\delta_t) - (\hat{\mu}_{j,t} - \hat{\mu}_{i,t} ) + \mu_j - \mu_i + \frac{3}{2}\sqrt{2f(\alpha(t,\delta))\widehat{V}_{ij,t}}~\alpha(t,\delta) \notag\\
		&\le \hat{\Delta}'_{ij}(t,\delta_t)  + \frac{5}{2}\sqrt{2f(\alpha(t,\delta))\widehat{V}_{ij,t}}~\alpha(t,\delta) \notag\\
		&\le  \frac{5}{2}\sqrt{V_{ij}}\sqrt{2f(\alpha(t,\delta))(1+2\alpha(t,\delta)+2\alpha^2(t,\delta))}~\alpha(t,\delta) ,\label{eq:invert20}
		\end{align}
		Hence:
		\begin{equation}\label{eq:ineq_01}
		f(\alpha(t,\delta)) (\alpha^2(t,\delta)+2\alpha^3(t,\delta)+2\alpha^4(t,\delta)) \ge \frac{2(\mu_j - \mu_i)^2}{25V_{ij}}
		\end{equation}
		We consider two distinct cases:
		\paragraph{Case 1: $\lvert \mu_i - \mu_j \rvert> 5\sqrt{V_{ij}}$.} 
		Observe that in this case, inequality \eqref{eq:ineq_01} implies in particular that:
		\[
		f(\alpha(t,\delta)) (\alpha^2(t,\delta)+2\alpha^3(t,\delta)+2\alpha^4(t,\delta)) \ge 2.
		\]
		Recall that for a positive number $x<1/3$, we have $(x^2 + 2x^3+2x^4)/(1-2x^2)<2$. Hence, for the latter inequality to hold, we necessarily have  that: $\alpha(t,\delta) \ge 1/3$. Therefore, using the definition of $f$ we have $f(\alpha(t,\delta)) = \exp\left(2\alpha^2(t,\delta)+1\right)$, taking the logarithms in inequality \eqref{eq:ineq_01}:
		\begin{equation*}
		2\alpha^2(t,\delta) + \log\left(\alpha^2(t,\delta)+2\alpha^3(t,\delta)+2\alpha^4(t,\delta) \right) \ge 2 \log\left(\frac{\sqrt{2}\lvert \mu_i - \mu_j \rvert}{5\sqrt{V_{ij}}}\right),
		\end{equation*}
		Observe that for any $x>0$, we have $3x^2 > 2x^2 + \log(x^2+2x^3+2x^4)$. Therefore
		\begin{equation*}
		\alpha^2(t,\delta) > \frac{2}{3} \log\left(\frac{\sqrt{2}\lvert \mu_i - \mu_j \rvert}{5\sqrt{V_{ij}}}\right).
		\end{equation*}
		
		We conclude that 
		\begin{equation}\label{eq:ineq_02}
		t-1 < \frac{3/2\log(\delta_t^{-1})}{ \log\left(\frac{\sqrt{2}\lvert \mu_i - \mu_j \rvert}{5\sqrt{V_{ij}}}\right) }
		\end{equation}
		
		\paragraph{Case 2:$\quad \lvert \mu_i-\mu_j \rvert\le 5 \sqrt{V_{ij}}$.}
		If $\alpha(t,\delta) > 1/3$, then 
		\[
		t-1 < 9\log(\delta_t^{-1}).
		\]
		Otherwise, if $\alpha(t,\delta) \le 1/3$, we have using the definition of $f(\alpha(t,\delta))$ and inequality \eqref{eq:ineq_01} 
		\begin{align*}
		\frac{2(\mu_j - \mu_i)^2}{25V_{ij}} &\le \alpha^2(t,\delta)\frac{1+2\alpha(t,\delta)+2\alpha^2(t,\delta)}{1-2\alpha(t,\delta)}\\
		&< 2~\frac{\alpha^2(t,\delta)}{1-2\alpha(t,\delta)}.
		\end{align*}
		Inverting the inequality above in $t$, we obtain
		\begin{equation*}
		\alpha(t,\delta) \ge \sqrt{\frac{(\mu_j-\mu_i)^2}{25V_{ij}} \left(\frac{(\mu_j-\mu_i)^2}{25V_{ij}}+1 \right)} - \frac{(\mu_j-\mu_i)^2}{25V_{ij}}.
		\end{equation*}
		Therefore, we have
		\begin{equation*}
		\alpha^2(t,\delta) \ge \frac{(\mu_j-\mu_i)^2}{25V_{ij}}.
		\end{equation*}
		We conclude that if $\lvert \mu_i-\mu_j \rvert\le 5 \sqrt{V_{ij}}$:
		\begin{equation}\label{eq:ineq_04}
		t-1 < 25\log(1/\delta_t) \frac{V_{ij}}{(\mu_i-\mu_j)^2}.
		\end{equation}

		Now in order to unify the bounds obtained in Cases $1$ and $2$, observe that the function $f$ defined for positive numbers by
		\[
		f(x) := \left\{
		\begin{array}{ll}
		\frac{3}{\log(2/(25x^2))} & \mbox{if } x \le 1/5 \\
		25x^2 & \mbox{otherwise, }
		\end{array}
		\right.
		\]
		satisfies for any $0<x<1/5$
		\begin{equation*}
		f(x) \le \frac{10}{\log(1+\frac{1}{x^2})},
		\end{equation*}
		and for $x>1/5$
		\begin{equation*}
		f(x) \le \frac{25}{\log(1+\frac{1}{x^2})},
		\end{equation*}
		Therefore, we conclude that if $\hat{\Delta}'_{ij}(t,\delta) \le 0$, then
		\begin{equation*}
		t-1 \le \frac{25 \log(1/\delta_t)}{\log\left(1+1/\Lambda'_{ij}\right)}.
		\end{equation*}
	\end{proof}

	\begin{lemma}\label{lem:ord_1}
		If $(\mathcal{B}_2)$ defined in \eqref{eq:eventa1} holds, then for any $i \in S_t$, $t \ge 1$ and $w \in \mathbb{B}_1^+(C_t)$:
		
		If $\hat{\Gamma}_{i}(w,t,\delta) > 0$, then
		\[
		t \ge K\log(\delta_t^{-1})~\Xi_{i}(w).
		\]
		Furthermore, if $\hat{\Gamma}_{i}(w,t,\delta) \le 0$, then
		\[
		t \le 36K\log(\delta_t^{-1})~\Xi_{i}(w).
		\]
	\end{lemma}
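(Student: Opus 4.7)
The plan is to adapt the argument of Lemma~\ref{lem:ord_2} to the convex-combination setting, using event $(\mathcal{B}_2)$ in place of $(\mathcal{B}_1)$ and handling the fact that $\Xi_i(w)$ is a maximum of two terms (a variance term and an $\ell_1$-bias term). The starting point is the identity
\[
\langle w,\bm{\mu}\rangle - \mu_i = \hat{\Gamma}_i(w,t,\delta) + \bigl[\langle w,\bm{\mu}\rangle - \mu_i - (\langle w,\hat{\bm{\mu}}_t\rangle-\hat{\mu}_{i,t})\bigr] + 2\sqrt{2K\widehat{V}_t(X_i-\langle w,\bm{X}\rangle)}\,\alpha(t,\delta) + 14K\|w-e_i\|_1\,\alpha^2(t,\delta),
\]
on which I would use \eqref{conc1} to control the bracketed estimation error by $\sqrt{2K\widehat{V}_t}\,\alpha(t,\delta)+7K\|w-e_i\|_1\alpha^2(t,\delta)$, and then \eqref{conc2} to replace $\sqrt{\widehat{V}_t}$ by $\sqrt{\Var(X_i-\langle w,\bm X\rangle)}$ up to a $3\sqrt{K}\|w-e_i\|_1\alpha(t,\delta)$ correction.

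For the first direction, I assume $\hat{\Gamma}_i(w,t,\delta)>0$. The above manipulations yield $\langle w,\bm{\mu}\rangle-\mu_i > \sqrt{2K\widehat{V}_t}\,\alpha(t,\delta) + 7K\|w-e_i\|_1\alpha^2(t,\delta)$. I then split on whether $\sqrt{\Var(X_i-\langle w,\bm X\rangle)}\ge 3\sqrt{K}\|w-e_i\|_1\alpha(t,\delta)$: in the "regular" regime the variance concentration \eqref{conc2} pushes through and gives $\langle w,\bm{\mu}\rangle-\mu_i \ge \sqrt{2K\Var}\,\alpha(t,\delta) + c K\|w-e_i\|_1\alpha^2(t,\delta)$ for a positive constant $c$; in the degenerate regime the variance is automatically small compared with $K\|w-e_i\|_1^2\alpha^2(t,\delta)$ and the bound $7K\|w-e_i\|_1\alpha^2(t,\delta)$ on the right-hand side already controls both components of $\Xi_i(w)$ directly. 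In either case, I separately invert the two implicit inequalities in $t$—squaring the first to isolate $\Var/(\langle w,\bm{\mu}\rangle-\mu_i)^2$ and dividing the second by $(\langle w,\bm{\mu}\rangle-\mu_i)$ to isolate $\|w-e_i\|_1/(\langle w,\bm{\mu}\rangle-\mu_i)$—and conclude $t\ge K\log(\delta_t^{-1})\,\Xi_i(w)$.

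For the second direction, I assume $\hat{\Gamma}_i(w,t,\delta)\le 0$ and use the upper side of the same identity to obtain
\[
\langle w,\bm{\mu}\rangle-\mu_i \le 3\sqrt{2K\widehat{V}_t}\,\alpha(t,\delta)+21K\|w-e_i\|_1\alpha^2(t,\delta).
\]
Applying \eqref{conc2} in the reverse direction ($\sqrt{\widehat{V}_t}\le \sqrt{\Var}+3\sqrt{K}\|w-e_i\|_1\alpha(t,\delta)$) produces a clean bound of the form $\langle w,\bm{\mu}\rangle-\mu_i\le C_1\sqrt{K\Var}\,\alpha(t,\delta)+C_2 K\|w-e_i\|_1\alpha^2(t,\delta)$. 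A case split on which of the two right-hand terms is larger yields $\langle w,\bm{\mu}\rangle-\mu_i\le 2C_1\sqrt{K\Var}\,\alpha(t,\delta)$ or $\langle w,\bm{\mu}\rangle-\mu_i\le 2C_2 K\|w-e_i\|_1\alpha^2(t,\delta)$; inverting each in $t$ produces an upper bound proportional to one of the two terms of $\Xi_i(w)$, and tuning the numerical constants gives the claimed $36$.

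The main obstacle is bookkeeping rather than conceptual: because $\Xi_i(w)$ is a maximum of two heterogeneous terms (quadratic-variance and linear-$\ell_1$) and the empirical Bernstein-style inequality \eqref{conc1} contains \emph{both} a $\sqrt{\widehat{V}_t}\,\alpha$ term and a $K\|w-e_i\|_1\alpha^2$ term, one must cleanly propagate both through the variance correction step and arrange the constants so that, simultaneously, the "variance" part of each bound controls $\Var/(\langle w,\bm{\mu}\rangle-\mu_i)^2$ and the "bias" part controls $\|w-e_i\|_1/(\langle w,\bm{\mu}\rangle-\mu_i)$. The degenerate-variance case in the forward direction is the most delicate subtlety: there one cannot use \eqref{conc2} as a two-sided approximation but must instead observe that the conclusion is trivially forced by the $K\|w-e_i\|_1\alpha^2(t,\delta)$ term alone.
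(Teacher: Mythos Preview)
Your plan is correct in outline, but the paper takes a noticeably cleaner route, and your approach as described will not reproduce the stated constants.

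The main methodological difference is how the inequality in $\alpha(t,\delta)$ is inverted. You propose to separate the two components of $\Xi_i(w)$ by a case split (on whether the variance term or the $K\|w-e_i\|_1\alpha^2$ term dominates) and then invert each piece individually. The paper instead keeps both terms together and solves the resulting quadratic $\sqrt{2K\Var}\,\alpha + cK\|w-e_i\|_1\,\alpha^2 \lessgtr \langle w,\bm\mu\rangle-\mu_i$ exactly via the quadratic formula. This yields a bound on $t$ of the form $K\log(\delta_t^{-1})\bigl(\tfrac{\Var}{(\Delta\mu)^2}+\tfrac{c'\|w-e_i\|_1}{\Delta\mu}\bigr)$, and since this \emph{sum} dominates (resp.\ is dominated by a multiple of) the \emph{max} $\Xi_i(w)$, no case analysis is needed. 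In particular, the upper-bound direction gives $18\tfrac{\Var}{(\Delta\mu)^2}+50\tfrac{\|w-e_i\|_1}{\Delta\mu}\le(18+50/3)\Xi_i(w)\le 36\,\Xi_i(w)$; your case-split produces a constant closer to $72$.

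A second difference is that, for the lower-bound direction, the paper bounds the mean deviation using Bennett's inequality with the \emph{true} variance (Theorem~3 of \cite{DBLP:conf/colt/MaurerP09}) rather than \eqref{conc1}, so that $\sqrt{\Var}$ appears directly and there is no ``degenerate-variance'' subcase to handle. Your route through \eqref{conc1} followed by \eqref{conc2} is perfectly valid and has the virtue of staying strictly within the event $(\mathcal{B}_2)$, but it forces the extra case split you flag. Note also that in your regular regime the separate inversion of the bias term alone gives coefficient $7-3\sqrt{2}<3$, so you would need to feed the regime assumption $\sqrt{\Var}\ge 3\sqrt{K}\|w-e_i\|_1\alpha$ back into the variance bound to recover the factor $3$ in $\Xi_i(w)$; this works but is not what you wrote.
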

	
	\begin{proof}
		Suppose that $(\mathcal{B}_2)$ is true. Let $t\ge 1$, $i \in S_t$ and $w \in \mathbb{B}_1^+(C_t)$. 
		Suppose that $\hat{\Gamma}_{i}(w,t,\delta) > 0$. We have
		\begin{align*}
		\langle w, \bm{\mu} \rangle - \mu_i &= \hat{\Gamma}_{i}(w,t,\delta) - (\langle w, \hat{\bm{\mu}}_{t} \rangle - \hat{\mu}_{i,t} ) + \langle w, \bm{\mu} \rangle - \mu_i + 2\sqrt{2K\widehat{V}_{t}(X_i- \langle w, \bm{X} \rangle)}~ \alpha(t,\delta)\\
        &\qquad \qquad + 14K\norm{w-e_i}_1 \alpha^2(t,\delta)\\
		&\ge \hat{\Gamma}_{i}(w,t,\delta) -\sqrt{2K\Var\left( X_i- \langle w, \bm{X} \rangle\right)} - \frac{4}{3}K\alpha^2(t,\delta) + 2\sqrt{2K\widehat{V}_{t}(X_i- \langle w, \bm{X} \rangle)}~ \alpha(t,\delta) \\
		&\qquad \qquad + 14K\norm{w-e_i}_1 \alpha^2(t,\delta)\\
		&\ge \hat{\Gamma}_{i}(w,t,\delta) +   \sqrt{2K\Var\left( X_i- \langle w, \bm{X} \rangle\right)}~\alpha(t,\delta)  + 7K\norm{w-e_i}_1 \alpha^2(t,\delta)\\
		&> \sqrt{2K \Var\left( X_i- \langle w, \bm{X} \rangle\right)}~\alpha(t,\delta)  +7K\norm{w-e_i}_1 \alpha^2(t,\delta) ,
		\end{align*}
		where we used Bennett's inequality in the second line and \eqref{conc2} with $\hat{\Gamma}_{i}(w,t,\delta)>0$ in the third line and fourth lines. 
		
		\noindent Solving the inequality above in $ \alpha(t,\delta)$, gives 
		\begin{align*}
		\alpha(t, \delta) &\le \frac{\sqrt{2K\Var(X_i- \langle w, \bm{X} \rangle)+ 28K\norm{w-e_i}_1(\langle w, \bm{\mu} \rangle - \mu_i ) }  - \sqrt{2K\Var(X_i- \langle w, \bm{X} \rangle)}~ }{14K\norm{w-e_i}_1}\\
		&= \frac{2\left( \langle w, \bm{\mu} \rangle - \mu_i \right)}{ \sqrt{2K\Var(X_i, \langle w, \bm{X} \rangle)+ 28K\norm{w-e_i}_1(\langle w, \bm{\mu} \rangle - \mu_i ) }  + \sqrt{2K\Var(X_i, \langle w, \bm{X} \rangle)}}.
		\end{align*}
		Therefore, we have
		\begin{align*}
		t &\ge K\log(\delta_t^{-1}) \left( \frac{\Var(X_i- \langle w, \bm{X} \rangle)}{(\langle w, \bm{\mu} \rangle - \mu_i )^2} + \frac{7\norm{w-e_i}_1}{\langle w, \bm{\mu} \rangle - \mu_i } \right)\\
		&\ge K\log(\delta_t^{-1})~\Xi_{i}(w).
		\end{align*}
		Which gives the first result.

		\noindent Now let us prove the second claim. Suppose that $\hat{\Gamma}_{i}(w,t,\delta) \le 0$. We have
		\begin{align*}
		\langle w, \bm{\mu} \rangle - \mu_i &= \hat{\Gamma}_{i}(w,t,\delta) - (\langle w, \hat{\bm{\mu}}_{t} \rangle - \hat{\mu}_{i,t} ) + \langle w, \bm{\mu} \rangle - \mu_i
		+ 2\sqrt{2K\widehat{V}_{t}(X_i, \langle w, \bm{X} \rangle)}~ \alpha(t,\delta)\\
        &\qquad \qquad + 14K\norm{w-e_i}_1 \alpha^2(t,\delta)\\
		&\le \hat{\Gamma}_{i}(w,t,\delta) +  \sqrt{2K\Var(X_i- \langle w, \bm{X} \rangle)} \alpha(t,\delta)+ 2\sqrt{2K\widehat{V}_{t}(X_i- \langle w, \bm{X} \rangle)} \alpha(t,\delta) \\
        &\qquad \qquad + \frac{42K}{3}\norm{w-e_i}_1 \alpha^2(t,\delta)\\
		&\le \hat{\Gamma}_{i}(w,t,\delta) +  3\sqrt{2K\Var(X_i- \langle w, \bm{X} \rangle)} \alpha(t,\delta)+ 9K \alpha^2(t,\delta) + \frac{42K}{3}\norm{w-e_i}_1 \alpha^2(t,\delta)\\
		&\le 3\sqrt{2K\Var(X_i- \langle w, \bm{X} \rangle)}\alpha(t,\delta)  +25K\norm{w-e_i}_1 \alpha^2(t,\delta) ,
		\end{align*}
		where we used \eqref{conc1} in the second line and \eqref{conc2} with $\hat{\Gamma}_{i}(w,t,\delta_t)\le0$ in the third line.
		Suppose that $\langle w, \bm{\mu} \rangle > \mu_i$. Solving the inequality above in $ \alpha(t,\delta)$, gives 
		\begin{align*}
		\alpha(t, \delta) &\ge \frac{\sqrt{18K\Var(X_i- \langle w, \bm{X} \rangle)+ 100K\norm{w-e_i}_1 (\langle w, \bm{\mu} \rangle - \mu_i ) }  - 3\sqrt{2K\Var(X_i- \langle w, \bm{X} \rangle)} }{50K\norm{w-e_i}_1}\\
		&= \frac{2(\langle w, \bm{\mu} \rangle - \mu_i)}{\sqrt{18K\Var(X_i- \langle w, \bm{X} \rangle)+ 100K\norm{w-e_i}_1(\langle w, \bm{\mu} \rangle - \mu_i ) }  + 3\sqrt{2K\Var(X_i- \langle w, \bm{X} \rangle)}}.
		\end{align*}
		Therefore, we have
		\begin{align*}
		t &\le K\log(\delta_t^{-1}) \left( \frac{18\Var(X_i- \langle w, \bm{X} \rangle)}{(\langle w, \bm{\mu} \rangle - \mu_i )^2} + \frac{50\norm{w-e_i}_1}{\langle w, \bm{\mu} \rangle - \mu_i } \right)\\
		&\le 36K\log(\delta_t^{-1}) \Xi_{i}(w).
		\end{align*}
		
		\noindent If $\langle w, \bm{\mu} \rangle \le \mu_i$, then $\Xi_{i}(w) = + \infty$ and the inequality above is straightforward.

	\end{proof}
	
	\begin{lemma}\label{lem:ultram}
		Let $i,j \text{ and } k \in \intr{K}$, we have:
		\begin{align*}
		\Lambda'_{ij} \le \max\{ \Lambda'_{ik}, \Lambda'_{kj}\}\\
		\Lambda_{ij} \le \max\{ \Lambda_{ik}, \Lambda_{kj}\}
		\end{align*}
	\end{lemma}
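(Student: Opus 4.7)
The plan is to split into trivial (infinite) cases and the one interesting case, and in that case to combine the triangle inequality for the standard-deviation seminorm with a simple weighted-average observation.

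First I would handle the cases where one of the three quantities is $+\infty$. If $\mu_j \le \mu_i$, then $\Lambda_{ij}=\Lambda'_{ij}=+\infty$, but then either $\mu_k\le\mu_i$ (so $\Lambda_{ik}=+\infty$) or $\mu_k>\mu_i\ge\mu_j$ (so $\Lambda_{kj}=+\infty$); in both sub-cases the right-hand side is $+\infty$ and there is nothing to prove. Conversely, if $\mu_j>\mu_i$ but $\mu_k\notin(\mu_i,\mu_j)$, then the right-hand side is already $+\infty$. So the only substantive case is $\mu_i<\mu_k<\mu_j$, and I would set $a:=\mu_k-\mu_i>0$, $b:=\mu_j-\mu_k>0$, hence $\mu_j-\mu_i=a+b$.

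The key analytic input is that $Y\mapsto\sqrt{\Var(Y)}$ is a seminorm on $L^2$, so applying it to the decomposition $X_i-X_j=(X_i-X_k)+(X_k-X_j)$ gives $\sqrt{V_{ij}}\le\sqrt{V_{ik}}+\sqrt{V_{kj}}$. Writing $u:=\sqrt{V_{ik}}$, $v:=\sqrt{V_{kj}}$, $p:=u/a$, $q:=v/b$, I would assume without loss of generality that $p\ge q$ (otherwise swap the roles of the two pairs $(i,k)$ and $(k,j)$). Then $pa+qb\le p(a+b)$, so
\[
\frac{V_{ij}}{(a+b)^2}\;\le\;\frac{(u+v)^2}{(a+b)^2}\;=\;\frac{(pa+qb)^2}{(a+b)^2}\;\le\;p^2\;=\;\frac{V_{ik}}{a^2}.
\]
This immediately proves the Gaussian version $\Lambda'_{ij}\le\Lambda'_{ik}\le\max\{\Lambda'_{ik},\Lambda'_{kj}\}$.

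For the bounded-variables quantity, I would then simply add the elementary bound $3/(a+b)\le 3/a$ to the previous display, yielding
\[
\Lambda_{ij}\;=\;\frac{V_{ij}}{(a+b)^2}+\frac{3}{a+b}\;\le\;\frac{V_{ik}}{a^2}+\frac{3}{a}\;=\;\Lambda_{ik}\;\le\;\max\{\Lambda_{ik},\Lambda_{kj}\}.
\]
The only two non-trivial ingredients are the triangle inequality for standard deviations and the convex-combination fact $pa+qb\le\max(p,q)(a+b)$; both are elementary, so I do not expect any real obstacle, the only mild subtlety being to make sure that the WLOG step picks the correct pair so that the accompanying $3/a$ versus $3/b$ term lines up with the variance term being bounded.
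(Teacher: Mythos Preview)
Your proof is correct and follows essentially the same route as the paper: the triangle inequality for $\sqrt{\Var(\cdot)}$ combined with the mediant-type observation $(pa+qb)/(a+b)\le\max(p,q)$ is exactly the paper's argument (the paper phrases the latter as $\tfrac{a+b}{c+d}\le\max\{a/c,b/d\}$), and the additive $3/(\mu_j-\mu_i)$ term is handled identically. One minor stylistic point: your ``WLOG swap the two pairs'' is not a genuine symmetry of the statement, so it is cleaner to simply present both cases $p\ge q$ and $q\ge p$ explicitly, as the paper effectively does by keeping the $\max$ throughout; either way the subtlety you flag about lining up the $3/a$ versus $3/b$ term is a non-issue, since $3/(a+b)$ is bounded by both.
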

	\begin{proof}
		Let $i,j \text{ and } k \in \intr{K}$. 
		\noindent Suppose that $\mu_j \le \mu_i$. Hence, for any $k \in \intr{K}$, $ \mu_k \le \mu_i$ or $\mu_k \ge \mu_j$. Therefore 
		\[
		\max\{\Lambda'_{ik}; \Lambda'_{kj} \} = +\infty,
		\]
		which proves the result. The same argument applies to the quantities $\Lambda_{ik}$ and $\Lambda_{kj}$.

		Now suppose that $\mu_j > \mu_i$ (hence $\Lambda'_{ij} <+\infty$ and $\Lambda_{ij} <+\infty$). Let us start by proving the first claim. Recall the definition in Section~\ref{sec:not}:          
		\[\Lambda'_{ij} := \frac{V_{ij}}{(\mu_j-\mu_i)^2}.\]
		We have
		\begin{align*}
		\frac{\sqrt{V_{ij}}}{\mu_j - \mu_i} &\le \frac{\sqrt{V_{ik}}+\sqrt{V_{kj}}}{(\mu_j - \mu_k)+(\mu_k - \mu_i)}\\
		&\le \max\left \lbrace \frac{\sqrt{V_{kj}}}{\mu_j - \mu_k}, \frac{\sqrt{V_{ik}}}{\mu_k - \mu_i} \right \rbrace,
		\end{align*}
		where the first line follows by the triangle inequality and the second is a consequence of the inequality $\frac{a+b}{c+d} \le \max\{\frac{a}{c}, \frac{b}{d} \}$ (Lemma~\ref{cl:calpure_2}), which proves the first claim.

		Moreover, we have using the result above: for any $k$ such that $\mu_i <\mu_k <\mu_j$
		\begin{align*}
		\Lambda_{ij} &= \frac{V_{ij}}{(\mu_j-\mu_i)^2}+\frac{3}{\mu_j-\mu_i}\\
		&\le \max\left\lbrace \frac{V_{ik}}{(\mu_k-\mu_i)^2}+\frac{3}{\mu_j-\mu_i}, \frac{V_{kj}}{(\mu_j-\mu_k)^2}+\frac{3}{\mu_j-\mu_i} \right\rbrace\\
		&\le \max\left\lbrace \frac{V_{ik}}{(\mu_k-\mu_i)^2}+\frac{3}{\mu_k-\mu_i}, \frac{V_{kj}}{(\mu_j-\mu_k)^2}+\frac{3}{\mu_j-\mu_k} \right\rbrace\\
		&=\max\{\Lambda_{ik}, \Lambda_{kj}\}.
		\end{align*}
		If $k$ is such that $\mu_k <\mu_i$ or $\mu_k>\mu_j$, we have $\max\{\Lambda_{ik}, \Lambda_{kj}\} = + \infty$, which proves the statement.

	\end{proof}

	\section{Proof of Theorems~\ref{thm:2}}
	
	The proof of Theorem~\ref{thm:2} follows the same steps as the proof of Theorem~\ref{thm:0}.
	
	For any $i \in \intr{K}\setminus \{i^*\}$, let us define $\Upsilon_i$ by
	\begin{equation}\label{eq:def_si2}
	\Upsilon_i := \argmin_{j \in \intr{K}} \Lambda_{ij}.
	\end{equation}
	
	\begin{lemma}\label{lem:inter_1}
		Consider Algorithm~\ref{algo:1} with inputs $\delta \in (0,1)$. If $(\mathcal{B}_1)$ defined in \eqref{eq:eventa1} holds, then for any $i\in \intr{K} \setminus \{ i^*\}$ and $t \ge 1$: 
		
		If $i \in S_t$, then $\Upsilon_i \cap C_t \neq \emptyset$,
		where $\Upsilon_i$ is defined in \eqref{eq:def_si2}.
	\end{lemma}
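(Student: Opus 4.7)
The plan is to argue by contradiction, following the proof sketch of Theorem~\ref{thm:2}. Suppose that at some round $t$ the sub-optimal arm $i$ lies in $S_t$ while $\Upsilon_i \cap C_t = \emptyset$. Among the elements of $\Upsilon_i$, pick $j$ with the largest mean. Since every arm belongs to $C_1 = \intr{K}$, the only way for $j$ to be absent from $C_t$ is that $j$ was eliminated from $S$ at some earlier round $s$ (by some arm $k \in C_s$ with $\hat{\Delta}_{jk}(s,\delta) > 0$) and that the counter $82\,s$ attached to its removal from $C$ expired before $t$; in particular $82\,s < t$.

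Next I would apply Lemma~\ref{lem:ord_2} in both directions. The elimination of $j$ by $k$ at round $s$ forces, via Lemma~\ref{lem:elim_2}, the strict ordering $\mu_k > \mu_j$, and via Lemma~\ref{lem:ord_2} the lower bound $s-1 \ge \tfrac14 \log(\delta_s^{-1})\,\Lambda_{jk}$. Conversely, $j$ remained in $C$ up to round $82\,s$, and since $i \in S_u$ for every $u \le t$ (hence for $u = 82\,s$), the comparison $\hat{\Delta}_{ij}(82\,s,\delta) \le 0$ must hold; Lemma~\ref{lem:ord_2} then yields $82\,s - 1 \le \tfrac{25}{2}\log(\delta_{82\,s}^{-1})\,\Lambda_{ij}$. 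Combining the two inequalities — the constant $82$ is picked precisely so that the ratio $82/4 = 20.5$ dominates $25/2 = 12.5$ while absorbing the gap between $\log(\delta_s^{-1})$ and $\log(\delta_{82\,s}^{-1})$, which is bounded since $\log(\delta_{82\,s}^{-1}) - \log(\delta_s^{-1})$ is at most an absolute constant plus $\log(82^2)$ — gives the strict inequality $\Lambda_{jk} < \Lambda_{ij}$.

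The ultrametric property of Lemma~\ref{lem:ultram} then yields $\Lambda_{ik} \le \max\{\Lambda_{ij},\Lambda_{jk}\} = \Lambda_{ij}$. Since $\Upsilon_i$ is the set of minimisers of $m \mapsto \Lambda_{im}$, this means $k \in \Upsilon_i$. But $\mu_k > \mu_j$ contradicts the choice of $j$ as the element of $\Upsilon_i$ with the largest mean, which closes the argument and proves $\Upsilon_i \cap C_t \ne \emptyset$.

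The main obstacle will be making the numerical bookkeeping rigorous: one must verify that the ratio of the two bounds produced by Lemma~\ref{lem:ord_2}, once the logarithms $\log(\delta_s^{-1})$ and $\log(\delta_{82\,s}^{-1})$ are carefully compared, leaves enough slack for a strict inequality $\Lambda_{jk} < \Lambda_{ij}$ uniformly in $s \ge 1$. This is the sole reason for the specific constant $82$ in line~\ref{line:algo}; the rest of the proof is a direct composition of Lemmas~\ref{lem:elim_2}, \ref{lem:ord_2} and \ref{lem:ultram}.
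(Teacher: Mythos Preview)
Your proposal is correct and follows essentially the same approach as the paper's proof: a contradiction argument selecting $j$ as the element of $\Upsilon_i$ with largest mean, invoking Lemma~\ref{lem:ord_2} in both directions on the rounds $s$ and $82s$, combining to get $\Lambda_{jk}\le\Lambda_{ij}$, and then closing via the ultrametric Lemma~\ref{lem:ultram} to force $k\in\Upsilon_i$ in contradiction with the maximality of $\mu_j$. Your identification of the numerical bookkeeping between $\log(\delta_s^{-1})$ and $\log(\delta_{82s}^{-1})$ as the only delicate point is exactly right, and matches what the paper does.
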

	
	\begin{proof}
		Suppose that $(\mathcal{B}_1)$ holds. Let $t \ge 1$, $i\in \intr{K}\setminus\{i^*\}$. Proceeding by proof via contradiction, suppose that $\Upsilon_i \cap C_t = \emptyset$. This implies in particular that all elements in $\Upsilon_i$ were eliminated prior to $t$. Let $j$ denote the element of $\Upsilon_i$ with the largest mean:
		\[
		j \in \argmax_{l \in \Upsilon_i} \{\mu_l \}.
		\]
		Let $s$ denote the round where $j$ has failed the test (i.e.  $\exists k \in C_s, \hat{\Delta}_{jk}(s, \delta) >0$). 
		
		\noindent Hence, using Lemma~\ref{lem:ord_2}, we have
		\begin{equation}\label{eq:ineq_1}
		\frac{1}{4}\log(\delta_s^{-1}) \Lambda_{jk} \le s.
		\end{equation}
		Moreover,  $j$ was kept for testing up to round $82~s$ (i.e. $j \in C_{82s}$) and $82s <t$ (since $j \notin C_t$). At  round $82s$ we necessarily had $\hat{\Delta}_{ij}(82s, \delta) \le 0$. 
		
		\noindent Therefore, using Lemma~\ref{lem:ord_2}
		\begin{equation}\label{eq:ineq_2}
		82s \le 25 \log(\delta^{-1}_{82s})\Lambda_{ij}.
		\end{equation}
		Combining \eqref{eq:ineq_1} and \eqref{eq:ineq_2} gives
		\begin{equation*}
		\frac{82}{4}\log(\delta_s^{-1}) \Lambda_{jk} \le 25\log(\delta_{82s}^{-1}) \Lambda_{ij}.
		\end{equation*}
		
		\noindent Therefore
		\begin{equation}\label{eq:contr}
		\Lambda_{jk} \le \frac{50}{82}\left(1+\frac{\log(82)}{\log(1/\delta_s)} \right) \Lambda_{ij} \le \Lambda_{ij},
		\end{equation}
		recall that $\delta<1/4$, $s\ge 3$, $K\ge 3$, therefore: $\frac{50}{82}\left(1+\frac{\log(82)}{\log(1/\delta_s)} \right) \le 1$. 
		Using Lemma~\ref{lem:ultram}, we have
		\begin{equation}\label{eq:ultram1}
		\Lambda_{ik} \le \max\{\Lambda_{ij}, \Lambda_{jk}\}.
		\end{equation}
		We plug the bound $\Lambda_{jk} \le \Lambda_{ij}$ from \eqref{eq:contr} into \eqref{eq:ultram1} and obtain $\Lambda_{ik} \le \Lambda_{ij}$.
		Therefore $k\in \Upsilon_i$. 
		
		To conclude, recall that $k$ eliminates $j$, hence $\mu_k > \mu_j$. The contradiction arises from $k \in \Upsilon_i$ and  the definition of $j$ as the element with largest mean in $\Upsilon_i$.
	\end{proof}

	We introduce the following notation. For $i \in  \intr{K}$ and $t \ge 1$ let $N_{i,t}$ denote the number of queries made for arm $i$ up to round $t$
	\begin{equation}\label{eq:def_nit}
	N_{i,t} := \sum_{s=1}^{t} \mathds{1}\left(i \in C_s\right).
	\end{equation}

	\begin{lemma}\label{lem:ni0}
		Consider Algorithm~\ref{algo:1} with inputs $\delta \in (0,1)$. If $(\mathcal{B}_1)$ defined in \eqref{eq:eventa1} holds, then we have for each $i\in \intr{K} \setminus \{i^*\}$:
		\[
		\forall t \ge 1: \quad N_{i,t} \le 1368 \min_{j\in \intr{K}}\{\Lambda_{ij} \}   \log(216K^2\delta^{-1}\min_{j\in \intr{K}}\{\Lambda_{ij} \}),
		\] 
		where $N_{i,t}$ is defined in \eqref{eq:def_nit}.
	\end{lemma}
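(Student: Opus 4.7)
The plan is to bound $t^*_i$, the round at which arm $i$ is eliminated from $S$, and then exploit the fact that by line~\ref{line:algo} of Algorithm~\ref{algo:1} arm $i$ is kept in $C$ only until round $82\, t^*_i$, so that $N_{i,t} \le 82\, t^*_i$ for every $t \ge 1$. The lemma therefore reduces to an upper bound on $t^*_i$ in terms of $\Lambda_i := \min_{j \in \intr{K}} \Lambda_{ij}$.

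To control $t^*_i$, I apply Lemma~\ref{lem:inter_1} at an arbitrary round $s$ with $i \in S_s$: there exists $j \in \Upsilon_i \cap C_s$, and for any such $j$ we have $\Lambda_{ij} = \Lambda_i$ together with $\mu_j > \mu_i$ (the latter forced by the finiteness of $\Lambda_{ij}$). Non-elimination of $i$ at round $s$ means $\hat{\Delta}_{ij'}(s,\delta) \le 0$ for every $j' \in C_s$, hence in particular for the above $j$. The second bound of Lemma~\ref{lem:ord_2} then yields
\[
s - 1 \le \tfrac{25}{2} \log(\delta_s^{-1})\, \Lambda_i .
\]
Applying this at $s = t^*_i - 1$ (the cases $t^*_i \le 2$ being trivial) furnishes the implicit inequality
\[
t^*_i - 2 \,\le\, \tfrac{25}{2}\, \Lambda_i \, \log\!\left(\frac{2K^2 (t^*_i - 1)\, t^*_i}{\delta}\right).
\]
As a byproduct this shows $t^*_i$ is automatically finite under $(\mathcal{B}_1)$, since the left-hand side grows linearly while the right-hand side grows only logarithmically, which would force a contradiction if $i$ were never eliminated.

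The remaining step is to invert the above implicit inequality into a closed-form bound $t^*_i \le A\, \Lambda_i \log(B K^2 \Lambda_i/\delta)$, after which multiplying by $82$ yields the stated inequality. I would proceed by the standard bootstrap: first derive a crude upper bound $\widetilde{T}$ on $t^*_i$ (for instance by using $\log(x) \le \sqrt{x}$ to absorb the $t^*_i$ appearing inside the logarithm), and then substitute $\widetilde{T}$ back into the logarithm to obtain the refined bound with the tight logarithmic argument $216 K^2 \Lambda_i /\delta$. I expect this bookkeeping to be the main obstacle of the proof: conceptually routine but requiring care to track the constants precisely enough that the factor $25/2$ from Lemma~\ref{lem:ord_2}, the extra factor $82$ from line~\ref{line:algo}, and the $\log$-of-$\log$ slack from the bootstrap combine to exactly the advertised $1368$ and $216$.
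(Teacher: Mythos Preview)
Your proposal is correct and follows essentially the same approach as the paper: bound the last round $u$ with $i\in S_u$ via Lemma~\ref{lem:inter_1} (to secure some $j\in\Upsilon_i\cap C$) together with the second inequality of Lemma~\ref{lem:ord_2} applied at $u-1$, then multiply by $82$. For the final inversion step you describe a bootstrap argument by hand; the paper packages this as the ready-made Lemma~\ref{cl:calpure2} (applied with $x=u$ and $c=\delta/(2K^2)$), which immediately turns $u-1 \le \tfrac{25}{2}\Lambda_i\log(\delta_{u-1}^{-1})$ into the closed-form bound and spares you the constant-tracking you anticipate.
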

	\begin{proof}
		Suppose $(\mathcal{B}_1)$ holds. Let $i \in \intr{K}\setminus \{i^*\}$ and $t\ge 1$. Let $u$ denote the last round such that $i \in S_u$. Lemma~\ref{lem:inter_1} states that $\Upsilon_i \cap C_u \neq \emptyset$, where $\Upsilon_i$ is defined in \eqref{eq:def_si1}. Let $j \in \Upsilon_i \cap C_u$, since $i \in S_u$, we necessarily have
		\[
		\hat{\Delta}_{ij}(u-1, \delta) \le 0.
		\]
		Using Lemma~\ref{lem:ord_2}, we have
		\[
		u-1 \le \frac{25}{2} \log(\delta_{u-1}^{-1}) \Lambda_{ij}.
		\]
		Recall that $u$ is the last round such that $i \in S_u$, hence $i \notin C_{19u+1}$. Therefore, for any $t \ge 1$
		\begin{align*}
		N_{i,t} = 82u &\le 1013\log(\delta_{u}^{-1})  \Lambda_{ij}\\
		&\le 2026 \min_{j\in \intr{K}}\{\Lambda_{ij} \}\log(216K^2\delta^{-1}\min_{j\in \intr{K}}\{\Lambda_{ij} \}),
		\end{align*}
		where we used Lemma~\ref{cl:calpure2} with $x=u$ and $c=\delta/(2K^2)$.
	\end{proof}
	
	\paragraph{Proof for Theorem~\ref{thm:2}}
	
	Suppose Assumptions~\ref{a:0} and~\ref{a:bounded} hold. Consider Algorithm~\ref{algo:1} with input $\delta \in (0,1/4)$. Suppose that event $(\mathcal{B}_1)$ holds. 
	
	\noindent We have by definition of the total number of queries made $N$:
	\[
	N = \sum_{i \in \intr{K}\setminus \{i^*\}} N_{i,t}.
	\] 
	\noindent Therefore, Lemma~\ref{lem:ni0} gives the result.
	For the second result, consider Algorithm~\ref{algo:1} without line~\eqref{line:algo} (we stop sampling arms directly after their elimination from $S_t$). Lemma~\ref{lem:elim_2} guarantees with probability at least $1-\delta$, the optimal arm $i^*$ always belongs to $S_t$. Therefore, Lemma~\ref{lem:ord_1} guarantees that after at most $\frac{25}{2} \log(\delta_t^{-1}) \Lambda_{i i^*}$ rounds, we have $\hat{\Delta}_{ii^*}(t,\delta) >0$, which leads to the elimination of $i$.

	\section{Proof of Theorem~\ref{thm:0}}

	For any $i \in \intr{K}\setminus \{i^*\}$, let us define $\Upsilon'_i$ by
	\begin{equation}\label{eq:def_si2g}
	\Upsilon'_i := \argmin_{j \in \intr{K}} \left\lbrace  \Lambda'_{ij} \vee \frac{1}{4} \right\rbrace.
	\end{equation}
	
	\begin{lemma}\label{lem:inter_1g}
		Consider Algorithm~\ref{algo:1} with inputs $\delta \in (0,1)$. If $(\mathcal{A}_1)$ defined in \eqref{eq:eventa1} holds, then for any $i\in \intr{K} \setminus \{ i^*\}$ and $t \ge 1$: 
		
		If $i \in S_t$, then $\Upsilon'_i \cap C_t \neq \emptyset$,
		where $\Upsilon'_i$ is defined in \eqref{eq:def_si2g}.
	\end{lemma}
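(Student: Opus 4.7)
The plan is to adapt the proof of Lemma~\ref{lem:inter_1} to the Gaussian setting, substituting Lemma~\ref{lem:ord_0} for Lemma~\ref{lem:ord_2} and accounting for the threshold $1/4$ that appears in the definition of $\Upsilon'_i$. Suppose for contradiction that $i \in S_t$ but $\Upsilon'_i \cap C_t = \emptyset$; every element of $\Upsilon'_i$ must then have been eliminated from $C$ strictly before round $t$. Choose $j \in \argmax_{l \in \Upsilon'_i} \mu_l$, and let $s$ be the round at which $j$ was eliminated from $S$ by some arm $k \in C_s$, so $\hat{\Delta}'_{jk}(s,\delta) > 0$. Because $j$ is kept in $C$ up to round $82s$ and fails to eliminate $i$ at that round, we also have $\hat{\Delta}'_{ij}(82s,\delta) \le 0$.

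Apply Lemma~\ref{lem:ord_0} to each event. The first gives the lower bound $s - 1 \ge \tfrac{1}{2}\log(1/\delta_s)\,\Lambda'_{jk}$, while the second gives an upper bound on $82s - 1$ in terms of $\Lambda'_{ij}$: linear in the regime $\Lambda'_{ij} \ge 1/5$, namely $82s - 1 \le 25\log(1/\delta_{82s})\,\Lambda'_{ij}$; and logarithmic in the regime $\Lambda'_{ij} < 1/5$, namely $82s - 1 \le (3/2)\log(1/\delta_{82s})/\log(\sqrt{2}/(5\Lambda'_{ij}))$. Combining these with the lower bound and absorbing the small correction $\log(1/\delta_{82s})/\log(1/\delta_s) = 1 + O(\log(82)/\log(1/\delta_s))$ into constants (as done in \eqref{eq:contr}), one obtains $\Lambda'_{jk} \le (25/41)\Lambda'_{ij}$ in the linear regime and $\Lambda'_{jk}\log(\sqrt{2}/(5\Lambda'_{ij})) \le 3/82$ in the logarithmic one.

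A short case analysis then yields $\Lambda'_{jk} \vee \tfrac{1}{4} \le \Lambda'_{ij} \vee \tfrac{1}{4}$: for $\Lambda'_{ij} \ge 1/4$ the linear bound gives $\Lambda'_{jk} \le \Lambda'_{ij}$ directly; for $\Lambda'_{ij} \in [1/5, 1/4)$ the same bound gives $\Lambda'_{jk} < 1/4$; and for $\Lambda'_{ij} < 1/5$ the logarithmic bound gives $\Lambda'_{jk} < 1/4$, since $\log(\sqrt{2}/(5\Lambda'_{ij})) > \log\sqrt{2}$. Combined with the ultrametric inequality $\Lambda'_{ik} \le \max\{\Lambda'_{ij}, \Lambda'_{jk}\}$ from Lemma~\ref{lem:ultram}, this implies $\Lambda'_{ik} \vee \tfrac{1}{4} \le \Lambda'_{ij} \vee \tfrac{1}{4}$, so $k \in \Upsilon'_i$. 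Since $k$ eliminated $j$ we have $\mu_k > \mu_j$, contradicting the choice of $j$ as an element of largest mean in $\Upsilon'_i$.

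The main obstacle will be keeping the numerical constants honest in the transition regime $\Lambda'_{ij} \approx 1/5$ between the linear and logarithmic branches of Lemma~\ref{lem:ord_0}, together with uniform control of the correction $\log(1/\delta_{82s})/\log(1/\delta_s)$ for $s \ge 3$ and $\delta \in (0, 1/4)$. Should the constants turn out to be slightly too tight at the transition, the argument can be recovered by enlarging the constant $82$ in line~\ref{line:algo} of Algorithm~\ref{algo:1}, which is a free parameter of the analysis (as noted in Remark~\ref{rem:1}) and only affects sampling overhead, not correctness.
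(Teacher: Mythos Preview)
Your proposal is correct and follows essentially the same argument as the paper: a contradiction argument picking $j$ of largest mean in $\Upsilon'_i$, the two bounds from Lemma~\ref{lem:ord_0} applied at rounds $s$ and $82s$, the two regimes $\Lambda'_{ij} \gtrless 1/5$, and then the ultrametric inequality from Lemma~\ref{lem:ultram} to conclude $k \in \Upsilon'_i$. Your packaging of the case analysis---first establishing $\Lambda'_{jk}\vee\tfrac{1}{4}\le\Lambda'_{ij}\vee\tfrac{1}{4}$ uniformly and then invoking the ultrametric once---is slightly tidier than the paper's, which in Case~2 splits again on which of $\Lambda'_{ij},\Lambda'_{jk}$ attains the maximum, but the content is the same.
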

	
	\begin{proof}
		Suppose that $(\mathcal{A}_1)$ holds. Let $t \ge 2$, $i\in \intr{K}\setminus\{i^*\}$. Proceeding by proof via contradiction, suppose that $\Upsilon'_i \cap C_t = \emptyset$. This implies in particular that all elements in $\Upsilon'_i$ were eliminated prior to $t$. Let $j$ denote the element of $\Upsilon'_i$ with the largest mean:
		\[
		j \in \argmax_{l \in \Upsilon'_i} \{\mu_l \}.
		\]
		Let $s$ denote the round where $j$ has failed the test (i.e.  $\exists k \in C_s, \hat{\Delta}'_{jk}(s, \delta) >0$). 
		
		\noindent Hence, using Lemma~\ref{lem:ord_0}, we have
		\begin{equation}\label{eq:ineq_1g}
		\frac{1}{2}\Lambda'_{jk} \log(1/\delta_s)  \le s-1.
		\end{equation}
		We consider two cases:
		\paragraph{Case 1: $\Lambda'_{ij}\ge \frac{1}{5}$.}
		Observe that $j$ was kept for testing up to round $35s$ (i.e. $j \in C_{82s}$) and $82s <t$ (since $j \notin C_t$). At  round $82s$ we necessarily had $\hat{\Delta}'_{ij}(82s, \delta) \le 0$. 
		
		\noindent Therefore, using Lemma~\ref{lem:ord_0}
		\begin{equation}\label{eq:ineq_2g}
		82s-1 \le 25 \Lambda'_{ij} \log(1/\delta_{35s}).
		\end{equation}
		Combining \eqref{eq:ineq_1g} and \eqref{eq:ineq_2g} gives
		\begin{equation*}
		\frac{82}{2} \Lambda'_{jk}\log(1/\delta_s)  \le 25\Lambda'_{ij}\log(1/\delta_{82s}).
		\end{equation*}
		Therefore
		\begin{equation}\label{eq:contrg}
		\Lambda'_{jk} \le \frac{50}{82} \left(1 + \frac{\log(82)}{\log(\delta_s^{-1})}\right) \Lambda'_{ij} \le \Lambda'_{ij}.
		\end{equation}
		Using Lemma~\ref{lem:ultram}, we have
		\begin{equation}\label{eq:ultram1g}
		\Lambda'_{ik} \le \max\{\Lambda'_{ij}, \Lambda'_{jk}\}.
		\end{equation}
		We plug the bound $\Lambda'_{jk} \le \Lambda'_{ij}$ from \eqref{eq:contrg} into \eqref{eq:ultram1g} and obtain $\Lambda'_{ik} \le \Lambda'_{ij}$.
		Therefore $k\in \Upsilon'_i$. 
		
		To conclude, recall that $k$ eliminates $j$, hence $\mu_k > \mu_j$. The contradiction arises from $k \in \Upsilon'_i$ and  the definition of $j$ as the element with largest mean in $\Upsilon'_i$.
		
		\paragraph{Case 2: $\Lambda'_{ij}< \frac{1}{5}$.} As in the previous case, observe that $j$ was kept for testing up to round $82s$ (i.e. $j \in C_{82s}$) and $82s <t$ (since $j \notin C_t$). At  round $82s$ we necessarily had $\hat{\Delta}_{ij}(82s, \delta) \le 0$. 
		
		\noindent Therefore, using Lemma~\ref{lem:ord_0}
		\begin{equation}\label{eq:ineq_2g2}
		82s-1 \le \frac{3}{2\log\left(1/(3\sqrt{2}\Lambda'_{ij})\right)} \log(1/\delta_{82s}).
		\end{equation}
		Combining \eqref{eq:ineq_1g} and \eqref{eq:ineq_2g2} gives
		\begin{equation*}
		\frac{82}{2} \Lambda'_{jk}\log(1/\delta_s)  \le  \frac{3\log(1/\delta_{82s})}{2\log(\sqrt{2}/(5\Lambda'_{ij}))}.
		\end{equation*}
		Therefore
		\begin{align*}
		\Lambda'_{jk} &\le \frac{3}{82} \left(1+\frac{\log(82)}{\log(1/\delta_s)}\right)~\frac{1}{\log(\sqrt{2}/(5\Lambda'_{ij}))}\\
		&\le \frac{6}{82\log(1/(3\sqrt{2}\Lambda'_{ij}))}.
		\end{align*}
		Recall that using Lemma~\ref{lem:ultram}, we have: $\Lambda'_{ik} \le \max\{\Lambda'_{ij}, \Lambda'_{jk}\}$. This implies necessarily that $\Lambda'_{ik} \le \Lambda'_{jk}$. Otherwise if the maximum of the l.h.s is $\Lambda'_{ij}$, we get $\Lambda'_{ij} \le \Lambda'_{ij}$ then $k \in \Upsilon'_i$, therefore by definition of $j$ as the largest mean of $\Upsilon'_i$: $\mu_k \le \mu_j$, which contradicts the fact that $k$ eliminated $j$. We conclude that (using $\Lambda'_{ij}<1/5$):
		\[
		\Lambda'_{ik} \le \frac{6}{82\log(\sqrt{2}/(5\Lambda'_{ij}))}< \frac{1}{4}.
		\]
		Therefore $k \in \argmin_{j \in \intr{K}} \left\lbrace\Lambda'_{ij} \vee \frac{1}{4}\right\rbrace$, which similarly to the case above, leads to a contradiction with the definition of $j$.

	\end{proof}

	We introduce the following notation. For $i \in  \intr{K}$ and $t \ge 1$ let $N_{i,t}$ denote the number of queries made for arm $i$ up to round $t$
	\begin{equation}\label{eq:def_nitg}
	N_{i,t} := \sum_{s=1}^{t} \mathds{1}\left(i \in C_s\right).
	\end{equation}

	\begin{lemma}\label{lem:ni0g}
		Consider Algorithm~\ref{algo:1} with inputs $\delta \in (0,1)$. If $(\mathcal{A}_1)$ defined in \eqref{eq:eventa2-G} holds, then we have for each $i\in \intr{K} \setminus \{i^*\}$:
		\[
		\forall t \ge 1: \quad N_{i,t} \le 4100\log(216K\delta^{-1} \min_{j\in \intr{K}}\{ \Lambda'_{ij}\vee 1\}) ) \min_{j\in \intr{K}}\left\lbrace \Lambda'_{ij}\vee \frac{1}{4}\right\rbrace,
		\] 
		where $N_{i,t}$ is defined in \eqref{eq:def_nit}.
	\end{lemma}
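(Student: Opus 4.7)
The plan is to adapt the argument of Lemma~\ref{lem:ni0} to the Gaussian regime, replacing the uses of Lemma~\ref{lem:ord_2} and Lemma~\ref{lem:inter_1} by their Gaussian counterparts Lemma~\ref{lem:ord_0} and Lemma~\ref{lem:inter_1g}, and carefully handling the two regimes of $\Lambda'_{ij}$ that appear in Lemma~\ref{lem:ord_0}. I would assume event $(\mathcal{A}_1)$ holds throughout and fix a suboptimal arm $i \in \intr{K}\setminus\{i^*\}$ and some $t \ge 1$.

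First I would let $u$ denote the last round at which $i$ is still a candidate, i.e.\ $i \in S_u$. The book-keeping of Algorithm~\ref{algo:1} (line~\ref{line:algo}) guarantees that $i$ is then queried at most up to round $82u$, so it suffices to bound $u$. Apply Lemma~\ref{lem:inter_1g} at round $u$ to obtain some $j \in \Upsilon'_i \cap C_u$. Because $i$ was not eliminated before round $u$, at round $u-1$ every test involving $i$ must have failed; in particular $\hat{\Delta}'_{ij}(u-1,\delta) \le 0$ for this $j$ (note $j \in C_u$ means $j$ was available at round $u-1$ as well, since arms only get removed from $C$, not added).

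Now I would invoke Lemma~\ref{lem:ord_0} applied to $(i,j)$ at round $u-1$, splitting on whether $\Lambda'_{ij} \ge 1/5$ or $\Lambda'_{ij} < 1/5$. In the first case, $u-1 \le 25\log(1/\delta_{u-1})\,\Lambda'_{ij}$, so $N_{i,t} \le 82u$ is controlled by a constant times $\log(1/\delta_{u-1})\,\Lambda'_{ij}$. In the second case, $\Lambda'_{ij} < 1/5$, the lemma yields $u - 1 \le \tfrac{3\log(1/\delta_{u-1})}{2\log(\sqrt{2}/(5\Lambda'_{ij}))}$; the denominator is uniformly bounded below by $\log(\sqrt{2})$, so $u$ is then at most an absolute constant times $\log(1/\delta_{u-1})$, which fits into the bound $\Lambda'_{ij} \vee 1/4 \ge 1/4$ advertised by $\Upsilon'_i$. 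Combining both cases gives $N_{i,t} \le 82u \le c \log(1/\delta_{u-1}) \, (\Lambda'_{ij} \vee 1/4)$ for a numerical constant $c$.

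Finally I would use that $j \in \Upsilon'_i$ so that $\Lambda'_{ij} \vee 1/4 = \min_{j' \in \intr{K}}\{\Lambda'_{ij'} \vee 1/4\}$, and remove the implicit dependence on $u$ inside $\log(1/\delta_{u-1}) = \log(2K^2 u(u+1)/\delta)$ via the self-bounding Lemma~\ref{cl:calpure2} (as in the proof of Lemma~\ref{lem:ni0}) applied with $c = \delta/(2K^2)$, which replaces $\log(1/\delta_{u-1})$ by a constant multiple of $\log(216 K \delta^{-1} \min_{j}\{\Lambda'_{ij} \vee 1\})$ and absorbs the extra constants into the prefactor $4100$. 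The bound then holds uniformly in $t$ since $N_{i,t} \le 82u$ for every $t \ge 1$. The main obstacle I anticipate is a clean bookkeeping of the constants between the two cases of Lemma~\ref{lem:ord_0} so that the final prefactor really fits $4100$ and the $\log$ argument matches $216K\delta^{-1}\min_j\{\Lambda'_{ij}\vee 1\}$; verifying that $\Lambda'_{ij}\vee 1$ inside the logarithm dominates both the $\log(82)$ slack and the $\log u$ term produced by inverting $\delta_{u-1}$ is the delicate bit, but it follows the same pattern as the bounded case.
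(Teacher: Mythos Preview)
Your proposal is correct and follows essentially the same route as the paper: define $u$ as the last round with $i\in S_u$, invoke Lemma~\ref{lem:inter_1g} to produce $j\in\Upsilon'_i\cap C_u$, use $\hat\Delta'_{ij}(u-1,\delta)\le 0$ together with Lemma~\ref{lem:ord_0} to bound $u$, multiply by $82$, and clean up the implicit $u$-dependence in $\log(1/\delta_{u-1})$ via Lemma~\ref{cl:calpure2}. The only cosmetic difference is that the paper applies the \emph{unified} conclusion of Lemma~\ref{lem:ord_0}, namely $u-1\le 25\log(1/\delta_{u-1})/\log(1+1/\Lambda'_{ij})$, and only afterwards converts $1/\log(1+1/\Lambda'_{ij})$ into a constant times $\Lambda'_{ij}\vee\tfrac14$, whereas you go through the two regimes $\Lambda'_{ij}\gtrless 1/5$ separately; both reach the same bound with comparable constants.
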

	\begin{proof}
		Suppose $(\mathcal{A}_1)$ holds. Let $i \in \intr{K}\setminus \{i^*\}$ and $t\ge 1$. Let $u$ denote the last round such that $i \in S_u$. Lemma~\ref{lem:inter_1g} states that $\Upsilon'_i \cap C_u \neq \emptyset$, where $\Upsilon'_i$ is defined in \eqref{eq:def_si1}. Let $j \in \Upsilon'_i \cap C_u$, since $i \in S_u$, we necessarily have
		\[
		\hat{\Delta}'_{ij}(u-1, \delta) \le 0.
		\]
		Using Lemma~\ref{lem:ord_0}, we have
		\[
		u-1 \le \frac{25}{\log\left(1+1/\Lambda'_{ij}\right)} \log(\delta_{u-1}^{-1}).
		\]
		Recall that $u$ is the last round such that $i \in S_u$, hence $i \notin C_{35u+1}$. Therefore, for any $t \ge 1$
		\begin{align*}
		N_{i,t} = 82u &\le \frac{2050}{\log\left(1+1/\Lambda'_{ij}\right)}\log(\delta_{u}^{-1})\\
		&\le 4100 \log(216K(\Lambda'_{ij}+1)\delta^{-1}) \frac{1}{\log\left(1+1/\Lambda'_{ij}\right)}\\
		&\le 4100 \log(216K(\Lambda'_{ij}+1)\delta^{-1}) \frac{1}{\log\left(1+1/\Lambda'_{ij}\right)}\\
		&\le 4100 \log(216K(\Lambda'_{ij}+1)\delta^{-1}) \frac{1}{\log\left(1+1/(\Lambda'_{ij} \vee 0.25)\right)}\\
		&\le 4100 \log(216K(\Lambda'_{ij}+1)\delta^{-1}) \frac{8}{7}(\Lambda'_{ij} \vee 0.25)\\
		&\le 4100 \log(216K(\Lambda'_{ij}+1)\delta^{-1}) \min_{j\in \intr{K}} (\Lambda'_{ij} \vee 0.25),
		\end{align*}
		$j \in \Upsilon_i$ and Lemma~\ref{cl:calpure2} with $x=u$ and $c=\delta/(2K^2)$.
	\end{proof}
	
	\paragraph{Proof for Theorem~\ref{thm:0}}
	
	Suppose Assumptions~\ref{a:0} and~\ref{a:gaussian}. Consider Algorithm~\ref{algo:1} with input $\delta \in (0,1/4)$. Suppose that event $(\mathcal{B}_1)$ holds. 
	
	\noindent We have by definition of the total number of queries made $N$:
	\[
	N = \sum_{i \in \intr{K}\setminus \{i^*\}} N_{i,t}.
	\] 
	\noindent Therefore, Lemma~\ref{lem:ni0} gives the result.
	For the second result, consider Algorithm~\ref{algo:1} without line~\eqref{line:algo} (we stop sampling arms directly after their elimination from $S_t$). Lemma~\ref{lem:elim_2} guarantees with probability at least $1-\delta$, the optimal arm $i^*$ always belongs to $S_t$. Therefore, Lemma~\ref{lem:ord_1} guarantees that after at most $\frac{25}{2} \log(\delta_t^{-1}) \frac{1}{\log\left(1+1/\Lambda'_{i i^*}\right)}$ rounds, we have $\hat{\Delta}'_{ii^*}(t,\delta) >0$, which leads to the elimination of $i$.

	\section{Proof of Theorem~\ref{thm:1}}

	We provide the same type of guarantees for Algorithm~\ref{algo:0}. 
	
	\noindent For any $u, v \in \mathbb{B}_1^+$, we overload the notation $\Xi_i(u)$ into
	\[
	\Xi_{u}(v) := \left\{
	\begin{array}{ll}
	+\infty & \mbox{if } \langle u, \bm{\mu} \rangle \le \langle v, \bm{\mu} \rangle \\
	\max\left\lbrace \frac{\Var( \langle \bm{X}, u \rangle- \langle \bm{X}, v \rangle )}{(\langle u, \bm{\mu} \rangle - \langle v, \bm{\mu} \rangle )^2} , \frac{ 3\norm{u-v}_1}{\langle v, \bm{\mu} \rangle- \langle u, \bm{\mu} \rangle } \right\rbrace  & \mbox{otherwise }
	\end{array}
	\right.
	\]
	In particular we have $\Xi_{e_i}(w) = \Xi_i(w)$, where $(e_i)_{i \in \intr{K}}$ is the canonical basis of $\R^K$.
	We say that an arm $i\in \intr{K}$ has failed the $\Gamma$-test at round $t$, if
	\[
	\sup_{w \in \mathbb{B}_1^+(C_t\setminus \{i\})} \hat{\Gamma}_i(w, t, \delta) >0. 
	\]
	\begin{lemma}\label{lem:ultram2}
		Let $i \in \intr{K}$, $u,v \in \mathbb{B}_1^+(\intr{K}\setminus \{i\})$, we have
		\[
		\Xi_{i}(v) \le \max\{ \Xi_{i}(u), \Xi_u(v) \}.
		\]
	\end{lemma}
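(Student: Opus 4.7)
The plan is to mirror the proof of Lemma~\ref{lem:ultram}, treating the two ingredients of $\Xi$ (the variance term and the $\ell_1$ term) separately, and then combining via the elementary inequality $(a+b)/(c+d)\le \max\{a/c,b/d\}$ for positive $c,d$ from Lemma~\ref{cl:calpure_2}.

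First I handle trivialities. If either $\Xi_i(u)=+\infty$ or $\Xi_u(v)=+\infty$ the right-hand side is $+\infty$ and nothing is to prove, so we may assume both are finite. By the respective definitions, this gives the strict chain $\mu_i<\langle u,\bm{\mu}\rangle<\langle v,\bm{\mu}\rangle$, in particular $\langle v,\bm{\mu}\rangle>\mu_i$ so that $\Xi_i(v)$ is also defined by its finite formula. Setting $a_1:=\langle u,\bm{\mu}\rangle-\mu_i>0$ and $a_2:=\langle v,\bm{\mu}\rangle-\langle u,\bm{\mu}\rangle>0$, the key observation is the additive split $\langle v,\bm{\mu}\rangle-\mu_i=a_1+a_2$.

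Next I bound the variance term. Using linearity, $\langle v,\bm{X}\rangle-X_i=(\langle u,\bm{X}\rangle-X_i)+(\langle v,\bm{X}\rangle-\langle u,\bm{X}\rangle)$, and the triangle inequality for the standard deviation yields
\[
\sqrt{\Var(\langle v,\bm{X}\rangle-X_i)}\le \sqrt{\Var(\langle u,\bm{X}\rangle-X_i)}+\sqrt{\Var(\langle v,\bm{X}\rangle-\langle u,\bm{X}\rangle)}.
\]
Dividing by $a_1+a_2$ and applying Lemma~\ref{cl:calpure_2} gives
\[
\frac{\sqrt{\Var(\langle v,\bm{X}\rangle-X_i)}}{\langle v,\bm{\mu}\rangle-\mu_i}\le \max\!\left\{\frac{\sqrt{\Var(\langle u,\bm{X}\rangle-X_i)}}{a_1},\frac{\sqrt{\Var(\langle v,\bm{X}\rangle-\langle u,\bm{X}\rangle)}}{a_2}\right\},
\]
and squaring shows the first coordinate in $\Xi_i(v)$ is bounded by $\max\{\Xi_i(u),\Xi_u(v)\}$.

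For the $\ell_1$ term I use the triangle inequality $\|v-e_i\|_1\le\|u-e_i\|_1+\|v-u\|_1$, and apply Lemma~\ref{cl:calpure_2} again to
\[
\frac{3\|v-e_i\|_1}{\langle v,\bm{\mu}\rangle-\mu_i}=\frac{3\|u-e_i\|_1+3\|v-u\|_1}{a_1+a_2}\le \max\!\left\{\frac{3\|u-e_i\|_1}{a_1},\frac{3\|v-u\|_1}{a_2}\right\},
\]
which is at most $\max\{\Xi_i(u),\Xi_u(v)\}$. Taking the maximum over the two coordinates finishes the proof. The only delicate point, and the only thing to verify carefully, is the sign convention in the overloaded definition of $\Xi_u(v)$: one must read it so that the split $a_1,a_2>0$ indeed makes both denominators strictly positive, so that Lemma~\ref{cl:calpure_2} applies; everything else is a direct transcription of the argument used for Lemma~\ref{lem:ultram}.
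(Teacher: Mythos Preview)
Your proof is correct and follows essentially the same route as the paper's own proof: reduce to the finite case $\mu_i<\langle u,\bm\mu\rangle<\langle v,\bm\mu\rangle$, bound the standard-deviation term and the $\ell_1$ term separately via the triangle inequality, and feed each into Lemma~\ref{cl:calpure_2}. One small slip: in your displayed $\ell_1$ chain the ``$=$'' should be ``$\le$'' (the triangle inequality is an inequality, not an identity); with that fixed the argument is complete and matches the paper.
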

	\begin{proof}
		Let $i \in \intr{K}$ and $u,v \in \mathbb{B}_1^+(\intr{K}\setminus \{i\})$. 
		Suppose that $\mu_i < \langle u, \bm{\mu} \rangle < \langle v, \bm{\mu} \rangle$. We have
		\begin{align*}
		\frac{\sqrt{\Var(X_i- \langle v, \bm{X}\rangle)}}{\langle v, \bm{\mu}\rangle - \mu_i} &\le \frac{ \sqrt{\Var(X_i- \langle u, \bm{X}\rangle)} + \sqrt{\Var(\langle u, \bm{X}\rangle- \langle v, \bm{X}\rangle)} }{(\langle v, \bm{\mu}\rangle - \langle u, \bm{\mu}\rangle)+(\langle u, \bm{\mu}\rangle - \mu_i)}\\
		&\le \max\left \lbrace \frac{\sqrt{\Var(X_i- \langle u, \bm{X}\rangle)}}{\langle v, \bm{\mu}\rangle - \langle u, \bm{\mu}\rangle}, \frac{\sqrt{\Var(\langle u, \bm{X}\rangle- \langle v, \bm{X}\rangle)}}{\langle u, \bm{\mu}\rangle - \mu_i} \right \rbrace,
		\end{align*}
		where the first line follows by the triangle inequality and the second is a consequence of the inequality $\frac{a+b}{c+d} \le \max\{\frac{a}{c}, \frac{b}{d} \}$ for positive numbers (Lemma~\ref{cl:calpure_2}). 
		Moreover we have
		\begin{align*}
			\frac{\norm{v-e_i}_1}{\langle v-e_i, \bm{\mu}\rangle} &\le \frac{\norm{v-u}_1 + \norm{u-e_i}_1}{\langle v-u, \bm{\mu}\rangle +\langle u-e_i, \bm{\mu}\rangle}\\
			&\le \max\left\lbrace \frac{\norm{v-u}_1}{\langle v-u, \bm{\mu}\rangle }, \frac{ \norm{u-e_i}_1}{\langle u-e_i, \bm{\mu}\rangle} \right\rbrace,
		\end{align*}
		where we used the inequality $\frac{a+b}{c+d} \le \max\{\frac{a}{c}, \frac{b}{d} \}$ for positive numbers (Lemma~\ref{cl:calpure_2}).

		\noindent Combining the previous bounds, we obtain the result.
		
		\noindent If  $\mu_i \ge \langle u, \bm{\mu} \rangle $ or $ \langle u, \bm{\mu} \rangle \ge \langle v, \bm{\mu} \rangle $. We have
		\[
		\max\{ \Xi_i(u); \Xi_u(v) \} = +\infty,
		\]
		which proves the result.
	\end{proof}
	
	For any $i \in \intr{K}\setminus \{i^*\}$, let us define $\Psi_i$ by
	\begin{equation}\label{eq:def_si1}
	\Psi_i := \argmin_{w \in \mathbb{B}_1^+(\intr{K}\setminus \{i\})} \Xi_{i}(w).
	\end{equation}

	\begin{lemma}\label{lem:inter_2}
		Consider Algorithm~\ref{algo:0} with input $\delta\in (0,1)$. If $(\mathcal{B}_2)$ defined in \eqref{eq:eventa1} holds, then for any $i\in \intr{K}\setminus \{i^*\}$, $t\ge 1$:
		
		If $i\in S_t$, then there exists a vector $w^* \in \Psi_i$ such that:  $\text{supp}(w^*) \subseteq C_t$.

	\end{lemma}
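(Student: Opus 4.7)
The plan is to adapt the contradiction argument of Lemma~\ref{lem:inter_1} from single arms to convex combinations, treating each arm there as a degenerate vector $e_j$. Assume no $w \in \Psi_i$ satisfies $\text{supp}(w) \subseteq C_t$. Among the elements of $\Psi_i$, pick $w^* \in \argmax_{w \in \Psi_i} \langle w, \bm{\mu}\rangle$, the natural analogue of the largest-mean arm in $\Upsilon_i$. Since $\text{supp}(w^*) \not\subseteq C_t$, choose $j \in \text{supp}(w^*) \setminus C_t$ whose elimination round $s := s_j$ from $S$ is minimal among such indices. By the algorithm's delayed-removal rule (an arm removed from $S$ at round $r$ stays in $C$ up to round $98r$) combined with the minimality of $s$, any other arm $l \in \text{supp}(w^*)$ is either still in $S_{98s} \subseteq C_{98s}$, or was eliminated at some round $s_l \ge s$ and hence lies in $C_{98s}$; therefore $\text{supp}(w^*) \subseteq C_{98s}$.

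Next, I would chain two applications of Lemma~\ref{lem:ord_1}. At round $s$ the elimination of $j$ provides some $v^* \in \bB_1^+(C_s)$ with $\hat{\Gamma}_j(v^*, s, \delta) > 0$; the rescaling $v^* \mapsto (v^* - v^*_j e_j)/(1 - v^*_j)$ preserves the sign of $\hat{\Gamma}_j$ (a direct computation showing that it multiplies $\hat{\Gamma}_j$ by $1/(1-v^*_j)$), so we may assume $v^*_j = 0$. Lemma~\ref{lem:ord_1} then yields $s \ge K \log(\delta_s^{-1})\,\Xi_j(v^*)$. On the other hand, $i \in S_t \subseteq S_{98s}$ and $w^* \in \bB_1^+(C_{98s}\setminus\{i\})$, so arm $i$ survived the test at round $98s$: $\hat{\Gamma}_i(w^*, 98s, \delta) \le 0$, whence $98s \le 36K \log(\delta_{98s}^{-1})\,\Xi_i(w^*)$. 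Chaining the two and absorbing the bounded ratio $\log(\delta_{98s}^{-1})/\log(\delta_s^{-1})$ (exactly as in Lemma~\ref{lem:inter_1}, where the slack $98 > 36$ here plays the analogue of $82 > 50$ there), one obtains $\Xi_j(v^*) \le \Xi_i(w^*)$.

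The contradiction is then reached by substitution. Form the probability vector $w^{**} := w^* - w^*_j e_j + w^*_j v^*$ and write $X_i - \langle w^{**}, \bm{X}\rangle = (X_i - \langle w^*, \bm{X}\rangle) + w^*_j(X_j - \langle v^*, \bm{X}\rangle)$ and $\langle w^{**}, \bm{\mu}\rangle - \mu_i = (\langle w^*, \bm{\mu}\rangle - \mu_i) + w^*_j(\langle v^*, \bm{\mu}\rangle - \mu_j)$, with the analogous splitting of $\|w^{**} - e_i\|_1$. Minkowski's inequality for standard deviations together with the elementary bound $\tfrac{a+b}{c+d} \le \max\{a/c, b/d\}$ (as used in Lemma~\ref{lem:ultram2}) deliver the substitution estimate $\Xi_i(w^{**}) \le \max\{\Xi_i(w^*), \Xi_j(v^*)\} = \Xi_i(w^*)$. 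After a final projection onto $\bB_1^+(\intr{K}\setminus\{i\})$, namely $w^{**} \mapsto (w^{**} - w^{**}_i e_i)/(1 - w^{**}_i)$ (which preserves $\Xi_i$ by the same rescaling computation, and strictly increases $\langle \cdot, \bm{\mu}\rangle$ since $\mu_i < \langle w^*, \bm{\mu}\rangle$; the degenerate case $w^{**} = e_i$ is excluded because it would simultaneously force $\mu_j > \mu_i$ from $w^* = e_j \in \Psi_i$ and $\mu_i > \mu_j$ from $v^* = e_i$ eliminating $j$ via Lemma~\ref{lem:elim_1}), the resulting vector lies in $\Psi_i$ with strictly larger mean than $w^*$, contradicting the maximality of $\langle w^*, \bm{\mu}\rangle$. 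The main obstacle is the substitution inequality above, a strict strengthening of Lemma~\ref{lem:ultram2} to plugging a convex combination into a single coordinate of another; the remaining technicalities (attainment of the max-mean element of $\Psi_i$ and the WLOG reduction $v^*_j = 0$) are routine by comparison.
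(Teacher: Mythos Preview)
Your proposal is correct and follows essentially the paper's argument: pick $w^*\in\Psi_i$ of maximal mean, take $j$ to be the first-eliminated coordinate of its support, chain the two bounds of Lemma~\ref{lem:ord_1} at rounds $s$ and $98s$ to obtain $\Xi_j(v^*)\le\Xi_i(w^*)$, then substitute $v^*$ into coordinate $j$ of $w^*$ to produce a member of $\Psi_i$ with strictly larger mean. One simplification you are missing: your ``substitution inequality'' is not a strengthening of Lemma~\ref{lem:ultram2} but an immediate consequence of it, since $w^{**}-w^*=w^*_j(v^*-e_j)$ gives the scaling identity $\Xi_{w^*}(w^{**})=\Xi_j(v^*)$, after which Lemma~\ref{lem:ultram2} with $u=w^*$, $v=w^{**}$ yields $\Xi_i(w^{**})\le\max\{\Xi_i(w^*),\Xi_{w^*}(w^{**})\}=\max\{\Xi_i(w^*),\Xi_j(v^*)\}$ directly (and this also renders your WLOG reduction $v^*_j=0$ unnecessary); your final projection to remove the $i$-coordinate is a valid precaution that the paper glosses over.
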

	\begin{proof}
		Let $t \ge 1$, $i \in \intr{K}\setminus \{i^*\}$. We take $w^*$ to be one of the vectors from the set $\Psi_{i}$, such that its support was jointly queried the most up to round $t$. 
		More formally: 
		\[
		w^* \in \argmax_{w \in \Psi_i} \{\langle w, \bm{\mu}\rangle\}.
		\]  
		
		\noindent Proceeding by proof via contradiction, we suppose that $ \text{supp}(w^*) \not\subset C_t$. Then, we will build a vector $w' \in \mathbb{B}_1^+$, such that $\langle w^*, \bm{\mu} \rangle < \langle w', \bm{\mu} \rangle$, the contradiction follows from the definition of $w^*$. Let $j$ be the first eliminated element in $\text{supp}(w^*)$. Let $s$ denote the round where $j$ has failed the $\Gamma$-test (i.e. $\exists \tilde{w} \in \mathbb{B}_1^+, \hat{\Gamma}_{j}(\tilde{w}, s, \delta) >0$).
		
		\noindent Let us define $w' \in \mathbb{R}^K$ as follows: $w'_j = w^*_j \tilde{w}_j$ and for $k \in \intr{K}\setminus \{j\}$, $w'_k = w^*_{k} + w^*_{j} \tilde{w}_k$. Recall that
		\begin{align*}
		\norm {w'}_1 &= \sum_{k \in \intr{K} \setminus \{j\}} w^*_{k} + \sum_{k \in \intr{K}} w^*_{j} \tilde{w}_k\\
		&= 1 - w^*_{j} + w^*_{j} \norm{\tilde{w}}_1\\
		&= 1,
		\end{align*}
		where we used the fact that $\norm{\tilde{w}}_1=1$. We conclude that $w' \in \mathbb{B}_1^+$.
		
		Let us show that $w' \in \Psi_{i}$.
		We have
		\begin{align}
		\langle w^* - w', \bm{\mu} \rangle &= w^*_{j}(1-\tilde{w}_j) \mu_j + \sum_{k \in \intr{K} \setminus \{j\}} (w^*_{k} - w^*_{k} - w^*_{j} \tilde{w}_k)\mu_k \notag\\
		&= w^*_{j} \mu_j -w^*_{j} \tilde{w}_j \mu_j - w^*_{j}\sum_{k \in \intr{K} \setminus \{j\}}\tilde{w}_k u_k \notag\\
		&= w^*_{j} (\mu_j - \langle \tilde{w}, \bm{\mu} \rangle).\label{eq:diff}
		\end{align}
		Moreover
		\begin{align*}
			\norm{w^* - w'}_1 &= \sum_{k =1}^{K} \lvert w^*_k-w'_k \rvert\\
			&= w^*_j \lvert 1-\tilde{w}_j \rvert +w^*_j \sum_{k \in \intr{K}\setminus \{j\}}\tilde{w}_k\\
			&= w^*_j \norm{\tilde{w}-e_j}_1.   
		\end{align*}
		
		\noindent Using \eqref{eq:diff} we have
		\begin{align*}
		\Xi_{w^*}(w') &= \max\left\lbrace \frac{\Var\left( \langle w^*-w', \bm{X} \rangle \right)}{(\langle w^* - w', \bm{\mu} \rangle )^2}; \frac{ \norm{w^*-w'}_1}{\langle w' - w^*, \bm{\mu} \rangle} \right\rbrace\\
		&= \max\left\lbrace\frac{\Var\left( w_{j} (X_j - \langle \tilde{w}, \bm{X} \rangle) \right)}{w_{j}^2 (\mu_j - \langle \tilde{w}, \bm{u} \rangle)^2}; \frac{\norm{\tilde{w}-e_j}_1}{\langle \tilde{w}, \bm{\mu} \rangle -\mu_j }\right\rbrace\\
		&= \Xi_j(\tilde{w}).
		\end{align*}
		Therefore, using Lemma~\ref{lem:ultram2}
		\begin{align}\label{eq:ineq_3}
		\Xi_{i}(w') &\le \max\left\lbrace \Xi_{i}(w^*); \Xi_{w^*}(w') \right\rbrace \notag \\
		&= \max\left\lbrace \Xi_{i}(w^*); \Xi_j(\tilde{w}) \right\rbrace.
		\end{align}
		
		
		\noindent Recall that $\hat{\Gamma}_{j}(\tilde{w},s, \delta) >0$. Hence using Lemma~\ref{lem:ord_1}, we have
		\begin{equation}\label{eq:ineq_4}
		K \log(2K^2\delta_s^{-1})\Xi_{j}(\tilde{w}) \le s.
		\end{equation}
		
		\noindent Moreover, since $j$ failed the $\Gamma$-test at round $s$, we have by construction of Algorithm~\ref{algo:0} $j \in C_{98s}$. Recall that $j$ is the first element of the support of $w^*$ that was eliminated, then we necessarily have $\text{supp}(w^*) \subset C_{98s}$. Since we assumed that $\text{supp}(w^*) \not\subset C_{t}$, we have $98s <t$, hence $i \in C_{98s}$ and $\hat{\Gamma}_{i}(w^*,98s, \delta) \le 0$. Using Lemma~\ref{lem:ord_1}
		\begin{equation}\label{eq:ineq_5}
		98s \le 36 K \log(2K^2\delta_{98s}^{-1})~\Xi_i(w^*).
		\end{equation}
		Combining inequalities \eqref{eq:ineq_4} and \eqref{eq:ineq_5}, we have
		\[
		98K\log(2K^2\delta_{s}^{-1})\Xi_j(\tilde{w}) < 36K\log(2K^2\delta_{98s}^{-1}) \Xi_i(w^*).
		\]
		Therefore
		\begin{align*}
		\Xi_j(\tilde{w}) &\le \frac{36}{98} \frac{\log(2K^2\delta_{98s}^{-1})}{\log(2K^2\delta_{s}^{-1})} \Xi_i(w^*)\\
		&\le \frac{36}{98} \left( 1+ \frac{\log(98)}{\log(\delta_{s}^{-1})}\right) \Xi_i(w^*)\\
		&\le \Xi_i(w^*).
		\end{align*}
		Combining the bound above with \eqref{eq:ineq_3}, we conclude that $\Xi_i(w') \le \Xi_i(w^*)$. Hence $w' \in \Psi_i$.
		
		Finally, recall that by \eqref{eq:diff}  $\langle w', \bm{\mu} \rangle > \langle w^*, \bm{\mu} \rangle$ (since $\tilde{w}$ eliminated $j$: $\langle \tilde{w}, \mu_j \rangle -\mu_j >0$). The conclusion follows from $w' \in \Psi_i$ and the definition of $w^*$.
	\end{proof}

	\begin{lemma}\label{lem:ni1}
		Consider Algorithm~\ref{algo:0} with input $\delta \in (0,1)$. If $(\mathcal{B}_2)$ defined in \eqref{eq:eventa1} holds, then we have for each $i\in \intr{K}$, $t\ge 1$:
		\[
		N_{i,t} \le  10540 \log\left(1296K^2\Xi_i(w^*)\delta^{-1}\right) K\Xi_i(w^*).
		\] 
	\end{lemma}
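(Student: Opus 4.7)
The plan is to mirror the argument of Lemma~\ref{lem:ni0}, replacing pairwise tests by the convex-combination tests and invoking Lemma~\ref{lem:inter_2} in place of Lemma~\ref{lem:inter_1}. Fix $i\in\intr{K}\setminus\{i^*\}$ and let $u$ be the last round for which $i\in S_u$. By the construction of Algorithm~\ref{algo:0} (line~\ref{line:algo2}), any arm eliminated at round $u$ is kept in $C$ only up to round $98u$, so arm $i$ is not queried after round $98u$, which gives
\[
N_{i,t} \;\le\; 98\,u
\]
for every $t\ge 1$. The task is therefore to upper bound $u$ in terms of $\Xi_i(w^*)$, where $w^*\in\Psi_i$ is the minimizer used in the statement.

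Next I would condition on the concentration event $(\mathcal{B}_2)$ and apply Lemma~\ref{lem:inter_2} at round $u$: since $i\in S_u$, there exists $w^\star\in\Psi_i$ whose support is contained in $C_u$, so in particular $w^\star\in\bB_1^+(C_u\setminus\{i\})$. The fact that $i$ has not yet been eliminated at round $u-1$ means the $\Gamma$-test failed for every admissible $w$, and in particular
\[
\hat{\Gamma}_i(w^\star, u-1, \delta) \;\le\; 0.
\]
Lemma~\ref{lem:ord_1} then yields the key inequality
\[
u-1 \;\le\; 36\,K\,\log(\delta_{u-1}^{-1})\,\Xi_i(w^\star) \;=\; 36\,K\,\log(\delta_{u-1}^{-1})\,\Xi_i(w^*),
\]
using that $w^\star$ and $w^*$ attain the same value $\Xi_i(w^*)$ (both belong to $\Psi_i$).

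Combining the two bounds gives $N_{i,t}\le 98\,u \le 1 + 98\cdot 36\,K\,\log(\delta_{u-1}^{-1})\,\Xi_i(w^*)$, which is of the form $x\le a\log(c^{-1}x)$ with $a=98\cdot 36\,K\,\Xi_i(w^*)$ and $c$ of order $\delta/K^2$. I would finish by applying Lemma~\ref{cl:calpure2} (the same auxiliary inversion lemma used in the proof of Lemma~\ref{lem:ni0}) to turn the implicit dependence on $u$ inside the logarithm into an explicit $\log(K^2\,\Xi_i(w^*)/\delta)$ factor, which absorbs the numerical constants into the final bound $10540\log(1296\,K^2\,\Xi_i(w^*)\,\delta^{-1})\,K\,\Xi_i(w^*)$. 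The only real obstacle is bookkeeping: tracking the $98$ vs.\ $82$ change, the extra factor $K$ introduced by the covering argument used in Lemma~\ref{lem:conc2}, and the inversion step that produces the final logarithmic factor; the structure of the argument is otherwise identical to that of Lemma~\ref{lem:ni0}.
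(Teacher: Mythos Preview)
Your proposal is correct and follows essentially the same route as the paper: define $u$ as the last round with $i\in S_u$, invoke Lemma~\ref{lem:inter_2} to obtain $w^\star\in\Psi_i$ with support in $C_u$, use $\hat{\Gamma}_i(w^\star,u-1,\delta)\le 0$ together with Lemma~\ref{lem:ord_1} to bound $u$, then combine with $N_{i,t}\le 98u$ and invert the logarithm via Lemma~\ref{cl:calpure2}. The only differences are cosmetic (exact numerical constants and the paper's proof also has minor typos in the constants and lemma references), so the structure is identical.
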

	\begin{proof}
		Suppose $(\mathcal{A}_2)$ holds. Let $i \in \intr{K}\setminus \{i^*\}$ and $t\ge 1$. Let $u$ denote the last round such that $i \in S_u$. Lemma~\ref{lem:inter_2} states that there exists $w^* \in \Psi_i$ such that $\text{supp}(w^*) \subset C_u$, where $\Psi_i$ is defined in \eqref{eq:def_si2}. Since $i \in S_u$, we necessarily have:
		\[
		\hat{\Gamma}_{i}(w^*,u-1, \delta) \le 0.
		\]
		Using Lemma~\ref{lem:ord_2}, we have
		\[
		u-1 \le 108K\log(2K^2\delta_{u-1}^{-1}) \Xi_i(w^*).
		\]
		Recall that $u$ is the last round such that $i\in S_u$, therefore $i \notin C_{98u+1}$. Hence, for any $t\ge 1$
		\begin{align*}
		N_{i,t} = 98u &\le 10540K\log(2K^2\delta_{u}^{-1}) \Xi_i(w^*)\\
		&\le 10540 \log\left(1296K^2\Xi_i(w^*)\delta^{-1}\right) K\Xi_i(w^*),
		\end{align*}
		where we used Lemma~\ref{cl:calpure2} with $x=u$ and $c= \delta/(2K^2)$.
	\end{proof}
	
	The conclusion for the proof of Theorem~\ref{thm:1} is similar to the conclusion of the proof of Theorem~\ref{thm:2}.

	\section{Proof of Theorem~\ref{thm:lower_bound_b}}
	
	Without loss of generality assume that $ \mu_1>\mu_2 \ge \dots \ge \mu_K$. For any Bernoulli variables $(X_i)_i$ with means $(\mu_i)_i$. Fix a sequence of positive numbers $(V_{i1})_{i \ge 2}$ such that for each $i \ge 2$:
	\begin{equation}\label{eq:constraint}
	(\mu_1-\mu_i) - (\mu_1 - \mu_i)^2 \le \Var(X_1 - X_i) \le \min\left(2-(\mu_1+\mu_i);~\mu_1+\mu_i \right)-(\mu_1-\mu_i)^2.
	\end{equation}
	Below we build $K$ Bernoulli variables $(X_i)_{i \in \intr{K}}$ such that $\mathbb{E}[X_i] = \mu_i$, $\Var(X_i - X_1) \le V_{i1}$ and $1/4 \le \mu_i \le 3/4$, for all $i \in \intr{K}$. Recall that the set of Bernoulli variables satifying the last constraints is denoted $\mathbb{B}_K(\bm{\mu}, \bm{V})$.
	\paragraph{Building a distribution in $\mathbb{B}_K(\bm{\mu}, \bm{V})$:} 
	
	Let $(U_t)_{t\ge 1}$ and $(W_{i,t})_{i \in \intr{K}, t\ge 1}$ denote sequences of independent variables following the uniform distribution on the interval $[0,1]$. Let $(a_i,b_i)_{i\in \intr{K}}$ denote a sequence of numbers in $[0,1]$ to be specified later. We define the arms variables $(X_i)_{i \in \intr{K}}$ as follows:
	
	\begin{itemize}
		\item $X_{1,t} = \mathds{1}\left(U_t \le \mu_1\right)$ for each $t\ge 1$.
		\item For $i\ge 2$, $t\ge 1$. We consider two cases:
		\begin{itemize}
			\item If $V_{i1} \le \mu_1+\mu_i - \mu_1^2 - \mu_i^2$: Let $X_{i,t} = \mathds{1}\left(\left\lbrace U_t \le a_i \right\rbrace~ \text{or}~\left\lbrace W_{i,t} \le b_i \right\rbrace\right)$.
			\item If $V_{i1} \ge \mu_1+\mu_i - \mu_1^2 - \mu_i^2$: Let $X_{i,t} = \mathds{1}\left( W_{i,t} \le \mu_i \right)$.
		\end{itemize} 
	\end{itemize}
	Now let us specify our choice for the sequences $(a_i)$ and $(b_i)$.
	\paragraph{If $V_{i1} \le \mu_1+\mu_i - \mu_1^2 - \mu_i^2$:}
	
	The first constraint is with respect to the means of $(X_i)$. We need to have for each $i \in \intr{K}$: $\mathbb{E}[X_i] = \mu_i$, this implies the following for each $i \ge 2$:
	\begin{align*}
	\mathbb{E}[X_i] &= \mathbb{E}[\mathds{1}\left(\left\lbrace U_t \le a_i \right\rbrace~ \text{or}~\left\lbrace W_{i,t} \le b_i \right\rbrace\right)]\\
	&= a_i + b_i -a_ib_i.
	\end{align*}
	We therefore have
	\begin{equation}\label{eq:const_1}
	a_i + b_i - a_ib_i = \mu_i
	\end{equation}
	The second constraint is with tespect to the variance of the variable $(X_1-X_i)$, we set $\Var(X_1-X_i) = V_{i1}$. This implies the following
	\begin{align*}
	\Var(X_1-X_i) &= \mathbb{E}\left[\left(\mathds{1}\left(U_t \le \mu_1\right) -\mathds{1}\left( \left\lbrace U_t \le a_i \right\rbrace~ \text{or}~\left\lbrace W_{i,t} \le b_i \right\rbrace \right)\right)^2\right] - (\mu_1 - \mu_i)^2\\
	&= \mu_1 + \mu_i -2 \left(a_i + (\mu_1-a_i) b_i\right)- (\mu_1 - \mu_i)^2.
	\end{align*}
	We therefore have that $a_i$ and $b_i$ satisfy
	\begin{equation}\label{eq:const_2}
	\mu_1 + \mu_i -2 \left(a_i + (\mu_1-a_i) b_i\right)- (\mu_1 - \mu_i)^2 = V_{i1}.
	\end{equation}
	Solving in $a_i$ and $b_i$ for the system:  \eqref{eq:const_1} and \eqref{eq:const_2} is:
	\begin{align*}
	b_i &= \frac{V_{i1} -\left( (\mu_1 - \mu_i)-(\mu_1 - \mu_i)^2\right) }{2(1-\mu_1)}\\
	a_i &= \frac{\mu_i - b_i}{1-b_i}
	\end{align*}
	Observe that when $V_{i1} \le \mu_1+\mu_i - \mu_1^2 - \mu_i^2$, we have :
	\begin{align*}
	b_i &= \frac{V_{i1} - (\mu_1-\mu_i) + (\mu_1-\mu_i)^2}{2(1-\mu_1)}\\
	&\le \frac{2\mu_i - 2\mu_i\mu_1}{2(1-\mu_1)} \\
	&= \mu_i.
	\end{align*}
	Therefore $b_i \le \mu_i$. Moreover, using Lemma~\ref{lem:ber}, we have $b_i \ge 0$. We conclude that $b_i \in [0,\mu_i]$, which implies that $a_i \in [0,1]$.

	\paragraph{If $V_{i1} \ge \mu_1+\mu_i - \mu_1^2 - \mu_i^2$:} 
	Let us prove in case that we have $\Var(X_1 - X_i) \le V_{i1}$. Recall that $X_i$ and $X_1$ are independent. Therefore:
	\begin{align*}
	\Var(X_1-X_i) &= \mu_1+\mu_i -2\mu_1\mu_i-(\mu_1-\mu_i)^2\\
	&\le V_{i1} +\mu_1^2+\mu_i^2-2\mu_i\mu_1 - (\mu_1-\mu_i)^2\\
	&= V_{i1}.
	\end{align*}
	As a conclusion we have in both cases: the distribution belongs to $\mathbb{B}_{K}(\bm{\mu}, \bm{\sigma})$.

	\paragraph{Developing the lower bound for $T_i$ in the case $V_{i1} \le \mu_1+\mu_i - \mu_1^2 - \mu_i^2$:}
	Given a joint distribtion of Bernoulli variables in $\mathbb{B}_K(\bm{\mu}, \bm{V})$, let us develop the corresponding lower bound.
	
	Let $\mathcal{A}$ be a $\delta$ sound strategy. 
	Denote $\mathbb{P}^{(1)}$ the joint distribution of $X_i$ defined above. For $i \in \{2, \dots, K\}$, denote by $\mathbb{P}^{(i)}$ the alternative probability distribution where only arm $i$ is modified as follows:
	
	\[
	X_{i,t} = \mathds{1}\left(\left\lbrace U_t \le a_i \right\rbrace~\text{or}~\left\lbrace W'_{i,t} \le \frac{\mu_1+\epsilon-\mu_i}{1-a_i} + b_i \right\rbrace\right),
	\]
	where $\epsilon>0$ and $(W'_{i,t})_{t\ge 1}$ is a sequence of independent variables following the uniform distribution in $[0,1]$. Observe that $\mathbb{E}^{(i)}[X_i] = \mu_1+\epsilon$, therefore under $\mathbb{P}^{(i)}$ the optimal arm is arm $i$.

	Fix $i \ge 2$, since $\mathcal{A}$ is $\delta$-sound, 
	\begin{equation}\label{eq:delta_sound2}
	\mathbb{P}^{(1)} (\psi \neq 1) + \mathbb{P}^{(i)}(\psi \neq i) \le 2\delta.
	\end{equation}

	\noindent Using Theorem 2.2 of \cite{tsybakov2003optimal}, we have
	\[
	\mathbb{P}^{(1)}(\psi \neq 1) + \mathbb{P}^{(i)}(\psi = 1) \ge 1 - \text{TV}\left(\mathbb{P}^{(1)}, \mathbb{P}^{(i)} \right),
	\]
	where $\text{TV}(\mathbb{P}, \mathbb{Q})$ denotes the total variation distance between $\mathbb{P}$ and $\mathbb{Q}$. Using \eqref{eq:delta_sound}, we conclude that
	\begin{equation}\label{eq:bound_tv1}
	\text{TV}(\mathbb{P}^{(1)}, \mathbb{P}^{(i)}) \ge 1-2\delta. 
	\end{equation}
	\noindent For a subset $A \subset \intr{K}$, denote $N_A$ the total number of rounds where the jointly queried arms are the elements of $A$, and let $\mathbb{P}_A^{(.)}$ denote the joint distribution of $(X_j)_{j \in A}$ under $\mathbb{P}^{(.)}$. Under Protocol~\ref{algo:gp}, the learner has to choose a subset $C_t \subset \intr{K}$ in each round $t$, and observes only the rewards of arms in $C_t$. Hence, we can apply Lemma~\ref{lem:tvkl} which bounds the total variation distance in terms of Kullback-Leibler discrepancy to this case where arms correspond to subsets of $\intr{K}$. This gives
	\begin{align}
	\text{TV}\left(\mathbb{P}^{(1)}, \mathbb{P}^{(i)}\right) &\le 1 - \frac{1}{2} \exp\left\lbrace -\sum_{A \subset \intr{K}} \mathbb{E}^{(1)}[N_A] \text{KL}\left(\mathbb{P}_A^{(1)}, \mathbb{P}_A^{(i)}\right) \right\rbrace \nonumber\\
	&= 1 - \frac{1}{2} \exp\left\lbrace -\sum_{\substack{A \subset \intr{K}:\\ i \in A}} \mathbb{E}^{(1)}[N_A] \text{KL}\left(\mathbb{P}_A^{(1)}, \mathbb{P}_A^{(i)}\right) \right\rbrace \label{eq:tv_tsy2}.
	\end{align}
	Now fix $A \subset \intr{K}$, such that $i \in A$. Let us calculate $\text{KL}(\mathbb{P}_A^{(1)}, \mathbb{P}_A^{(i)})$. 
	
	Denote $\mathbb{Q}_A^{(i)}$ for $i \in \intr{K}$, the joint distribution: $((X_{j,t})_{j \in A},U_t)$. Using the data processing inequality
	\begin{equation*}
	\text{KL}\left(\mathbb{P}_A^{(1)}, \mathbb{P}_A^{(i)}\right) \le \text{KL}\left(\mathbb{Q}_{A}^{(1)}, \mathbb{Q}_{A}^{(i)}\right).
	\end{equation*}
	Observe that $X_1$ follows the same distribution under $\mathbb{P}^{(1)}$ and $\mathbb{P}^{(i)}$. Therefore
	\begin{equation*}
	\text{KL}\left(\mathbb{P}_A^{(1)}, \mathbb{P}_A^{(i)}\right) \le \mathbb{E}^{(1)}_{U_t}\left[ \text{KL}\left(\mathbb{P}_A^{(1)}\left((X_{j})_{j \in A} \mid U_t\right), \mathbb{P}_A^{(i)}\left((X_{j})_{j \in A} \mid U_t\right)\right) \right],
	\end{equation*}
	where $\mathbb{E}_{U_t}^{(.)}$ denotes the conditional expectation under $\mathbb{P}^{(.)}$ with respect to $U_t$.

	\noindent Observe that conditionally to $U_t$, the variables $(X_j)_{j \neq 1}$ are independent, both under $\mathbb{P}_{A}^{(1)}$ and $\mathbb{P}_{A}^{(1)}$. Moreover $X_j$ for $j \in A \setminus \{1,i\}$ have the same probability distributions under both alternatives. Therefore
	
	\begin{align*}
	\text{KL}\left(\mathbb{P}_A^{(1)}, \mathbb{P}_A^{(i)}\right) &\le \mathbb{E}^{(1)}_{U_{t}} \left[\text{KL}\left(\mathbb{P}_A^{(1)}\left((X_{j})_{j\in A} \mid U_{t}\right), \mathbb{P}_A^{(i)}\left((X_{j})_{j\in A} \mid U_{t}\right) \right) \right]\\
	&= \mathbb{E}^{(1)}_{U_{t}} \left[\text{KL}\left(\mathbb{P}_A^{(1)}\left(X_{i} \mid U_{t}\right), \mathbb{P}_A^{(i)}\left(X_{i} \mid U_{t}\right) \right) \right].
	\end{align*}
	
	Therefore, we have:
	\begin{align*}
	\text{KL}\left(\mathbb{P}_A^{(1)}, \mathbb{P}_A^{(i)}\right) &\le (1-a_i) \text{KL}\left( \mathds{1}\left(W_{i,t} \le b_i \right), \mathds{1}\left(W'_{i,t} \le \frac{\mu_1+\epsilon-\mu_i}{1-a_i} + b_i\right)\right).
	\end{align*}
	Using Lemma~\ref{lem:ber2},  we have
	\begin{align*}
	\text{KL}\left(\mathbb{P}_A^{(1)}, \mathbb{P}_A^{(i)}\right) &\le (1-a_i) \frac{\left(\frac{\mu_1+\epsilon-\mu_i}{1-a_i}\right)^2}{\left(\frac{\mu_1+\epsilon-\mu_i}{1-a_i} + b_i \right) \left(1-\frac{\mu_1+\epsilon-\mu_i}{1-a_i} - b_i\right)}\\
	&= \frac{(1-a_i) (\mu_1+\epsilon-\mu_i)^2}{(\mu_1+\epsilon - \mu_i + b_i -a_ib_i)(1-(\mu_1 + \epsilon-\mu_i) -a_i-b_i+a_ib_i)}. 
	\end{align*}
	Using $a_i+b_i-a_ib_i = \mu_i$ and $a_i = \frac{\mu_i-b_i}{1-b_i}$, we have
	\begin{align*}
	\text{KL}\left(\mathbb{P}_A^{(1)}, \mathbb{P}_A^{(i)}\right) &\le \frac{(1-\mu_i)(\mu_1+\epsilon -\mu_i)^2}{\left((\mu_1+\epsilon)(1-b_i)-\mu_i +b_i\right)(1-\mu_1-\epsilon)}\\
	&= \frac{(1-\mu_i)(\mu_1+\epsilon -\mu_i)^2}{\left(\mu_1+\epsilon-\mu_i + b_i(1-\mu_1-\epsilon)\right)(1-\mu_1-\epsilon)} 
	\end{align*}

	Next, we plug the inequality above into inequality \eqref{eq:tv_tsy2} and obtain:
	\begin{equation*}
	\text{TV}\left(\mathbb{P}^{(1)}, \mathbb{P}^{(i)}\right) \le 1 - \frac{1}{2} \exp\left\lbrace - \text{KL}\left(\mathbb{P}_A^{(1)}, \mathbb{P}_A^{(i)}\right) \sum_{\substack{A \subset \intr{K}:\\ i \in A}} \mathbb{E}^{(1)}[N_A] \right\rbrace.
	\end{equation*}
	For $j \in \intr{K}$, denote by $T_j$ the total number of rounds where arm $j$ was queried: 
	\[
	T_j := \sum_{\substack{A \subset \intr{K}:\\ j \in A}} N_A.
	\]
	Therefore
	
	\begin{equation*}
	\text{TV}\left(\mathbb{P}^{(1)}, \mathbb{P}^{(i)}\right) \le 1 - \frac{1}{2} \exp\left\lbrace - \text{KL}\left(\mathbb{P}_A^{(1)}, \mathbb{P}_A^{(i)}\right) \mathbb{E}^{(1)}[T_i] \right\rbrace.
	\end{equation*}
	Combining the inequality above with \eqref{eq:bound_tv1} and taking $\epsilon \to 0$, we obtain
	\begin{equation*}
	\mathbb{E}^{(1)}\left[T_i\right] \ge \frac{\left(\mu_1+\epsilon-\mu_i + b_i(1-\mu_1-\epsilon)\right)(1-\mu_1-\epsilon)}{(1-\mu_i)(\mu_1+\epsilon-\mu_i)^2} \log\left(\frac{1}{4\delta}\right).
	\end{equation*}
	Taking $\epsilon \to 0$, using the expression of $b_i$ and $\mu_i,\mu_1 \le 3/4$ yields
	\begin{align}
	\mathbb{E}^{(1)}\left[T_i\right] &\ge \frac{(\mu_1-\mu_i) +  V_{i1}  + (\mu_1-\mu_i)^2}{2(\mu_1-\mu_i)^2}\cdot \frac{1-\mu_1}{1-\mu_i} \log\left(\frac{1}{4\delta}\right) \notag\\
	&\ge \frac{V_{i1} + (\mu_1-\mu_i)}{8(\mu_1 - \mu_i)^2}\log\left(\frac{1}{4\delta}\right).\label{eq:ti1}
	\end{align}

	\paragraph{Developing a lower bound for $T_i$ in the case $V_{i1} \ge \mu_1+\mu_i - \mu_1^2 - \mu_i^2$.} In this case we introduce the following alternative distribution $\mathbb{P}^{(i)}$, where we only change the variable $X_i$ into:
	$\forall t \ge 1: X_{i,t} = \mathds{1}\left( W_{i,t} \le \mu_1+\epsilon \right)$, for some small positive constant $\epsilon$. 
	
	Similarly to the previous case, since all variables are independent: for any $A \subset \intr{K}$:
	
	Therefore
	\begin{align*}
	\text{KL}\left(\mathbb{P}_A^{(1)}, \mathbb{P}_A^{(i)}\right) &= \text{KL}\left(\mathds{1}(W_{i,t} \le \mu_i), \mathds{1}( W_{i,t} \le \mu_1+\epsilon)\right)\\
	&\le \frac{(\mu_1+\epsilon-\mu_i)^2}{(\mu_1+\epsilon)(1-\mu_1-\epsilon)}\\	
	\end{align*}
	The remainder of the calculation is similar to the preivous case and leads to (using $\mu_i \in [1/3, 3/4]$):
	\begin{align}
	\mathbb{E}^{(1)}[T_i] &\ge \frac{\mu_1(1-\mu_1)}{(\mu_1 - \mu_i)^2}\log\left(\frac{1}{4\delta}\right)\notag\\
	&\ge \frac{3}{16(\mu_1 - \mu_i)^2}\log\left(\frac{1}{4\delta}\right) \notag\\
	&\ge \frac{3(V_{i1}+\mu_1-\mu_i)}{16(\mu_1 - \mu_i)^2}\log\left(\frac{1}{4\delta}\right)\label{eq:ti2},
	\end{align}
	where we used in the last line the fact that: $V_{i1}+\mu_1-\mu_i\leq 5/4$ 
	
	\paragraph{Conclusion}
	Using \eqref{eq:ti1} and \eqref{eq:ti2}, we have for any $i\in \{2, \dots, K\}$:
	\begin{equation*}
	\mathbb{E}^{(1)}[T_i] \ge \frac{V_{i1}+(\mu_1-\mu_i)}{8(\mu_1-\mu_i)^2} \log\left(\frac{1}{4\delta}\right).
	\end{equation*}
	Recall that $T_i$ represents the number of rounds where arm $i$ is queried. Hence the total number of queries $N$ satisfies: $N \ge \sum_{i=2}^{K} T_i$. We conclude that 
	\begin{equation*}
	\mathbb{E}^{(1)}\left[N \right] \ge \log\left(\frac{1}{4\delta}\right) \sum_{i=2}^{K} \frac{V_{i1}+(\mu_1-\mu_i)}{8(\mu_1-\mu_i)^2}. 
	\end{equation*}

	\section{Proof of Theorem~\ref{thm:lower_bound_g}}
	
	Without loss of generality assume that $\mu_1 > \mu_2 \ge \dots \ge \mu_K$, hence $i^* = 1$. Let $\mathcal{A}$ be a $\delta$ sound strategy. Let $\mathbb{P}^{(1)}$ denote the joint distribution of the arms defined as follows: Let $Z$ denote a random variable distributed following $\mathcal{N}(\mu_1, 1)$.
	\begin{itemize}
		\item For $i \in \{2, \dots, K\}$, we set $X_i = Z + Y_i$, where $Y_i$ is a random variable independent of $Z$ and $(Y_i)_{i \ge 2}$, and distributed following $\mathcal{N}(\mu_i - \mu_1, V_{i1})$.
		\item The optimal arm is given by $X_1 = Z$, let $\sigma_1 := 1$.  
	\end{itemize}
	Recall that in the configuration above we have for each $i \in \intr{K}$: $\mathbb{E}[X_i] = \mu_i$, $\mathbb{E}[(X_1 - X_i)^2] = V_{i1}$, and $\text{Var}(X_i) \ge 1$, therefore $\mathbb{P}^{(0)} \in \mathbb{G}_K(\bm{\mu}, \bm{V})$. 
	
	For $i \in \{2, \dots, K\}$, denote by $\mathbb{P}^{(i)}$ the alternative probability distribution where the only arm modified is arm $i$. We set: $X_i = Z+Y'$, where $Y'\sim \mathcal{N}(\epsilon, V_{i1}+(\mu_1-\mu_i)^2)$, where $\epsilon >0$.
	
	Observe that, under $\mathbb{P}^{(i)}$, the arm $i$ is optimal. Fix $i \ge 2$. Since $\mathcal{A}$ is $\delta$-sound, we have 
	\begin{equation}\label{eq:delta_sound}
	\mathbb{P}^{(1)} (\psi \neq 1) + \mathbb{P}^{(i)}(\psi \neq i) \le 2\delta.
	\end{equation}

	\noindent Using Theorem 2.2 of \cite{tsybakov2003optimal}, we have
	\[
	\mathbb{P}^{(1)}(\psi \neq 1) + \mathbb{P}^{(i)}(\psi = 1) \ge 1 - \text{TV}\left(\mathbb{P}^{(1)}, \mathbb{P}^{(i)} \right),
	\]
	where $\text{TV}(\mathbb{P}, \mathbb{Q})$ denotes the total variation distance between $\mathbb{P}$ and $\mathbb{Q}$. Using \eqref{eq:delta_sound}, we deduce that
	\begin{equation}\label{eq:bound_tv}
	\text{TV}(\mathbb{P}^{(1)}, \mathbb{P}^{(i)}) \ge 1-2\delta. 
	\end{equation}
	
	\noindent For a subset $A \subset \intr{K}$, denote $N_A$ the total number of rounds where the jointly queried arms are the elements of $A$, and let $\mathbb{P}_A^{(.)}$ denote the joint distribution of $(X_j)_{j \in A}$ under $\mathbb{P}^{(.)}$. Under Protocol~\ref{algo:gp}, the learner has to choose a subset $C_t \subset \intr{K}$ in each round $t$, and observes only the rewards of arms in $C_t$. Hence, we can apply Lemma~\ref{lem:tvkl} which bounds the total variation distance in terms of Kullback-Leibler discrepancy to this case where arms correspond to subsets of $\intr{K}$. This gives
	
	\begin{align}
	\text{TV}\left(\mathbb{P}^{(1)}, \mathbb{P}^{(i)}\right) &\le 1 - \frac{1}{2} \exp\left\lbrace -\sum_{A \subset \intr{K}} \mathbb{E}^{(1)}[N_A] \text{KL}\left(\mathbb{P}_A^{(1)}, \mathbb{P}_A^{(i)}\right) \right\rbrace \nonumber\\
	&= 1 - \frac{1}{2} \exp\left\lbrace -\sum_{\substack{A \subset \intr{K}:\\ i \in A}} \mathbb{E}^{(1)}[N_A] \text{KL}\left(\mathbb{P}_A^{(1)}, \mathbb{P}_A^{(i)}\right) \right\rbrace \label{eq:tv_tsy}.
	\end{align}
	Now fix $A \subset \intr{K}$, such that $i \in A$. Let us calculate $\text{KL}(\mathbb{P}_A^{(1)}, \mathbb{P}_A^{(i)})$. 
	
	Using the data processing inequality, we deduce that, for any $A \subset \intr{K}$,
	\begin{equation*}
	\text{KL}\left(\mathbb{P}_A^{(1)}, \mathbb{P}_A^{(i)}\right) \le \text{KL}\left(\mathbb{P}_{A\cup \{1\}}^{(1)}, \mathbb{P}_{A\cup \{1\}}^{(i)}\right).
	\end{equation*}
	Observe that $X_1$ follows the same distribution under $\mathbb{P}^{(1)}$ and $\mathbb{P}^{(i)}$. Therefore
	\begin{equation*}
	\text{KL}\left(\mathbb{P}_A^{(1)}, \mathbb{P}_A^{(i)}\right) \le \mathbb{E}^{(1)}_{X_1}\left[ \text{KL}\left(\mathbb{P}_A^{(1)}\left((X_{j})_{j \in A} \mid X_1\right), \mathbb{P}_A^{(i)}\left((X_{j})_{j \in A} \mid X_1\right)\right) \right], 
	\end{equation*}
	where $\mathbb{E}_{X_1}[.]$ refers to the conditional expectation with respect to $X_1$. Recall that conditionally to $X_1$, the variables $(X_j)_{j \neq 1}$ are independent, both under $\mathbb{P}_{A}^{(1)}$ and $\mathbb{P}_{A}^{(1)}$. Moreover $X_j$ for $j \in A \setminus \{1,i\}$ have the same probability distributions under both alternatives. Therefore, we can the above Kullback divergence in terms of conditional Kullback divergence.
	\begin{align*}
	\text{KL}\left(\mathbb{P}_A^{(1)}, \mathbb{P}_A^{(i)}\right) &\le \mathbb{E}^{(1)}_{X_{1}} \left[\text{KL}\left(\mathbb{P}_A^{(1)}\left((X_{j})_{j\in A} \mid X_{1}\right), \mathbb{P}_A^{(i)}\left((X_{j})_{j\in A} \mid X_{1}\right) \right) \right]\\
	&= \mathbb{E}^{(1)}_{X_{1}} \left[\text{KL}\left(\mathbb{P}_A^{(1)}\left(X_{i} \mid X_{1}\right), \mathbb{P}_A^{(i)}\left(X_{i} \mid X_{1}\right) \right) \right]\ .
	\end{align*}
	Moreover, we have
	\begin{align*}
	\mathbb{E}_A^{(1)}\left[X_i \mid X_{1}\right] &= \mu_i + X_{1} - \mu_{1}\\
	\mathbb{E}_A^{(i)}\left[X_i \mid X_{1}\right] &= \mu_1 + \epsilon + X_{1} - \mu_{1}\\
	\Var_A^{(1)}\left( X_i \mid X_{1} \right) &= 1+V_{i1} \\ 
	\Var_A^{(i)}\left( X_i \mid X_{1} \right) &= 1+V_{i1}+(\mu_1-\mu_i)^2\ .
	\end{align*}
	We deduce that
	\begin{align*}
	\text{KL}\left(\mathbb{P}_A^{(1)}, \mathbb{P}_A^{(i)}\right) &\le \frac{1}{2}\log\left(\frac{1+V_{i1}+(\mu_1-\mu_i)^2}{1+V_{i1}}\right) + \frac{1+V_{i1}+(\mu_i - \mu_1-\epsilon)^2}{2(1+V_{i1})} - \frac{1}{2}\\
	&\le \frac{1}{2}\log\left( \frac{ V_{i1}+(\mu_1-\mu_i)^2}{ V_{i1} } \right) + \frac{(\mu_i - \mu_1-\epsilon)^2-(\mu_i - \mu_1)^2}{2(1+V_{i1}+(\mu_1-\mu_i)^2)}\ .
	\end{align*}
	As a conclusion, we have 
	\begin{equation}\label{eq:kl_01}
	\text{KL}\left(\mathbb{P}_A^{(1)}, \mathbb{P}_A^{(i)}\right) \le \frac{1}{2}\log\left( \frac{ V_{i1}+(\mu_1-\mu_i)^2}{ V_{i1} } \right) + \frac{(\mu_i - \mu_1-\epsilon)^2-(\mu_i - \mu_1)^2}{2(1+V_{i1}+(\mu_1-\mu_i)^2)}\  .
	\end{equation}

	Next, we plug the inequality above into \eqref{eq:tv_tsy} and obtain:
	\begin{equation*}
	\text{TV}\left(\mathbb{P}^{(1)}, \mathbb{P}^{(i)}\right) \le 1 - \frac{1}{2} \exp\left\lbrace - \text{KL}\left(\mathbb{P}_A^{(1)}, \mathbb{P}_A^{(i)}\right) \sum_{\substack{A \subset \intr{K}:\\ i \in A}} \mathbb{E}^{(1)}[N_A] \right\rbrace.
	\end{equation*}
	For $j \in \intr{K}$, denote by $T_j$ the total number of rounds where arm $j$ was queried: 
	\[
	T_j := \sum_{\substack{A \subset \intr{K}:\\ j \in A}} N_A.
	\]
	Therefore
	\begin{equation*}
	\text{TV}\left(\mathbb{P}^{(1)}, \mathbb{P}^{(i)}\right) \le 1 - \frac{1}{2} \exp\left\lbrace - \text{KL}\left(\mathbb{P}_A^{(1)}, \mathbb{P}_A^{(i)}\right) \mathbb{E}^{(1)}[T_i] \right\rbrace.
	\end{equation*}
	Combining the inequality above with \eqref{eq:bound_tv} and taking $\epsilon \to 0$, we obtain
	\begin{equation*}
	\mathbb{E}^{(1)}\left[T_i\right] \ge \frac{2}{\log\left(\frac{V_{i1}+(\mu_1-\mu_i)^2}{V_{i1}} \right)} \log\left(\frac{1}{4\delta}\right).
	\end{equation*}
	Taking $\epsilon \to 0$ and using the definition of $\sigma_i$ for $i \ge 2$, we have
	\begin{equation*}
	\mathbb{E}^{(1)}\left[\sum_{i=2}^{K} T_i \right] \ge 2\log\left(\frac{1}{4\delta}\right) \sum_{i=2}^{K} \frac{1}{\log\left(\frac{V_{i1}+(\mu_1-\mu_i)^2}{V_{i1}} \right)}. 
	\end{equation*}
	Recall that $T_i$ represents the number of rounds where arm $i$ is queried. Hence the total number of queries $N$ satisfies: $N \ge \sum_{i=2}^{K} T_i$. We conclude that 
	\begin{equation*}
	\mathbb{E}^{(1)}\left[N \right] \ge 2\log\left(\frac{1}{4\delta}\right) \sum_{i=2}^{K} \frac{1}{\log\left(1 + \frac{(\mu_1-\mu_i)^2}{V_{i1}}\right)} . 
	\end{equation*}

	\section{Some technical results}
	
	We state below a version of the empirical Bernstein's inequality presented in \cite{audibert2007tuning}.
	
	\begin{theorem}\label{conc:bern}
		Let $X_1, \dots, X_t$ be i.i.d random variables taking their values in $[0,b]$. Let $\mu = \mathbb{E}[X_1]$ be their common expected value. Consider the empirical expectation $\bar{X}_t$ and variance $V_t$ defined respectively by
		\[
		\bar{X}_t = \frac{\sum_{i=1}^{t}X_i}{t} \quad \text{ and } \quad V_t = \frac{\sum_{i=1}^{t}(X_i - \bar{X}_t)^2}{t}.
		\]
		Then for any $t \in \mathbb{N}$ and $x>0$, with probability at least $1-3e^{-x}$
		\[
		\lvert \bar{X}_t - \mu \rvert \le \sqrt{\frac{2V_tx}{t}} + \frac{3bx}{t}.
		\]
	\end{theorem}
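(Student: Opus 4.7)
The plan is to combine two standard concentration facts via a union bound, following the strategy of Audibert, Munos and Szepesvári.

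First I would invoke Bennett's inequality (equivalently Bernstein's inequality) applied to the bounded i.i.d.\ sequence $X_1,\dots,X_t \in [0,b]$ with true variance $\sigma^2 := \Var(X_1)$. This yields that, with probability at least $1-2e^{-x}$ (accounting for the two-sided event),
\[
\abs{\bar{X}_t - \mu} \le \sqrt{\tfrac{2\sigma^2 x}{t}} + \tfrac{bx}{3t}.
\]
This is the classical (non-empirical) Bernstein bound, which is exactly of the desired form except that $\sigma^2$ appears in place of the observable $V_t$.

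Next I would control the deviation of the empirical standard deviation from its true counterpart. View $f(x_1,\dots,x_t) := \sqrt{V_t(x_1,\dots,x_t)}$ as a function of the sample: since each $x_i \in [0,b]$, a direct computation shows that altering any single coordinate changes $f$ by at most $b/\sqrt{t}$. McDiarmid's bounded-differences inequality, applied one-sided, then gives with probability at least $1 - e^{-x}$ the bound $\sigma \le \sqrt{V_t} + b\sqrt{x/(2t)}$, equivalently $\sigma^2 \le V_t + b\sqrt{2V_t x/t} + b^2 x/(2t)$.

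Finally I would take the intersection of the three events (two for Bennett, one for the variance), which holds with probability at least $1 - 3e^{-x}$. On this event, substituting the bound on $\sigma^2$ into $\sqrt{2\sigma^2 x/t}$ and using $\sqrt{a+b+c} \le \sqrt{a}+\sqrt{b}+\sqrt{c}$ produces a main term $\sqrt{2V_t x/t}$ plus two residual terms that, after applying an AM--GM inequality of the form $2\sqrt{uv} \le u + v$ to absorb the cross term, are each of order $bx/t$. Summing them together with the original $bx/(3t)$ from Bennett and bounding the total by $3bx/t$ yields the stated inequality.

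The main obstacle is the last bookkeeping step: ensuring that the residuals from substituting $\sigma^2 \le V_t + O(b\sqrt{V_t x/t}) + O(b^2 x/t)$ back into $\sqrt{2\sigma^2 x/t}$ collapse into the single clean term $3bx/t$, and not a larger multiple. A cruder application of the AM--GM trick immediately gives a constant larger than $3$, so some care is needed in splitting the cross term with the right weights. All remaining steps are standard concentration arguments.
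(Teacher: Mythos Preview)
The paper does not prove this statement; it is quoted from \cite{audibert2007tuning} as a known result. Your overall architecture --- Bennett's inequality for $\bar X_t$ in terms of the true variance $\sigma^2$, then a concentration bound replacing $\sigma$ by $\sqrt{V_t}$, then a union bound over three events --- is exactly the route taken in the original references, so in that sense your plan matches.

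However, the second step as you wrote it does not work. You correctly observe that $f(x_1,\dots,x_t)=\sqrt{V_t}$ has bounded differences $c_i = b/\sqrt{t}$, but then McDiarmid's inequality reads
\[
\mathbb{P}\big(\mathbb{E}[\sqrt{V_t}] - \sqrt{V_t} \ge u\big) \;\le\; \exp\!\Big(-\tfrac{2u^2}{\sum_{i=1}^t c_i^2}\Big),
\]
and here $\sum_i c_i^2 = t\cdot (b/\sqrt{t})^2 = b^2$, which is a constant independent of $t$. Solving for the $e^{-x}$ level gives a deviation $u = b\sqrt{x/2}$, \emph{not} $b\sqrt{x/(2t)}$ as you claim. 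The extra $1/\sqrt{t}$ you need is simply not there, and without it the substitution into $\sqrt{2\sigma^2 x/t}$ produces a residual of order $b\sqrt{x}/\sqrt{t}\cdot \sqrt{x} \sim bx/\sqrt{t}$ at best, which does not collapse to $bx/t$. (There is a secondary issue: McDiarmid concentrates $\sqrt{V_t}$ around $\mathbb{E}[\sqrt{V_t}]$, and Jensen only gives $\mathbb{E}[\sqrt{V_t}]\le\sigma$, the wrong direction.)

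The fix is to replace the McDiarmid step by a self-bounding/entropy-method argument that exploits the specific structure of the sample variance. This is precisely Theorem~\ref{thm:concvar} in the paper (from \cite{maurer2009empirical}), which gives
\[
\sqrt{\mathbb{E} V_t} \;\le\; \sqrt{V_t} + \sqrt{\tfrac{2x}{t-1}}
\]
with probability at least $1-e^{-x}$, delivering both the correct $1/\sqrt{t}$ rate and concentration around $\sqrt{\mathbb{E}V_t}\approx \sigma$ rather than $\mathbb{E}[\sqrt{V_t}]$. With this in place of your McDiarmid step, the rest of your outline (Bennett, union bound, substitute, AM--GM to absorb the cross term) goes through essentially as you describe.
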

	
	Theorem below corresponds to Theorem 10 in \cite{maurer2009empirical}.
	
	\begin{theorem}\label{thm:concvar}
		Let $n\ge 2$ and $\bm{X}=(X_1, \dots, X_n)$ be a vector of independent random variables with values in $[0,1]$. Then for $\delta>0$ we have, writing $\mathbb{E}V_n$ for $\mathbb{E}_{\bm{X}}V_n(\bm{X})$,
		\begin{align*}
		\mathbb{P}\left\lbrace \sqrt{\mathbb{E}V_n} > \sqrt{V_n(\bm{X})} + \sqrt{\frac{2\log(1/\delta)}{n-1}} \right\rbrace &\le \delta\\
		\mathbb{P}\left\lbrace \sqrt{V_n(\bm{X})} > \sqrt{\mathbb{E}V_n} + \sqrt{\frac{2\log(1/\delta)}{n-1}} \right\rbrace &\le \delta.
		\end{align*}
		
	\end{theorem}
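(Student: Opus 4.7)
The plan is to prove the concentration inequality for $\sqrt{V_n(\bm{X})}$ by viewing it as a Lipschitz function of $n$ independent bounded coordinates and applying a bounded-differences / entropy-method argument (this is essentially the route of Maurer and Pontil). Set $Z := \sqrt{V_n(\bm{X})}$ and let $\bm{X}^{(i)}$ denote the vector obtained from $\bm{X}$ by replacing $X_i$ by an independent copy $X_i'$. The skeleton is: (i) establish a uniform bounded-differences estimate $|Z(\bm{X})-Z(\bm{X}^{(i)})|\le \sqrt{2/(n-1)}$; (ii) convert that into a sub-Gaussian deviation inequality for $Z$ around $\mathbb{E}[Z]$; (iii) remove the centering at $\mathbb{E}[Z]$ in favour of centering at $\sqrt{\mathbb{E}[V_n]}$ using Jensen's inequality.

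For step (i), I would use the elementary inequality $|\sqrt{a}-\sqrt{b}|\le \sqrt{|a-b|}$, so it suffices to bound $|V_n(\bm{X})-V_n(\bm{X}^{(i)})|$. Writing $V_n$ as the U-statistic $\frac{1}{n(n-1)}\sum_{u<v}(X_u-X_v)^2$, only the $n-1$ pairs involving index $i$ contribute, each summand is in $[0,1]$, and a careful computation shows that the resulting difference is at most $2/(n-1)\cdot 1/n$ in a sense that yields $|Z(\bm{X})-Z(\bm{X}^{(i)})|\le \sqrt{2/(n-1)}$. The mildly delicate point is that the naive McDiarmid bound $c_i^2 = 2/n$ would give the suboptimal rate $1/n$ instead of $1/(n-1)$; obtaining the sharper $(n-1)^{-1}$ requires exploiting the self-bounding structure of $V_n$, namely $(Z-Z^{(i)})_+^2 \le \tfrac{2}{n-1}$ pointwise together with $\sum_i (Z-Z^{(i)})_+^2 \lesssim Z^2$, which plugs into the Efron–Stein inequality.

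For step (ii), I would plug this bounded-differences estimate into the Herbst/entropy method: the log-Sobolev-type subadditivity of entropy for product measures yields $\log \mathbb{E}\exp(\lambda(Z-\mathbb{E}Z))\le \lambda^2/(n-1)$ for every $\lambda\in\mathbb{R}$, which by Chernoff's inequality produces the one-sided deviation bound $\mathbb{P}(Z \ge \mathbb{E}Z + t) \le \exp(-t^2(n-1)/2)$ and symmetrically for $-Z$. Setting $t = \sqrt{2\log(1/\delta)/(n-1)}$ yields the stated probability $\delta$.

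Finally, for step (iii), Jensen's inequality gives $\mathbb{E}[\sqrt{V_n(\bm{X})}] \le \sqrt{\mathbb{E} V_n}$, so $\sqrt{V_n(\bm{X})} - \sqrt{\mathbb{E} V_n} \le Z-\mathbb{E}Z$, and the upper-tail inequality follows. For the lower-tail inequality, one uses in the opposite direction $\mathbb{E}[Z] \ge \sqrt{\mathbb{E}V_n} - \sqrt{2/(n-1)}$ (obtained from $\mathrm{Var}(Z)\le 2/(n-1)$, itself a consequence of Efron–Stein applied to step (i)), absorbing that small correction into the deviation term. I expect step (i) to be the main technical obstacle, specifically verifying the self-bounding property with the tight constant; the remaining steps are standard applications of the entropy method.
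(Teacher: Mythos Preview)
The paper does not prove this result; it is quoted verbatim from Maurer--Pontil (2009), so there is no ``paper's proof'' to compare against beyond that citation. Your outline is indeed the Maurer--Pontil route (self-bounding / entropy method applied to $Z=\sqrt{V_n}$), and steps (i)--(ii) are essentially correct: the key observation is that $\sqrt{V_n(\bm{X})}=\tfrac{1}{\sqrt{n-1}}\|P\bm{X}\|_2$ with $P$ the projection orthogonal to $\mathbf{1}$, so $Z$ is $1/\sqrt{n-1}$-Lipschitz in the Euclidean norm, which feeds into the entropy method to yield sub-Gaussian concentration of $Z$ around $\mathbb{E}Z$ with variance proxy $1/(n-1)$.

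There is, however, a genuine gap in your step (iii) for the \emph{lower} tail (the first displayed inequality). Jensen gives $\mathbb{E}Z\le \sqrt{\mathbb{E}V_n}$, which is the wrong direction here: the event $\{\sqrt{\mathbb{E}V_n}>Z+t\}$ is \emph{larger} than $\{\mathbb{E}Z>Z+t\}$, so you cannot pass from concentration around $\mathbb{E}Z$ to the stated bound by inclusion. Your proposed fix, $\mathbb{E}Z\ge \sqrt{\mathbb{E}V_n}-\sqrt{2/(n-1)}$ via Efron--Stein, would at best yield
\[
\mathbb{P}\bigl(\sqrt{\mathbb{E}V_n}>Z+t\bigr)\le \mathbb{P}\bigl(\mathbb{E}Z-Z>t-\sqrt{2/(n-1)}\bigr),
\]
and setting the right-hand side equal to $\delta$ forces $t=\sqrt{2\log(1/\delta)/(n-1)}+\sqrt{2/(n-1)}$, strictly worse than the theorem. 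This extra $\sqrt{2/(n-1)}$ cannot be ``absorbed''; it is a real loss. The way Maurer--Pontil close this is to argue the lower tail via a self-bounding inequality for $V_n$ itself (not $\sqrt{V_n}$): one shows $\mathbb{P}(\mathbb{E}V_n-V_n>s)\le\exp\bigl(-(n-1)s^2/(2\mathbb{E}V_n)\bigr)$, and then on the event $\{\sqrt{V_n}<\sqrt{\mathbb{E}V_n}-t\}$ (nonempty only if $t\le\sqrt{\mathbb{E}V_n}$) one has $\mathbb{E}V_n-V_n\ge 2t\sqrt{\mathbb{E}V_n}-t^2\ge t\sqrt{\mathbb{E}V_n}$, which plugged into the previous display gives exactly $\exp(-(n-1)t^2/2)$. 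Your upper-tail argument via Jensen is fine as written.
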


	The following lemma is technical, it will be used in the proof of Lemma~\ref{lem:ultram}.
	
	\begin{lemma}\label{cl:calpure_2}
		Let $a,b,c \text{ and } d >0$, we have
		\[
		\frac{a+b}{c+d} \le \max\left\lbrace \frac{a}{c}, \frac{b}{d} \right\rbrace.
		\]
	\end{lemma}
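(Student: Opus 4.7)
The statement is the classical mediant inequality, and the proof is essentially a one-line cross-multiplication argument. My plan is to reduce to showing $(a+b)/(c+d) \le b/d$ under the assumption that $b/d$ is the larger of the two ratios, and then verify this by clearing denominators.

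More precisely, I would proceed as follows. Without loss of generality, assume $a/c \le b/d$, so that $\max\{a/c, b/d\} = b/d$; the other case is symmetric (or equivalently, swap the roles of $(a,c)$ and $(b,d)$). Since $c, d > 0$, the inequality $a/c \le b/d$ is equivalent to $ad \le bc$.

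Next I would observe that the desired inequality $(a+b)/(c+d) \le b/d$ is, after multiplying by the positive quantity $d(c+d)$, equivalent to $d(a+b) \le b(c+d)$, which simplifies to $ad \le bc$. Since this last inequality was established in the previous step, the claim follows. The symmetric case ($b/d \le a/c$) follows by the same reasoning after interchanging the pairs, giving $(a+b)/(c+d) \le a/c$ instead.

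There is no real obstacle here: the only things to be careful with are the strict positivity of $c$ and $d$ (so that cross-multiplication preserves inequalities) and handling both cases of which ratio is the maximum. The argument does not even require $a, b > 0$; nonnegativity would suffice, but the hypothesis of the lemma gives strict positivity.
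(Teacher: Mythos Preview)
Your proof is correct. The paper, however, takes a slightly different route: it sets $\rho = c/(c+d) \in (0,1)$ and observes the identity
\[
\frac{a+b}{c+d} = \rho\,\frac{a}{c} + (1-\rho)\,\frac{b}{d},
\]
so the mediant is a genuine convex combination of the two ratios and is therefore bounded above by their maximum. Compared to your cross-multiplication argument, the paper's approach avoids the WLOG case split and, as a bonus, simultaneously yields the companion lower bound $\min\{a/c,\,b/d\} \le (a+b)/(c+d)$. Your approach has the advantage of being the most direct algebraic verification and makes explicit that only positivity of $c$ and $d$ is really needed. Both are entirely elementary and of comparable length.
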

	\begin{proof}
		Let $\rho = \frac{c}{c+d} \in (0,1)$. Observe that
		\[
		\frac{a+b}{c+d} = \rho \frac{a}{c} + (1-\rho) \frac{b}{d},
		\]
		and $1-\rho = \frac{d}{c+d} \in (0,1)$. Taking the maximum of the convex combination above gives the result. 
	\end{proof}

	\begin{lemma}\label{cl:calpure2}
		Let $x \ge 1, c \in (0,1)$ and $y >0$ such that:
		\begin{equation}
		\label{eq:hypineq}
		\frac{\log(x/c)}{x} > y.
		\end{equation}
		Then:
		\begin{equation*}
		x < \frac{ 2\log\left( \frac{1}{cy} \right) }{ y}.
		\end{equation*}
	\end{lemma}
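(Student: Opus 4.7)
The plan is to reduce the statement to a one-variable convexity/log inequality by the change of variables $u = xy$. Under this substitution, the hypothesis $\log(x/c)/x > y$ rewrites as $\log(x) - \log(c) > u$, and the target inequality $x < (2/y)\log(1/(cy))$ rewrites as
\[
u + 2\log(y) + 2\log(c) < 0.
\]
So the whole statement is reduced to showing that the hypothesis forces the displayed inequality above.

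The bridge between the two will be the elementary inequality
\[
u \; > \; 2\log(u) \qquad \text{for all } u > 0,
\]
which follows at once from $e^{u/2} > u$ (check by differentiating $f(u) = e^{u/2} - u$, whose unique minimum at $u = 2\log 2$ is positive). With this in hand, multiply the hypothesis by $2$ to get $2\log(x) > 2u + 2\log(c)$. Then
\[
u + 2\log(y) + 2\log(c) \; < \; u + 2\log(y) + 2\log(x) - 2u \; = \; -u + 2\log(u),
\]
and the right-hand side is negative by the elementary inequality. Undoing the substitution gives $x = u/y < (2/y)\log(1/(cy))$, as claimed. (Incidentally, the argument also shows $cy < 1$ as a byproduct, so the right-hand side of the conclusion is indeed positive.)

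There is no real obstacle here: the lemma is a purely analytic calculation, and the only thing to spot is the convenient substitution $u = xy$, after which the statement collapses to the universal bound $e^{u/2} > u$. The assumption $x \ge 1$ and $c \in (0,1)$ are not actually used in the argument — they only guarantee that the logarithms appearing in the application are well-signed, so the proof will go through verbatim under the weaker assumption $x, c, y > 0$ with $\log(x/c)/x > y$.
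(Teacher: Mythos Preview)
Your proof is correct and follows essentially the same approach as the paper's: both hinge on the elementary inequality $\log t \le t/2$ (equivalently $e^{u/2} > u$), with the paper applying it to $t = \log(x/c)$ after writing $x < \log(x/c)/y$ and taking a logarithm, while you apply it to $u = xy$ after the substitution. Your remark that the hypotheses $x \ge 1$ and $c \in (0,1)$ are not needed is also correct; the paper's version implicitly uses them only to ensure $\log(x/c) > 0$, but this already follows from the hypothesis $\log(x/c)/x > y > 0$ together with $x > 0$.
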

	
	\begin{proof}
		Inequality~\eqref{eq:hypineq}
		implies
		\[
		x < \frac{\log(x/c)}{y},
		\]
		and further
		\[
		\log(x/c) < \log(1/yc) + \log \log(x/c) \leq \log(1/yc) + \frac{1}{2}\log(x/c),            
		\]
		since it can be easily checked that $\log(t) \leq t/2$ for all $t>0$.
		Solving and plugging back into the previous display leads to the claim.
	\end{proof}
	
	\begin{lemma}[\citealp{kaufmann2016complexity}, with slight modification]\label{lem:tvkl}
		Let $\nu$ and $\nu'$ be two collections of $d$ probability distributions on $\R$, such that for all $a\in \intr{d}$, the distributions $\nu_a$ and $\nu_{a'}$ are mutually absolutely continuous. For any almost-surely finite stopping time $\tau$ with respect to $(\mathcal{F}_t)$,
		\[	
		\sup_{\mathcal{E} \in \mathcal{F}_{\tau}} \lvert \mathbb{P}_{\nu}\left(\mathcal{E}\right) - \mathbb{P}_{\nu'}\left(\mathcal{E}\right) \rvert \le 1-\frac{1}{2} \exp\left\lbrace-\sum_{a=1}^{d} \mathbb{E}_{\nu}\left[N_{a}(\tau)\right] \text{KL}(\nu_a, \nu'_a)\right\rbrace.
		\] 
	\end{lemma}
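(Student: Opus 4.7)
The plan is to follow the classical two-step argument that underlies the original formulation in~\cite{kaufmann2016complexity}. First, I would compute the Kullback--Leibler divergence between the restrictions $\mathbb{P}_\nu|_{\mathcal{F}_\tau}$ and $\mathbb{P}_{\nu'}|_{\mathcal{F}_\tau}$, and express it in terms of the expected numbers of queries made to each index $a \in \intr{d}$. Second, I would invoke the Bretagnolle--Huber inequality, which asserts that for any two probability measures $P$ and $Q$ on a common measurable space one has $\text{TV}(P,Q) \leq 1 - \tfrac{1}{2}\exp(-\text{KL}(P,Q))$, and use it to convert the KL control on the restricted laws into the claimed supremum bound over events $\mathcal{E} \in \mathcal{F}_\tau$.

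For the first step, the key observation is that at each round $t$ the queried index $C_t$ is $\mathcal{F}_{t-1}$-measurable (possibly up to external randomization independent of the underlying model), so its conditional distribution given $\mathcal{F}_{t-1}$ is identical under $\mathbb{P}_\nu$ and under $\mathbb{P}_{\nu'}$. Decomposing the likelihood ratio along the history of actions and observations, the action factors cancel and only the observation densities contribute:
\[
\log \frac{d\mathbb{P}_\nu|_{\mathcal{F}_n}}{d\mathbb{P}_{\nu'}|_{\mathcal{F}_n}} \;=\; \sum_{t=1}^{n} \log \frac{d\nu_{C_t}}{d\nu'_{C_t}}(Y_t),
\]
where $Y_t$ denotes the observation collected at round $t$. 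Taking expectation under $\mathbb{P}_\nu$ and conditioning on $\mathcal{F}_{t-1}$ turns each summand into $\text{KL}(\nu_{C_t}, \nu'_{C_t})$; regrouping the sum by arm index gives $\sum_{a=1}^{d} \mathbb{E}_\nu[N_a(n)]\, \text{KL}(\nu_a, \nu'_a)$. An application of the optional stopping theorem, justified by almost-sure finiteness of $\tau$ combined with a truncation at $\tau \wedge n$ and monotone convergence on the nonnegative quantities $N_a(\cdot)$, extends the identity to the stopping time $\tau$.

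For the second step, I would apply Bretagnolle--Huber to $P := \mathbb{P}_\nu|_{\mathcal{F}_\tau}$ and $Q := \mathbb{P}_{\nu'}|_{\mathcal{F}_\tau}$. For any $\mathcal{E} \in \mathcal{F}_\tau$, the equalities $\mathbb{P}_\nu(\mathcal{E}) = P(\mathcal{E})$ and $\mathbb{P}_{\nu'}(\mathcal{E}) = Q(\mathcal{E})$ yield $|\mathbb{P}_\nu(\mathcal{E}) - \mathbb{P}_{\nu'}(\mathcal{E})| \leq \text{TV}(P, Q)$. Taking the supremum over $\mathcal{E} \in \mathcal{F}_\tau$ and plugging in the KL identity from the first step delivers the claimed inequality.

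The main technical obstacle is the rigorous justification of the Wald-type identity at the random, possibly unbounded, stopping time $\tau$: the identity is routine for deterministic horizons $n$, but passing to the limit as $n \to \infty$ along $\tau \wedge n$ requires care, relying on monotone convergence for $\mathbb{E}_\nu[N_a(\tau \wedge n)] \uparrow \mathbb{E}_\nu[N_a(\tau)]$ and on the mutual absolute continuity assumption to keep all log-likelihood terms well defined. The remaining pieces are algebraic manipulations of the likelihood ratio and a black-box application of the Bretagnolle--Huber inequality.
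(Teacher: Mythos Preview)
The paper does not supply its own proof of this lemma; it is stated as a technical tool and attributed to \cite{kaufmann2016complexity} ``with slight modification,'' with no argument given. Your two-step plan---the divergence decomposition identity $\text{KL}(\mathbb{P}_\nu|_{\mathcal{F}_\tau},\mathbb{P}_{\nu'}|_{\mathcal{F}_\tau}) = \sum_a \mathbb{E}_\nu[N_a(\tau)]\,\text{KL}(\nu_a,\nu'_a)$ followed by Bretagnolle--Huber---is exactly the standard derivation one finds in the cited reference (Lemma~1 there gives the decomposition, and the ``slight modification'' is precisely replacing their binary-kl data-processing conclusion by the TV bound coming from Bretagnolle--Huber). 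Your handling of the passage from deterministic horizon $n$ to the stopping time $\tau$ via $\tau\wedge n$ and monotone convergence is the right level of care; nothing is missing.
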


	\begin{lemma}\label{lem:chi22}\cite{verzelen2012minimax}
		Let $(Y_1, \dots, Y_n)$ be i.i.d Gaussian variables, with mean $0$ and variance $1$. Let $Z = \sum_{i=1}^{n} Y_i^2$. For any number $0<x<1$,
		\begin{align*}
		\mathbb{P}\left(Z \ge n-1 + 2\sqrt{(n-1)\log(1/x)}+2\log(1/x) \right) &\le x,\\
		\mathbb{P}\left( Z \le n-1 - 2 \sqrt{(n-1)\log(1/\delta)}\right) &\le x.
		\end{align*}
		For any positive number $0<x<1$
		\begin{equation*}
		\mathbb{P}\left(Z \le (n-1)Cx^{2/(n-1)}\right) \le x,
		\end{equation*}
		where the constant $C=\exp(-1)$.
	\end{lemma}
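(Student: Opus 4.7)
The plan is to apply the Cram\'er--Chernoff method to the chi-square variable $Z$. The $n-1$ factors throughout the statement suggest that $Z$ should be interpreted as having $n-1$ degrees of freedom (mean $n-1$, MGF $\mathbb{E}[e^{\lambda Z}] = (1-2\lambda)^{-(n-1)/2}$ for $\lambda < 1/2$), which is the natural reading consistent with all three displayed inequalities.

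For the first two bounds (concentration around the mean $n-1$), I would follow the classical Laurent--Massart argument. A Taylor expansion of $\log(1 \mp 2\lambda)$ yields
\[
\log \mathbb{E}\bigl[e^{\lambda(Z-(n-1))}\bigr] \le \frac{(n-1)\lambda^2}{1-2\lambda}, \qquad \log \mathbb{E}\bigl[e^{-\lambda(Z-(n-1))}\bigr] \le (n-1)\lambda^2,
\]
for $\lambda \in (0,1/2)$ and $\lambda > 0$ respectively. Markov's inequality followed by optimization in $\lambda$ then produces the standard sub-gamma bounds $\mathbb{P}(Z \ge (n-1) + 2\sqrt{(n-1)u} + 2u) \le e^{-u}$ and $\mathbb{P}(Z \le (n-1) - 2\sqrt{(n-1)u}) \le e^{-u}$. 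Setting $u = \log(1/x)$ immediately yields the first two inequalities.

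For the third, more delicate bound, I would use the exact Chernoff lower-tail estimate: for $\lambda > 0$, $\mathbb{P}(Z \le t) \le e^{\lambda t}(1+2\lambda)^{-(n-1)/2}$. Minimizing over $\lambda > 0$ in the large-deviation regime $t < n-1$ gives $\lambda^\ast = (n-1-t)/(2t)$ and the Cram\'er rate bound
\[
\log \mathbb{P}(Z \le t) \le \frac{n-1}{2}\Bigl(1 - \frac{t}{n-1} + \log \frac{t}{n-1}\Bigr).
\]
Plugging in $t = (n-1)e^{-1}x^{2/(n-1)}$, the logarithmic term becomes $-1 + \tfrac{2}{n-1}\log x$ and, after cancellation with the constant $+1$, the right-hand side simplifies to $-\tfrac{n-1}{2e}\,x^{2/(n-1)} + \log x \le \log x$, which proves the claim. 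The constant $C = e^{-1}$ emerges naturally: it is exactly the value of $t/(n-1)$ at which the leading constant term in the Cram\'er rate function cancels, leaving only the $\log x$ contribution.

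The main obstacle is the bookkeeping in the third bound, where one must identify the large-deviation regime and verify that the specific threshold $(n-1)e^{-1}x^{2/(n-1)}$ is chosen precisely so that the Cram\'er rate function collapses to $\log x$. The first two inequalities are essentially textbook chi-square concentration and present no real difficulty.
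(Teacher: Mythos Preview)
Your proposal is correct. The paper does not prove this lemma at all; it is merely quoted from an external reference (Verzelen, 2012), so there is no ``paper's own proof'' to compare against. Your argument is the standard one: the first two inequalities are exactly the Laurent--Massart sub-gamma tail bounds for a chi-square variable, and your derivation of the third inequality via the exact Cram\'er lower-tail rate is clean and the arithmetic checks out --- in particular you correctly verify that $t=(n-1)e^{-1}x^{2/(n-1)}<n-1$ so that the optimizer $\lambda^\ast>0$ is admissible, and that the residual term $-\tfrac{n-1}{2e}x^{2/(n-1)}$ is nonpositive. You also correctly flag the off-by-one ambiguity in the statement (the displayed definition $Z=\sum_{i=1}^n Y_i^2$ versus the $n-1$ appearing in every bound); the paper's only use of the lemma is for the sample variance, where the relevant quadratic form has exactly $n-1$ nonzero eigenvalues, so your reading is the intended one.
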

	
	\paragraph{Concentration bound for the Gaussian variance sample}
	Let $\bm{X} = (X_1, \dots, X_n)$ be a vector of independent standard normal variables. Define the sample variance by 
	\begin{equation}\label{eq:def_sample_var}
	V_n(\bm{X}) := \frac{1}{n(n-1)} \sum_{1\le i<j \le n} (X_i - X_j)^2. 
	\end{equation}
	
	Observe that
	\begin{align*}
	V_n(\bm{X}) &= \frac{1}{2n(n-1)} \sum_{i \neq j} (X_i - X_j)^2\\
	&= \frac{1}{n(n-1)} \left((n-1) \sum_{i=1}^{n}X_i^2 - \sum_{i\neq j} X_i X_j\right)\\
	&= \frac{1}{n(n-1)} \bm{X}^{\top} A \bm{X},
	\end{align*}
	where $A$ is the matrix such that off-diagonal entries are equal to $-1$ and diagonal entries are equal to $n-1$. 
	Let us compute the eigenvalue of the matrix $A$: Observe that $A = n \bm{I}_n - \mathds{1}_n\mathds{1}_n^{\top}$, hence the eigenvalues of $A$ are $n$ with multiplicity $n-1$ and $0$. Hence, we have 
	\begin{equation*}
	\bm{X}^{\top} A \bm{X} = n\sum_{i=1}^{n-1}Y_i^2,
	\end{equation*}
	where $(Y_i)$ are independent and follow the standard normal distribution. Finally using Lemma~\ref{lem:chi22} we obtain:
	\begin{lemma}\label{lem:conc_var_g}
		Let $\bm{X} = (X_1, \dots, X_n)$ be a vector of independent standard normal variables. Let $V_n(\bm{X})$ denote the variance sample defined in \eqref{eq:def_sample_var}. Let $\delta \in (0,1/3)$, we have with probability at least $1-3\delta$:
		\begin{align*}
		V_n(\bm{X}) &\ge \max\left\lbrace 1 - 2 \sqrt{\frac{\log(1/\delta)}{n-1}}; C \delta^{2/(n-1)} \right\rbrace\\
		V_n(\bm{X}) &\le 1 + 2 \sqrt{\frac{\log(1/\delta)}{n-1}} + 2 \frac{\log(1/\delta)}{n-1}.
		\end{align*}
	\end{lemma}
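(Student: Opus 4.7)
The work is mostly set up already: the paragraph immediately preceding the lemma rewrites $V_n(\bm{X}) = \tfrac{1}{n(n-1)}\bm{X}^{\top} A \bm{X}$ and, after diagonalizing the matrix $A = n\bm{I}_n - \mathds{1}_n \mathds{1}_n^{\top}$ (whose spectrum is $n$ with multiplicity $n-1$ and $0$ with multiplicity $1$), obtains $\bm{X}^{\top} A \bm{X} = n\sum_{i=1}^{n-1} Y_i^2$ for i.i.d.\ standard normal variables $Y_i$. Consequently,
\[
V_n(\bm{X}) = \frac{1}{n-1}\sum_{i=1}^{n-1} Y_i^2 = \frac{W}{n-1}, \qquad W := \sum_{i=1}^{n-1} Y_i^2 \sim \chi^2_{n-1}.
\]
So the whole task reduces to a standard concentration statement for a chi-squared random variable with $n-1$ degrees of freedom.

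The plan is to invoke Lemma~\ref{lem:chi22} on $W$ (applied with the number of summands equal to $n-1$ in that lemma's notation). Its three inequalities each give, with probability at least $1-\delta$:
\[
W \ge (n-1) - 2\sqrt{(n-1)\log(1/\delta)}, \qquad W \ge (n-1)\, C\, \delta^{2/(n-1)},
\]
and
\[
W \le (n-1) + 2\sqrt{(n-1)\log(1/\delta)} + 2\log(1/\delta).
\]
Dividing through by $n-1$ converts these into exactly the claimed bounds on $V_n(\bm{X}) = W/(n-1)$. A union bound over the three bad events yields the overall probability $1-3\delta$, which matches the statement.

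The only subtlety to verify is the bookkeeping when plugging into Lemma~\ref{lem:chi22}: the lemma is stated for a sum of $n$ squared standard normals with the mean-term written as $n-1$, so applying it to the sum of $n-1$ squares here uses the ``$n-1$'' in the lemma as the quantity $(n-1)$ of the present statement. There is no real analytical obstacle; the Gaussian concentration content of the argument lives entirely in Lemma~\ref{lem:chi22}, and the present lemma is essentially a packaging step recording the chi-squared law of $(n-1)V_n(\bm{X})$.
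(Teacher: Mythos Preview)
Your proposal is correct and follows exactly the paper's own approach: the preceding paragraph already establishes $V_n(\bm{X}) = \frac{1}{n-1}\sum_{i=1}^{n-1} Y_i^2$, and the paper's proof consists solely of invoking Lemma~\ref{lem:chi22} and a union bound, just as you do. The bookkeeping caveat you flag about the index shift in Lemma~\ref{lem:chi22} is the same cosmetic issue present in the paper itself and does not affect the argument.
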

	
	\begin{lemma}\label{lem:ber}
		Let $X$ and $Y$ be two Bernoulli variables with means $x$ and $y$ respectively. We have
		\[
		\max\left(x+y-1, 0\right) \le \mathbb{E}[XY] \le \min(x,y).
		\]
		Moreover:
		\begin{equation*}
		x-y - (x - y)^2 \le \Var(X - Y) \le \min\left(2-(x+y);~x+y \right)-(x-y)^2.
		\end{equation*}
	\end{lemma}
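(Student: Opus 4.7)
The plan is to treat both inequalities as consequences of the basic fact that, for Bernoulli variables, $XY$ is itself $\{0,1\}$-valued, so $\mathbb{E}[XY] = \mathbb{P}(X=1, Y=1)$. First I would establish the bounds on $\mathbb{E}[XY]$, and then derive the bounds on $\Var(X-Y)$ by a short algebraic manipulation that reduces them to the first part.

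For the first inequality, the upper bound is immediate since $\mathbb{P}(X=1, Y=1) \le \min(\mathbb{P}(X=1), \mathbb{P}(Y=1)) = \min(x,y)$. For the lower bound, the nonnegativity is trivial, and the bound $\mathbb{E}[XY] \ge x+y-1$ follows from the inclusion–exclusion principle applied to $\{X=1\}$ and $\{Y=1\}$:
\[
\mathbb{P}(X=1, Y=1) = \mathbb{P}(X=1) + \mathbb{P}(Y=1) - \mathbb{P}(X=1 \text{ or } Y=1) \ge x + y - 1,
\]
(the classical Fréchet–Hoeffding bounds for a two-dimensional Bernoulli coupling).

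For the second inequality, the plan is to expand the variance using $\Var(X) = x(1-x)$, $\Var(Y) = y(1-y)$, and $\mathrm{Cov}(X,Y) = \mathbb{E}[XY] - xy$, and then invoke the algebraic identity $x(1-x) + y(1-y) + 2xy = (x+y) - (x-y)^2$ to obtain
\[
\Var(X-Y) = (x+y) - (x-y)^2 - 2\mathbb{E}[XY].
\]
Substituting the previously established upper bound $\mathbb{E}[XY] \le \min(x,y)$ yields a lower bound of $|x-y| - (x-y)^2$ on $\Var(X-Y)$, which dominates $x - y - (x-y)^2$. Substituting the lower bound $\mathbb{E}[XY] \ge \max(x+y-1, 0)$ yields an upper bound of $\min(2-(x+y), x+y) - (x-y)^2$, matching the stated expression.

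There is no genuine obstacle here; the argument is purely routine. The only minor care-point is to verify the algebraic identity cleanly and to keep track of the sign conventions so that the two cases of the $\min$ on the upper bound correspond to the two cases $x+y \ge 1$ versus $x+y < 1$ in the Fréchet–Hoeffding lower bound on $\mathbb{E}[XY]$.
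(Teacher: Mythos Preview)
Your proposal is correct and essentially identical to the paper's proof: both establish the Fr\'echet--Hoeffding bounds on $\mathbb{E}[XY]$ via inclusion--exclusion on $\{X=1\}$ and $\{Y=1\}$, then derive the variance bounds from the identity $\Var(X-Y) = (x+y) - (x-y)^2 - 2\mathbb{E}[XY]$ and substitution. The only cosmetic difference is that the paper reaches this identity directly from $\mathbb{E}[(X-Y)^2] = \mathbb{E}[X]+\mathbb{E}[Y]-2\mathbb{E}[XY]$ (using $X^2=X$), whereas you route through the variance--covariance decomposition and the algebraic identity $x(1-x)+y(1-y)+2xy=(x+y)-(x-y)^2$; both are the same computation.
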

	\begin{proof}
		Without loss of generality suppose that $x\le y$. We have
		\begin{align*}
		\mathbb{E}\left[XY\right] &= \mathbb{P}(XY=1)\\
		&= \mathbb{P}(X=1 \quad \text{and} \quad Y=1)\\
		&= \mathbb{P}(X=1)+\mathbb{P}(Y=1) - \mathbb{P}(X=1 \quad \text{or} \quad Y=1)\\
		&= x+y - \mathbb{P}(X=1 \quad \text{or} \quad Y=1).
		\end{align*}
		The conclusion follows by using $y \le \mathbb{P}(X=1 \quad \text{or} \quad Y=1) \le 1$, and $XY\ge 0$.
		
		\noindent Moreover, we have:
		\begin{align*}
		\Var(X - Y) &= \mathbb{E}\left[(X - Y)^2\right] - (x - y)^2\\
		&= \mathbb{E}[X] + \mathbb{E}[Y] -2 \mathbb{E}[XY] - (x - y)^2\\
		&= x+y - (x - y)^2 -2\mathbb{E}[XY].
		\end{align*}
		We plug-in the previous bounds on $\mathbb{E}[XY]$ and obtain the result.
	\end{proof}

	\begin{lemma}\label{lem:ber2}
		Let $X$ and $Y$ denote two Bernoulli variables with paramters $x \in (0,1)$ and $y \in (0,1)$ respectively. We have
		\[
		\text{KL}(X,Y) \le \frac{(x-y)^2}{y(1-y)}.
		\]
	\end{lemma}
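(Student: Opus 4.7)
The plan is to bound the Bernoulli KL divergence by the corresponding chi-squared divergence, which in the Bernoulli case is explicit and gives exactly the claimed right-hand side. More precisely, write out the definition
\[
\text{KL}(X,Y) = x \log\frac{x}{y} + (1-x)\log\frac{1-x}{1-y},
\]
and then use the elementary inequality $\log(z) \le z - 1$ valid for all $z>0$.

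Applied to $z = x/y$ and $z = (1-x)/(1-y)$, this gives the term-by-term bounds
\[
x \log\frac{x}{y} \le x \left(\frac{x}{y} - 1\right) = \frac{x(x-y)}{y}, \qquad (1-x)\log\frac{1-x}{1-y} \le (1-x)\left(\frac{1-x}{1-y} - 1\right) = \frac{(1-x)(y-x)}{1-y}.
\]
Summing these two bounds, the cross terms linear in $(x-y)$ can be rearranged as $(x-y)^2/y$ and $(x-y)^2/(1-y)$ (this is the standard identity $\sum p_i(p_i-q_i)/q_i = \sum (p_i-q_i)^2/q_i$ since $\sum (p_i - q_i) = 0$), yielding
\[
\text{KL}(X,Y) \le \frac{(x-y)^2}{y} + \frac{(x-y)^2}{1-y} = (x-y)^2 \cdot \frac{(1-y)+y}{y(1-y)} = \frac{(x-y)^2}{y(1-y)}.
\]

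There is no real obstacle here: the proof is a two-line application of $\log z \le z - 1$ followed by simplification. The only point worth being careful about is to apply the inequality with the right orientation (so that one obtains an upper bound on $\text{KL}$, not a lower one), and to cancel the first-order terms cleanly using $x + (1-x) = 1 = y + (1-y)$.
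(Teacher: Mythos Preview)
Your proof is correct and is essentially the same as the paper's. The paper simply quotes the bound $\text{KL}(X,Y) \le \frac{x^2}{y}+\frac{(1-x)^2}{1-y}-1$ (the chi-squared divergence, which equals $\frac{(x-y)^2}{y(1-y)}$ for Bernoulli variables), and your argument via $\log z \le z-1$ is exactly the standard derivation of that bound.
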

	\begin{proof}
		The proof is a direct consequence of the bound
		\[
		\text{KL}(X,Y) \le \frac{x^2}{y}+\frac{(1-x)^2}{1-y}-1.
		\]
	\end{proof}
\end{document}